\theoremstyle{break}
\newtheorem{theorem}{Theorem}
\newtheorem{proposition}{Proposition}
\newtheorem{lemma}{Lemma}
\newtheorem{definition}{Definition}
\newtheorem{corollary}{Corollary}
\newtheorem{assumption}{Assumption}
\DeclareMathOperator{\Det}{Det}
\DeclareMathOperator{\EX}{\mathbb{E}}
\DeclareMathOperator{\R}{\mathbb{R}}
\DeclareMathOperator{\DPP}{\mathrm{DPP}}
\DeclareMathOperator{\VS}{\mathrm{CVS}}
\DeclareMathOperator{\F}{\mathcal{F}}
\DeclareMathOperator{\X}{\mathcal{X}}
\DeclareMathOperator{\Ltwo}{\mathbb{L}_{2}( \omega)}
\DeclareMathOperator{\Ns}{\mathbb{N}^{*}}
\DeclareMathOperator*{\argmin}{arg\,min}
\def\UN{\:\mathcal{U}_N}
\newcommand{\rev}[1]{\textcolor{black}{ #1}}
\title{Signal reconstruction using determinantal sampling}
\author{
Ayoub Belhadji$^{\dagger}$\footnote{Corresponding author: \href{mailto:ayoub.belhadji@gmail.com}{ayoub.belhadji@gmail.com}}, R\'emi Bardenet$^{\ddagger}$, Pierre Chainais$^{\ddagger}$\\
\small $^{\dagger}$ Univ Lyon, ENS de Lyon, Inria, CNRS, UCBL, LIP UMR 5668, Lyon, France
\\
\small $^{\ddagger}$ Univ. Lille, CNRS, Centrale Lille, UMR 9189 - CRIStAL, 59651 Villeneuve d’Ascq, France \\
}
\begin{document}

\maketitle

\begin{abstract}
We study the approximation of a square-integrable function from a finite number of evaluations on a random set of  nodes according to a well-chosen distribution. 
This is particularly relevant when the function is assumed to belong to a reproducing kernel Hilbert space (RKHS). This work proposes to combine several natural finite-dimensional approximations based on two possible probability distributions of nodes.
These distributions are related to determinantal point processes, and use the kernel of the RKHS to favor RKHS-adapted regularity in the random design.
While previous work on determinantal sampling relied on the RKHS norm, we prove mean-square guarantees in $L^2$ norm.
We show that determinantal point processes and mixtures thereof can yield fast convergence rates. Our results also shed light on how the rate changes as more smoothness is assumed, a phenomenon known as superconvergence. 
Besides, determinantal sampling improves on i.i.d. sampling from the Christoffel function which is standard in the literature.
More importantly, determinantal sampling guarantees the so-called instance optimality property for a smaller number of function evaluations than i.i.d. sampling. 









\end{abstract}

\maketitle

\begin{keywords} Christoffel sampling; instance optimality property; finite-dimensional approximations; determinantal point processes; reproducing kernel Hilbert spaces.
\end{keywords}



\section{Introduction}
The problem of reconstructing a continuous signal from a set of discrete samples is a fundamental question of sampling theory. It has stimulated a considerable literature. 
This problem consists in approximating an unknown function $f$ by a surrogate $\hat{f}$ knowing a discrete set of evaluations of $f$. 
The Whittaker-Shannon-Kotel'nikov (WSK) sampling theorem is arguably the most emblematic result in the field. 
It can be seen as an exact interpolation result from periodic samples for functions that are band-limited in the Fourier domain \citep{Whi28,Sha49,Kot06}. This theorem has been extended to functions that are band-limited with respect to other transforms than Fourier including, for instance, Sturm-Liouville transforms \citep{Kra59,Cam64}, Laguerre transforms \citep{Jer76} and Jacobi transforms \citep{KoWa90}.

Reproducing kernel Hilbert spaces (RKHSs) and sampling problems have a long common history. 
Tracing back to \citep{Aro43,Aro50}, one possible interpretation of the definition of an RKHS $ \mathcal{F}$ of kernel $k$ is that any signal $f\in\mathcal{F}$ is a limit of weighted sums of kernel translates $k(x,\cdot)$.
In a seminal paper, \cite{Yao67} derived sufficient conditions on a configuration  of \emph{nodes} $(x_{i})_{i \in \mathcal{I}}$ to achieve exact reconstruction of any element $f$ of the RKHS $\mathcal{F}$, i.e., to ensure that uniformly, 
\begin{equation}
  \label{eq:exact_reconstruction_kernel_formula}
  f = \sum\limits_{i \in \mathcal{I}} f(x_{i}) k(x_{i},\cdot).
\end{equation}
Yao proved that the spaces of band-limited signals in the Fourier, Bessel, or cosine domains are all RKHSs.
In particular, Yao's result generalizes the WSK theorem.  
Weaker sufficient conditions than Yao's for \eqref{eq:exact_reconstruction_kernel_formula} to hold have been studied, see e.g. \citep{NaWa91}.
%
However the WSK sampling theorem and its extensions to RKHSs remain \emph{asymptotic}: an infinite number of samples is required in order to guarantee the exact reconstruction of the function. 
In real applications, only a finite number of evaluations $f(x_{1}), \dots, f(x_{N})$ at nodes $x_{1}, \dots, x_{N} \in \mathcal{X}$ is available. 
Therefore \emph{non-asymptotic} guarantees on reconstructions from $N$ samples are necessary. They are the purpose of the present work.

Interestingly, the non-asymptotic problem of the reconstruction from a finite number of evaluations arose first historically, as mentioned by \cite{Hig85}. 
Indeed, an early form of sampling theorem may be traced back to an interpolation scheme due to \cite{Pou08}; see \citep{BuSt92} for a historical account. 
Non-asymptotic sampling results have resurged in popularity recently \citep{CoMi17,Bac17,AvKaMuVeZa19}. 
In those works, the nodes $x_{1}, \dots, x_N$ were taken to be 
independent draws from a particular probability distribution. 
The latter is closely related to extensions of the so-called \emph{Christoffel function}, a classical tool in the theory of orthogonal polynomials \citep{Nev86}. 
These configurations of independent particles were shown to yield approximations that satisfy the \emph{instance optimality property} (IOP). IOP provides a desirable multiplicative error bound that guarantees exact recovery on a finite-dimensional subspace, with an almost optimal number of nodes.
On the other hand, alternative designs have been proposed to achieve optimal reconstruction, using boosting \citep{HaNoPe22} or sparsification techniques \citep{KrUl21,DoCo22,DoKrUl23,ChDo23}.











This work presents a general approach for signal reconstruction, based on \emph{non-independent} random nodes. 
The key idea is to use random nodes sampled from mixtures of determinantal point processes (DPP).
DPPs are distributions over configurations of points that encode repulsion in the form of a kernel: connecting the DPP's kernel to the RKHS kernel makes the nodes repel, with a repulsion related to the smoothness of the target function. 
DPPs were introduced in Odile Macchi's 1972 thesis --recently translated and reprinted as \citep{Mac17}-- as models for detection times in fermionic optics. Since then, they have been thoroughly studied in random matrix theory \citep{Joh05}, and have more recently been adopted in machine learning \citep{KuTa12}, spatial statistics \citep{LaMoRu15}, and Monte Carlo methods \citep{BaHa20}.


%
%
The proposed non-asymptotic guarantees for function reconstruction with DPPs are motivated by previous works on numerical integration \citep{BeBaCh19,BeBaCh20}. 
In these works, we measured reconstruction performance using the RKHS norm. Then a rather strong smoothness assumption appeared necessary, ensuring that the function belong to a particular strict subspace of the RKHS. 
Now motivated by applications in signal reconstruction, and to overcome the  smoothness assumption of previous works, an alternative analysis is proposed that replaces the RKHS norm by the $L_2$ norm.
In particular, we study various finite-dimensional approximations of a function living in an RKHS, such as the least-squares approximation with respect to the $L_2$ norm. 
We show that its mean square error converges to zero at a rate that depends on the eigenvalues of the RKHS kernel.
Moreover, we show that the convergence is faster for functions living in a certain low-dimensional subspace of the RKHS. 
This sheds light on the phenomenon of \emph{super-convergence} observed in the literature of kernel-based approximations \citep{Sch18}, and relates to the question of \emph{completeness} of DPPs \citep{Lyo14}. 
Yet, in spite of its remarkable theoretical properties, the least-squares approximation with respect to the $L_2$ norm cannot be evaluated numerically by using a finite number of evaluations of the target function only.
For this reason, we also investigate more practical approximations based on two particular \emph{transforms}, i.e. linear operators from the RKHS to finite-dimensional vector spaces, built through a compilation of quadrature rules. 
In particular, we prove that the aforementioned \emph{instance optimality property} holds for a particular transform-based approximation using DPPs, even with a minimal sampling budget. 
Numerical experiments in dimension one as well as on the hypersphere validate our results experimentally. They show the very good empirical efficiency of the proposed approximations. Even though our bounds guarantee good performance, the numerical performance actually exceeds the expectations, which shows as well that there is still some room for improvement of our theoretical results.

This article is organized as follows. Section~\ref{sec:sampling_rkhs} contains notations and definitions, as well as a brief review of previous work on the reconstruction of RKHS functions based on discrete samples. 
In~\Cref{sec:new_results}, we present our main results. 
In Section~\ref{s:numsims}, we illustrate our results and compare to related work using numerical simulations.
In Sections~\ref{s:discussion} and \ref{s:conclusion}, we discuss a number of open questions and conclude.

\section{Sampling and reconstruction in RKHSs}
\label{sec:sampling_rkhs}

In Section~\ref{sec:notation}, for ease of reference, we introduce some basic notation and assumptions to be used throughout the paper. 
In~\Cref{sec:smoothness_frac}, we define fractional subspaces, which correspond to increasing levels of smoothness within an RKHS.
In~\Cref{sec:non-asymptotic}, we recall the finite-dimensional approximations that we will work with in the rest of the article. To give historical context, \Cref{sec:design_finite_dim} and~\Cref{s:DPPs_th_old} are devoted to related works on the topic. 
The contents of these two sections may be skipped in a first read except for~\Cref{s:DPPs} which is necessary for the statement of the main results presented in~\Cref{sec:new_results}.

\subsection{Notations and assumptions}
\label{sec:notation}

\subsubsection{Kernels and RKHSs}
Let $\mathcal{X}$ be a metric set, equipped with a positive Borel measure $\omega$. 
Let $k : \mathcal{X} \times \mathcal{X} \rightarrow \mathbb{R}$ be a symmetric, positive definite kernel.
Consider the inner product defined on the space $\mathcal F_0$ of finite linear combinations of kernel translates by
\begin{equation}
  \left\langle \sum_{i \in [m]} a_i k(x_i,\cdot), \sum_{j \in [n]} b_j k(y_j, \cdot) \right\rangle_{\mathcal{F}} = \sum_{i \in [m]} \sum_{j \in [n]} a_i b_j k(x_i, y_j),
\end{equation}
where $a_1, b_1, \dots \in \mathbb{R}$ and $x_1, y_1, \dots \in\mathcal{X}$. Here and in the sequel $[N]$ will denote the set of indices ranging from $1$ to $N$.
The completion $\mathcal{F}$ of $\mathcal{F}_0$ for $\langle \cdot,\cdot \rangle_{\mathcal{F}}$ is the so-called reproducing kernel Hilbert space of kernel $k$.
By the Moore-Aronszajn theorem, it is the unique Hilbert space of functions that satisfies the reproducing property, i.e., for all $f\in\mathcal F$ and $x\in\mathcal{X}$, $f(x) = \langle f, k(x,\cdot)\rangle_\mathcal{F}$; see e.g. \citep{BeTh11} for a general reference.

The properties of a function in $\mathcal F$ are often described in terms of the spectral characteristics of the integral operator of kernel $k$, at the price of some additional technicalities. 
Because we shall need a so-called \emph{Mercer decomposition}, we follow here the precise reference paper of \cite{StSc12}.
A convenient way to diagonalize the operator $\bm{\Sigma}$ defined by 
\begin{equation}
  \label{eq:integral_operator}
  \bm{\Sigma}f(x) = \int k(x,y) f(y) \mathrm{d}\omega(y)
\end{equation}
is to make sure it defines a compact operator from $\Ltwo$ to itself.
A practical sufficient condition for this compactness to hold is that $\int_{\mathcal{X}} k(x,x) \mathrm{d}\omega(x)<+\infty$; see \cite[Lemma 2.3]{StSc12}. 
This is in particular implied by our Assumptions~\ref{a:compactness} and \ref{a:compactness_omega}, whose additional strength we shall later require.

\begin{assumption}
  \label{a:compactness}
  The diagonal of the kernel $x\mapsto k(x,x)$ is bounded in $\mathcal{X}$.
\end{assumption}
We note that the positive definiteness of $k$ implies that for all $x,y$, 
$$
  k(x,x)k(y,y) - k(x,y)^2 \geq 0,
$$
so that under Assumption~\ref{a:compactness}, $x,y\mapsto k(x,y)$ is bounded in $\mathcal{X}\times\mathcal{X}$.

\begin{assumption}
  \label{a:compactness_omega}
  The measure $\omega$ is finite, i.e. $\int_{\mathcal{X}} \mathrm{d}\omega(x) <+\infty$.
\end{assumption}

Under Assumptions~\ref{a:compactness} and \ref{a:compactness_omega}, $\bm{\Sigma}$ in \eqref{eq:integral_operator} is a self-adjoint, non-negative, compact operator.
The spectral theorem, see e.g. Chapter 6 in \citep{Bre10}, guarantees the existence of an orthonormal basis $(\widetilde{e}_{m})_{m \in \mathbb{N}^{*}}$ of $\Ltwo$ and a family of non-increasing non-negative scalars $(\sigma_{m})_{m \in \mathbb{N}^{*}}$ such that $\bm{\Sigma} \widetilde{e}_{m} = \sigma_{m}\widetilde{e}_{m}$ for $m \in \mathbb{N}^{*}$. 
We now make extra assumptions to reach a Mercer decomposition.

\begin{assumption}
  \label{a:continuity_and_density}
  The kernel $k$ is continuous, $\omega$ has full support, and \rev{all eigenvalues of $\bm{\Sigma}$ are positive}. 
\end{assumption}
Under Assumptions~\ref{a:compactness} to \ref{a:continuity_and_density}, it is easy to see that $\bm{\Sigma}$ maps to continuous functions, so that $\widetilde{e}_m = {\sigma}_m^{-1} \bm{\Sigma}\widetilde{e}_m$ has a continuous representative $e_m$ in its equivalence class in $\Ltwo$. 
Letting $e_{m}^{\mathcal{F}}:= \sqrt{\sigma_{m}} e_{m}$, the family $(e_{m}^{\mathcal{F}})_{m \in \mathbb{N}^{*}}$ is then an orthonormal basis of $\mathcal{F}$, and $\mathcal{F}$ is isometrically isomorphic to a dense subspace of $\Ltwo$ through 
\begin{equation}
  \label{e:isometry}
  \sum a_m e_m^\mathcal{F} \mapsto \sum \sigma_m^{1/2} a_m \widetilde{e}_m;
\end{equation}
see \citep[Theorem 2.11, Lemma 2.12, and Corollary 3.5]{StSc12}. 
In particular, it is customary to (abusively) write $\mathcal{F} = \bm{\Sigma}^{1/2}\Ltwo$ and, for $f\in\mathcal{F}$, 
\begin{equation}
  \label{eq:def_F_norm}
  \|f\|_{\F}^{2} = \Vert\bm{\Sigma}^{-1/2}f\Vert_\omega^2 = \sum\limits_{m \in \mathbb{N}^{*}} \frac{\langle f,e_{m} \rangle_{\omega}^2}{\sigma_m} <+\infty.
\end{equation}
Finally, by \cite[Corollary 3.5]{StSc12}, for any $x\in\mathcal{X}$, we can write the so-called \emph{Mercer} decomposition,
\begin{equation}\label{eq:mercer_infinite_sum}
  k(x,y) = \langle k(x,\cdot), k(y,\cdot) \rangle_{\mathcal{F}} = \sum\limits_{m \in \mathbb{N}^{*}} e_{m}^{\F}(x)e_{m}^{\F}(y) = \sum\limits_{m \in \mathbb{N}^{*}} \sigma_{m} e_{m}(x)e_{m}(y),
\end{equation}  
where the convergence is uniform in $y\in\mathcal{X}$.
Assumptions~\ref{a:compactness}, \ref{a:compactness_omega}, and \ref{a:continuity_and_density} hold for the rest of this article.

\subsubsection{Different kernels}
\label{s:different_kernels}
\rev{We shall need a handful of kernels related to $k$ via a modification of its Mercer decomposition.
For $\nu \in \{1,2\}$, let the kernel $k_{\nu}$ be defined by}
\rev{\begin{equation} k_{\nu}(x,y) = \sum\limits_{m \in \mathbb{N}^{*}} \sigma_{m}^{\nu} e_{m}(x)e_{m}(y),\end{equation}
where for each $x\in\mathcal{X}$, the convergence is uniform in $y\in\mathcal{X}$.
The corresponding Gram matrix is denoted by $\bm{K}_{\nu}(\bm{x}) := (k_{\nu}(x_i,x_j))_{i,j \in [N]}$. 
Similarly, for $\nu \in \{0,1,2\}$ and 
$T\subset \mathbb{N}^*$ a finite subset of indices, we consider the truncated kernel $k_{\nu,T}$ defined by}
\rev{\begin{equation} 
  \label{e:different_truncated_kernels}
  k_{\nu,T}(x,y) = \sum\limits_{m \in T} \sigma_{m}^{\nu}e_{m}(x)e_{m}(y).
\end{equation}
We will often use $k_{\nu, T}$ with $T=[N]=\{1, \dots, N\}$, in which case we write $k_{\nu, N} = k_{\nu, [N]}$, with the corresponding Gram matrix denoted by $\bm{K}_{\nu,N}(\bm{x})$.} 
%

\rev{Finally, for a set of points $\bm{x} = \{x_1, \dots, x_N\} \subset \mathcal{X}$, define the finite-dimensional subspace 
\begin{equation}\label{eq:def-set-Kx}
  \mathcal{K}(\bm{x})= \mathrm{Span}\{k(x_1,.), \dots, k(x_N,.) \} \subset \mathcal{H}.
\end{equation}  
Projections onto subspaces like $\mathcal{K}(\bm{x})$ will play an important role, so we introduce our last notation.
Given a Hilbert semi-norm $\|.\|$ defined on $\mathbb{L}_{2}(\omega)$, and a closed linear space $\mathcal{G}$ of $\mathbb{L}_{2}(\omega)$, let $\Pi_{\mathcal{G}}^{\|.\|}$ be the $\|.\|$-orthogonal projector onto $\mathcal{G}$. 
}

\subsubsection{Increasing levels of smoothness}
\label{sec:smoothness_frac}
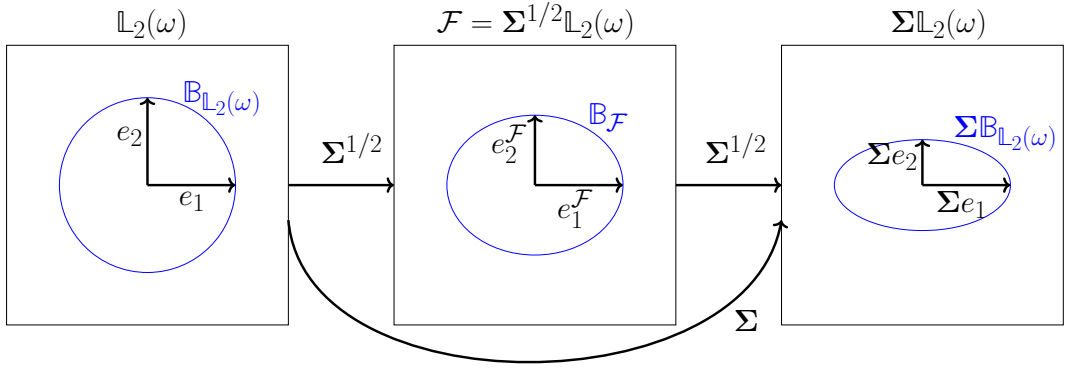
\begin{figure*}
    \centering
    \resizebox{0.95\textwidth}{!}{\makeatletter


\begin{tikzpicture}[]
\coordinate (A) at (1.4,1,0);
\coordinate (B) at (6.7,1,0);
\coordinate (B2) at (4.1,1,0);
\coordinate (C) at (6.7,-3,0);
\coordinate (D) at (8,-0.3,0);
\coordinate (E) at (8,-1.7,0);

\coordinate (F) at (4,-1,0);

\coordinate (E_1_L2_1) at (0, 0);
\coordinate (E_1_L2_2) at (2.5, 0);

\coordinate (E_2_L2_1) at (0, 0);
\coordinate (E_2_L2_2) at (0, 2.5);

\coordinate (E_1_F_1) at (9+2, 0);
\coordinate (E_1_F_2) at (11.5+2, 0);

\coordinate (E_2_F_1) at (9+2, 0);
\coordinate (E_2_F_2) at (9+2, 2);

\coordinate (E_1_FF_1) at (18+4, 0);
\coordinate (E_1_FF_2) at (20.5+4, 0);

\coordinate (E_2_FF_1) at (18+4, 0);
\coordinate (E_2_FF_2) at (18+4, 1.3);

\coordinate (L2_F_1) at (4, 0);
\coordinate (L2_F_2) at (7, 0);
\coordinate (F_FF_1) at (15, 0);
\coordinate (F_FF_2) at (18, 0);

\coordinate (L2_FF) at (12, 5);

\draw [->,line width=2pt] (L2_F_1) -- (L2_F_2);
\draw [->,line width=2pt] (F_FF_1) -- (F_FF_2);

\draw [->,line width=2pt] (4, -1) to [out=-85,in=-100] (18, -1);

\draw (5.8, 0.5) node[above ] {\Huge $\bm{\Sigma}^{1/2}$};
\draw (16.7, 0.5) node[above] {\Huge $\bm{\Sigma}^{1/2}$};
\draw (17, -4.3) node[above] {\Huge $\bm{\Sigma}$};

\draw (-4,-4) rectangle (4cm,4cm);
\draw[blue] (0,0) circle (2.5cm and 2.5cm);

\draw (7,-4) rectangle (15cm,4cm);
\draw[blue] (11,0) ellipse (2.5cm and 2cm);

\draw (18,-4) rectangle (26cm,4cm);
\draw[blue] (22,0) ellipse (2.5cm and 1.3cm);

\draw [->,line width=2pt, black] (E_1_L2_1) -- (E_1_L2_2);
\draw [->,line width=2pt, black] (E_2_L2_1) -- (E_2_L2_2);

\draw [->,line width=2pt, black] (E_1_F_1) -- (E_1_F_2);
\draw [->,line width=2pt, black] (E_2_F_1) -- (E_2_F_2);

\draw [->,line width=2pt, black] (E_1_FF_1) -- (E_1_FF_2);
\draw [->,line width=2pt, black] (E_2_FF_1) -- (E_2_FF_2);





\begin{scope}
  \draw (1.3,4,0) node[above left = 0.5mm] {\Huge $\mathbb{L}_{2}(\omega$)};

  \draw (14,4,0) node[above left = 0.5mm] {\Huge $ \mathcal{F} = \bm{\Sigma}^{1/2}\mathbb{L}_{2}(\omega$)};
  \draw (24,4,0) node[above left = 0.5mm] {\Huge $ \bm{\Sigma}\mathbb{L}_{2}(\omega$)};

 \draw (3.4,1.8,0) node[above left = 0.5mm] {\Huge \textcolor{blue}{$\mathbb{B}_{\mathbb{L}_{2}(\omega)}$}};

  \draw (13.8,1.4,0) node[above left = 0.5mm] {\Huge \textcolor{blue}{$\mathbb{B}_{\mathcal{F}}$}};
  \draw (26,0.8,0) node[above left = 0.5mm] {\Huge \textcolor{blue}{$\bm{\Sigma}\mathbb{B}_{\mathbb{L}_{2}(\omega)}$}};

  \draw (1.8, -0.9) node[above left = 0.5mm] {\Huge \color{black}$e_{1}$\color{black}};
  \draw (0, 1.0) node[above left = 0.1mm] {\Huge \color{black}$e_{2}$\color{black}};

  \draw (12.8, -1.4) node[above left = 0.5mm] {\Huge \color{black}$e_{1}^{\mathcal{F}}$\color{black}};
  \draw (10.9, 0.5) node[above left = 0.1mm] {\Huge \color{black}$e_{2}^{\mathcal{F}}$\color{black}};

\draw (24, -1.1) node[above left = 0.5mm] {\Huge \color{black}$\bm{\Sigma}e_1$\color{black}};
  \draw (22, 0.3) node[above left = 0.1mm] {\Huge \color{black}$\bm{\Sigma}e_2$\color{black}};




\end{scope}
\end{tikzpicture}
    \caption{A schematic diagram illustrating the relationship between the unit ball $\mathbb{B}_{\mathbb{L}_{2}(\omega)}$ of $\mathbb{L}_{2}(\omega)$, the unit ball $\mathbb{B}_{\F}$ of $\F$, and the image of $\mathbb{B}_{\mathbb{L}_{2}(\omega)}$ by the integration operator $\bm{\Sigma}$. 
    \label{fig:ellipsoids}
    }
\end{figure*}

The fact that $\mathcal{F}  = \bm{\Sigma}^{1/2} \mathbb{L}_{2}(\omega)$ illustrates the smoothing nature of the integral operator $\bm{\Sigma}$.
Similarly, for $r \geq 0$, an element $g \in \mathbb{L}_{2}(\omega)$ belongs to $\bm{\Sigma}^{r+1/2}\mathbb{L}_{2}(\omega)$ if and only if 
\begin{equation}
  \sum_{m \in \mathbb{N}^{*}} \frac{\langle g,e_{m} \rangle_{\omega}^2}{\sigma_{m}^{2r+1}}<+\infty.
\end{equation}
Moreover, since $(\sigma_m)$ is a non-increasing sequence of positive real numbers, we have
\begin{equation}
 r \geq r' \geq 0 \implies \bm{\Sigma}^{r+1/2}\mathbb{L}_{2}(\omega) \subset \bm{\Sigma}^{r'+1/2}\mathbb{L}_{2}(\omega).
\end{equation}
In particular, for $r\geq 0$, $\bm{\Sigma}^{r+1/2}\mathbb{L}_{2}(\omega) \subset \F$: these subspaces define increasing levels of regularity in the RKHS $\F$.
Figure~\ref{fig:ellipsoids} illustrates this hierarchy of functional spaces.

\subsection{Finite-dimensional approximations in RKHSs}
\label{sec:non-asymptotic}

The reconstruction of a function $f$ that belongs to an RKHS $\mathcal{F}$ based on its evaluations over a finite set of $N$ nodes $x_1, \dots x_N \in \mathcal{X}$ can be achieved by different families of finite-dimensional approximations. The following approximations will be of interest for the rest of the article.

\subsubsection{Approximations based on mixtures of kernel translates}

By definition of an RKHS, a natural choice of approximation is a weighted sum of kernel functions. 
Formally, the objective is thus to build a set of nodes $(x_i)$, a.k.a. a \emph{design}, and weights $(w_i)$, such that a suitable norm of the residual
\begin{equation}\label{eq:error_anorm}
  f- \sum_{i \in [N]}w_{i}k(x_{i},\cdot)
\end{equation}
is small. 
Assuming the design $\bm{x}=(x_i)_{i\in[N]}$ is fixed, we now consider two possible choices for this norm that induce different sets of \rev{optimal} weights.

Minimizing the $\Ltwo$ norm of the residual \eqref{eq:error_anorm} yields the classical least-squares (LS) approximation
\rev{\begin{equation}\label{eq:interpolation_omega_opt_problem}
  \hat{f}_{\mathrm{LS},\bm{x}} = \rev{\Pi_{\mathcal{K}(\bm{x})}^{\|.\|_{\omega}}f}  = \sum_{i \in [N]}\hat{w}_{i}k(x_{i},.)
\end{equation}
where the subspace ${\mathcal K}(\bm{x})$ has been introduced in \eqref{eq:def-set-Kx}, and
\begin{equation}
  \label{e:w_min}
  \hat{\bm{w}} = \arg\min_{\rev{\bm{w} \in \R^N}}\|f - \sum_{i \in [N]}w_{i}k(x_{i},.)\|_{\omega}.
\end{equation}}
The least-squares approximation typically enjoys strong theoretical properties. 
However, we shall see in \Cref{sec:L2_optimal_approximation} that computing the weights \eqref{e:w_min} requires the evaluation of $\bm{\Sigma}f$ at the nodes $x_{1}, \dots, x_N$. 
This makes the method impractical, and calls for more tractable approximations.

The RKHS norm of the residual is a natural alternative objective to minimize, and yields the so-called \emph{optimal kernel approximation} (OKA) 
\begin{equation}
  \label{eq:interpolation_opt_problem}
  \hat{f}_{\mathrm{OKA},\bm{x}} = \rev{\Pi_{\mathcal{K}(\bm{x})}^{\|.\|_{\mathcal{F}}}f} = \arg\min\limits_{\hat{f} \in \rev{\mathcal{K}(\bm{x})}} \|f - \hat{f}\|_{\F}.
\end{equation}
We shall sometimes write $\hat{f}_{\mathrm{OKA}}$ for $\hat{f}_{\mathrm{OKA},\bm{x}}$ when the design $\bm{x}$ is clear from the context. 
Let us observe that the approximation \eqref{eq:interpolation_opt_problem} is uniquely defined if the Gram matrix $ \bm{K}_1(\bm{x}) = (k(x_{i},x_{i'}))_{i,i' \in [N]}$ is non-singular, the optimal vector of weights being 
\begin{equation}
  \label{e:OKA_weights}
  \widehat{\bm{w}}_{\mathrm{OKA},\bm{x}} := \bm{K}_1(\bm{x})^{-1}f(\bm{x}),
\end{equation} 
as can be seen from expanding the norm in \eqref{eq:interpolation_opt_problem} and using the reproducing property. 
Note that, unlike the least-squares approximation, computing the weights $\widehat{\bm{w}}_{\mathrm{OKA},\bm{x}}$ only depends on being able to evaluate $f$ at the nodes. 
Moreover, OKA comes with a remarkable interpolation property: for $i \in [N]$, $\hat{f}_{\mathrm{OKA},\bm{x}}(x_i) = f(x_i)$.

%
\subsubsection{Approximations living in eigenspaces}
\label{sec:eigenspace_approximations}
Another way to recover a continuous signal $f \in \F$ from its evaluations $f(x_1), \dots, f(x_N)$ at fixed nodes is to seek an approximation that belongs to the eigenspace $\mathcal{E}_M = \mathrm{Span}(e_1, \dots, e_M)$ for some\footnote{
  \rev{The parameter $M$ is typically thought as smaller than or equal to $N$. Since some of our results allows for a general $M$, we prefer at this stage to let $M$ be any integer unless otherwise specified.}
} 
$M \in \mathbb{N}^{*}$.
We consider two such approaches. 


\paragraph{Transform-based approximations}
Consider the projection of $f$ onto $\mathcal{E}_M$,
\begin{equation}
  \label{eq:eigen_approximation}
  f_{M}:= \rev{\Pi_{\mathcal{E}_M}^{\|.\|_{\omega}} f} = \sum_{m \in [M]} \langle f, e_m \rangle_{\omega} e_m  .
\end{equation}
Note that, for $m \in [M]$, $\langle f, e_m \rangle_{\omega}$ is the integral $I_{m}(f):= \int_{\mathcal{X}}f(x)e_{m}(x) \mathrm{d}\omega(x)$. 
Since we want to assume the availability of the evaluation of $f$ only at the nodes, we consider approximating each integral $I_{m}(f)$ by a weighted sum
\begin{equation}
  \label{e:quadrature}
	\hat{I}_{m}(f) := \sum_{i \in [N]}\alpha_{m,i} f(x_i),
\end{equation}
where $\alpha_{m,i}$ are quadrature weights to be discussed shortly. 
The collection of operators $\hat{I}_{m}$ form a \emph{transform} $\Phi$, defined as $\Phi(f) =  (\hat{I}_m(f))_{m\in[M]}\in\mathbb{R}^{M}$. 
The resulting approximation $\hat{f}_{\Phi}$ will be called a \emph{transform-based approximation} of $f$. Note that
\begin{equation}\label{def:hat_phi_x}
	\hat{f}_{\Phi}:= \sum_{m \in [M]} \hat{I}_{m}(f) e_{m} = \sum_{m\in [M]} (f(\bm{x})^{\mathrm{T}}\bm{\alpha}_{m}) e_m = \sum_{i\in [N]} \left( \sum_{m\in [M]} \alpha_{m,i} e_m \right) f(x_i),
\end{equation}
\rev{which leads to the implicit construction of a family of $N$ functions, here in the parentheses, that define a {\em transform} with coefficients given by the samples $f(x_i)$ themselves.}


For the construction of the quadrature weights $(\alpha_{m,i})$ in \eqref{e:quadrature}, this work focuses on \emph{interpolative quadrature rules} \citep{Lar72}. Given a kernel $\kappa : \mathcal{X} \times \mathcal{X} \rightarrow \mathbb{R}$, 
the weights $\alpha_{m,i}$ of an interpolative quadrature are chosen so that 
\begin{equation}
  \label{eq:interpolative_kernel_quadrature}
  \forall i \in [N], m \in [\rev{N}], \:\: \hat{I}_{m}(\kappa(x_i,\cdot)) = I_{m}(\kappa(x_i,\cdot)).
\end{equation}
Under the assumption that the functions $\kappa(x_{1},\cdot),\dots,\kappa(x_{N},\cdot)$ are linearly independent, the sequence $(\alpha_{m,i})_{(m,i) \in [N] \times [N]}$ is uniquely defined by~\eqref{eq:interpolative_kernel_quadrature}. 
We now single out two such interpolative quadratures that correspond to two choices for $\kappa$\rev{, namely either $k$ or the truncated kernel $k_{0,N}$ defined in \cref{s:different_kernels}}

\rev{
 The first interpolative quadrature we consider is what we call \emph{optimal kernel quadrature} (OKQ). In this case, we take $\kappa = k$. Under the assumption that the matrix $\bm{K}_1(\bm{x})$ is non-singular, \eqref{eq:interpolative_kernel_quadrature} is satisfied if and only if the weights $\bm{\alpha}_{m}=(\alpha_{m,i})_{i \in [N]}$ in \eqref{e:quadrature} are taken to be
  \begin{equation}
    \label{eq:OKQ_alpha_n}
  \forall m \in [N],\:\:  \bm{\alpha}_{m}^{\mathrm{OKQ}} = \sigma_{m}\bm{K}_1(\bm{x})^{-1} e_{m}(\bm{x}).
  \end{equation}
 Indeed, we have $I_{m}(k(x,\cdot)) = \sigma_{m}e_{m}(x)$ for $x \in \mathcal{X}$. 
  While \eqref{eq:interpolative_kernel_quadrature} does not necessarily hold for $m \geq N+1$, we can still define $\bm{\alpha}_{m}^{\mathrm{OKQ}}$ by extending \eqref{eq:OKQ_alpha_n} to $m \geq N+1$.}
 The vector $\bm{\alpha}_{m}^{\mathrm{OKQ}}$ is then nothing but \rev{the vector obtained by taking $f(x) = \sigma_{m}e_{m}(x)$ in \eqref{e:OKA_weights}.}
%
In the rest of the article, the corresponding transform will be called the \emph{optimal kernel quadrature transform}; the corresponding quadrature rules are denoted by $\hat{I}_{1}^{\mathrm{OKQ}}, \dots, \hat{I}_{M}^{\mathrm{OKQ}}$, and the resulting approximation \eqref{def:hat_phi_x} by $\hat{f}_{\mathrm{OKQ},M,\bm{x}}$.

\rev{
To put things in perspective, note that $\hat{f}_{\mathrm{OKQ},M,\bm{x}}$ is a projection of $\hat{f}_{\mathrm{OKA},\bm{x}}$ introduced in \eqref{eq:interpolation_opt_problem}.
Indeed, for $m \in [M]$, using \eqref{e:OKA_weights} and \eqref{eq:OKQ_alpha_n}, 
\begin{equation}\label{eq:hat_I_m_identity_2}
\hat{I}_{m}^{\mathrm{OKQ}}(f) = f(\bm{x})^{\mathrm{T}}\bm{\alpha}_{m}^{\mathrm{OKQ}}  = \sigma_{m}\widehat{\bm{w}}_{\mathrm{OKA},\bm{x}}^{\mathrm{T}}e_{m}(\bm{x})
\end{equation} 
By the reproducing property,
\begin{equation}\label{eq:hat_I_m_identity_2}
  \hat{I}_{m}^{\mathrm{OKQ}}(f) = \Big\langle \sum\limits_{i \in [N]} (\widehat{\bm{w}}_{\mathrm{OKA},\bm{x}})_i k(x_i,.), e_m \Big\rangle_{\omega}= \langle \hat{f}_{\mathrm{OKA},\bm{x}}, e_m \rangle_{\omega},
\end{equation}
where the OKA weights are defined in \eqref{e:OKA_weights}.
Therefore
\begin{equation}\label{eq:representation_OKQ_OKA}
\hat{f}_{\mathrm{OKQ},M,\bm{x}} = \Pi_{\mathcal{E}_{M}}^{\|.\|_{\omega}} \hat{f}_{\mathrm{OKA},\bm{x}}.
\end{equation}
}

A second interpolative quadrature consists in taking $\kappa=k_{0,N}$ in \eqref{eq:interpolative_kernel_quadrature} where $k_{0,N}$ is the truncated kernel  defined in \eqref{e:different_truncated_kernels}.
Equivalently, we require that \rev{$M=N$ and}   
\begin{equation}\label{eq:pseudo_GQ}
\forall m,m' \in [N], \:\: \hat{I}_{m}(e_{m'}) = \delta_{m,m'},
\end{equation}
which is reminiscent of a property satisfied by Gaussian quadrature \citep{Gau04}. 
This yields new quadrature weights in \eqref{e:quadrature},
\begin{equation}
  \label{eq:alpha_QI}
  \bm{\alpha}_{m}^{\mathrm{QI}} = \rev{\bm{K}_{0,N}(\bm{x})}^{-1}e_{m}(\bm{x}), \quad m \in [N].
\end{equation} 
Note the similarity with \eqref{eq:OKQ_alpha_n}. \rev{Note that we can use \eqref{eq:alpha_QI} to extend the definition of $\hat{I}_{m}$ to $m \geq N+1$. However, unlike the OKQ-based transform, this extension is difficult to study theoretically. }

Finally, note that the resulting approximation, denoted by $\hat{f}_{\mathrm{QI},N,\bm{x}}$ has been called {\em quasi-interpolant} (QI) or \emph{hyperinterpolant} in the literature \citep{Slo95}, since
\begin{equation}
  \label{eq:def_fQI}
\forall f \in \mathcal{E}_N = \mathrm{Span}(e_1, \dots, e_N), \quad \hat{f}_{\mathrm{QI},N,\bm{x}} = f = f_{N}.
\end{equation}
In the rest of the article, we denote the corresponding quadrature rules by $\hat{I}^{\mathrm{QI}}_{1}, \dots, \hat{I}^{\mathrm{QI}}_{N}$. 
  Note that, for $m \in [N]$, $\hat{I}^{\mathrm{QI}}_{m}$ depends on $N$ and $\bm{x}$ that are dropped from the notation for simplicity.
  We now turn to the last approximation studied in this article.



\paragraph{The empirical least-squares approximation}
Let $\bm{x} \in \mathcal{X}^N$ and $q: \mathcal{X} \rightarrow \mathbb{R}_{+}^{*}$. 
Consider the so-called \emph{empirical} semi-norm $\|.\|_{q,\bm{x}}$ defined on $\mathbb{L}_{2}(\omega)$ by
\begin{equation}
  \label{def:ELS_seminorm}
  \|h\|_{q,\bm{x}}^2:= \frac{1}{N} \sum\limits_{i \in [N]} q(x_{i})h(x_{i})^2. 
\end{equation}
The empirical least-squares estimator yields yet another approximation\footnote{\rev{Note that in the literature, $\hat{f}_{\mathrm{ELS},M,\bm{x}}$ is often referred to as the weighted least-squares approximation, while $\hat{f}_{\mathrm{LS},\bm{x}}$ is called the $\mathbb{L}_2(\omega)$-projection.}}
\begin{equation}\label{def:hat_f_ELS_M}
\hat{f}_{\mathrm{ELS},M,\bm{x}}:= \rev{\Pi_{\mathcal{E}_M}^{\|.\|_{q,\bm{x}}} f} =  \argmin_{\hat{f} \in \mathcal{E}_{M}} \|f-\hat{f}\|_{q,\bm{x}}^2.
\end{equation}
We discuss the choice of $q$ and the design $\bm{x}$ in~\Cref{sec:iid_design_conf}.
The intuition is that, for well-chosen weight $q$ and design $\bm{x}$, the semi-norm $\|.\|_{q,\bm{x}}$ ``mimics" the $\|.\|_{\omega}$ norm as $N$ goes to infinity. 
Similarly, $\hat{f}_{\mathrm{ELS},M,\bm{x}}$ is supposed to inherit some of the properties of the projection $f_{M}$ of $f$ onto $\mathcal{E}_{M}$. 
Compared to $\hat{f}_{\mathrm{LS},\bm{x}}$, the approximation $\hat{f}_{\mathrm{ELS},M,\bm{x}}$ has the advantage of being computable given the evaluations of $f$ at the nodes $\bm{x}$. Indeed, \cite{CoDaLe13} shows that writing $\hat{f}_{\mathrm{ELS},M,\bm{x}} = \sum_{m \in [M]}\eta_{m} e_{m}$ yields
\begin{equation}
  \label{eq:ELS_matrix_form}
  \bm{G}_{q,\bm{x}}\bm{\eta} = \bm{d}_{q,\bm{x}},
\end{equation}
where $\bm{G}_{q,\bm{x}} = (\langle e_{m},e_{m'} \rangle_{q,\bm{x}})_{m,m' \in [M]} \in \mathbb{R}^{M \times M}$ is the Gram matrix of the family $(e_{m})_{m \in [M]}$ with respect to the inner product defined by~\eqref{def:ELS_seminorm}, and $\bm{d}_{q,\bm{x}}=(d_{m})_{m \in [M]} \in \mathbb{R}^{M}$ is defined by 
$$
  d_{m} := \sum_{i\in [N]}q(x_i)f(x_i)e_{m}(x_i)/N, \quad m \in [M].
$$
In other words, the numerical evaluation of $\hat{f}_{\mathrm{ELS},M,\bm{x}}$ requires to solve the linear system~\eqref{eq:ELS_matrix_form}. 
This in turn requires the evaluation of $\bm{G}_{q,\bm{x}}$ and $\bm{d}_{q,\bm{x}}$, which boils down to the evaluation of the functions $f$, $e_{m}$ and $q$ on the nodes $x_{1}, \dots,x_{N}$. 
Moreover, $\hat{f}_{\mathrm{ELS},M,\bm{x}}$ is uniquely defined if and only if $\bm{G}_{q,\bm{x}}$ is non-singular. 

\begin{table}
\centering
\caption{\rev{Summary of our bounds on the mean square error (MSE) of various approximations. The first two rows correspond to past work, where the error is measured in the RKHS norm. The other rows correspond to this paper, where the error is measured in $L^2$ norm. 
Note that results for CVS hold under the condition that $\beta_N$ in \eqref{e:def_betaN} is bounded, so that $\epsilon_1(N) = \mathcal{O}(\sigma_{N+1})$.}}
\label{tab:summary}
\begin{tabular}{|c|c|c|c|c|c|}
\hline
\textbf{Approx.} & \textbf{Target} & \textbf{Design} & \textbf{Norm} & \textbf{MSE} is $\mathcal{O}(\cdot)$ & \textbf{Reference}  \\ \hline

$\hat{f}_{\mathrm{OKA}, \bm{x}}$ & $f \in \bm{\Sigma}\mathbb{L}_{2}(\omega)$ & DPP &  $\|.\|_{\mathcal{F}}$ & $\sum_{m\geq N+1}\sigma_{m}$ & \cite{Bel21}, \cite{BeBaCh19} \\ \hline
$\hat{f}_{\mathrm{OKA}, \bm{x}}$ & $f \in \bm{\Sigma}\mathbb{L}_{2}(\omega)$  & CVS & $\|.\|_{\mathcal{F}}$ & $\epsilon_{1}(N)$ & \cite{BeBaCh20} \\ \hline
$\hat{f}_{\mathrm{LS}, \bm{x}}$ & $f \in \bm{\Sigma}\mathbb{L}_{2}(\omega)$  & DPP & $\|.\|_{\omega}$  & $\sum_{m\geq N+1}\sigma_{m}^2$ & \Cref{cor:dpprates}  \\ 
\hline
$\hat{f}_{\mathrm{OKA}, \bm{x}}$ &  $f \in \bm{\Sigma}\mathbb{L}_{2}(\omega)$ & CVS & $\|.\|_{\omega}$ & $\epsilon_{1}(N)$ & \Cref{prop:f_sigmag_cvs_omega}  \\ 
\hline
$\hat{f}_{\mathrm{LS}, \bm{x}}$ &  $f \in \bm{\Sigma}\mathbb{L}_{2}(\omega)$ & CVS & $\|.\|_{\omega}$ & $\epsilon_{1}(N)$ & \Cref{cor:cvsrates}  \\ 
\hline
$\hat{f}_{\mathrm{LS}, \bm{x}}$ & $f \in \mathcal{F}$  & DPP & $\|.\|_{\omega}$  & $\sum_{m\geq N+1}\sigma_{m}$ & \Cref{cor:dpprates}  \\ 
\hline
$\hat{f}_{\mathrm{OKA}, \bm{x}}$ &  $f \in \mathcal{F}$ & CVS & $\|.\|_{\omega}$ & $\inf\limits_{M \in \mathbb{N}^{*}}\Big(\sum_{m \in [M]} \epsilon_{m}(N) + \sigma_{M+1}\Big)$ & \Cref{thm:oka_bounds}  \\ 
\hline
$\hat{f}_{\mathrm{LS}, \bm{x}}$ &  $f \in \mathcal{F}$ & CVS & $\|.\|_{\omega}$ & $\inf\limits_{M \in \mathbb{N}^{*}}\Big(\sum_{m \in [M]} \epsilon_{m}(N) + \sigma_{M+1} \Big)$ & \Cref{cor:cvsrates}  \\ 
\hline
$\hat{f}_{\mathrm{tELS}, N,\bm{x}}$ & $f \in \mathcal{F}$ & DPP & $\|.\|_{\omega}$ & $\inf\limits_{M \in \mathbb{N}^{*}}\Big(M \sigma_{N+1} + \sigma_{M+1} \Big)$ & \Cref{prop:bound_tELS_DPP} \\ \hline
\end{tabular}
\end{table}


\rev{\Cref{tab:summary} illustrates the various approximations introduced so far.
It is a summary of the bounds in mean square error proven in previous work (in RKHS norm) and in this work (in $L^2$ norm), under two random designs, which we introduce in Section~\ref{sec:design_finite_dim}. \Cref{tab:summary_2} gathers these bounds for the special case $\sigma_m = C m^{-\alpha}$ where $C>0$ and $\alpha>1$.}

\begin{table}
\centering
\caption{\rev{Summary of our bounds on the mean square error (MSE) of various approximations under the assumption that the spectrum decreases as $N^{-\alpha}$, $\alpha>1$.}}
\label{tab:summary_2}
\begin{tabular}{|c|c|c|c|c|c|}
\hline
\textbf{Approximation} & \textbf{Target} & \textbf{Design} & \textbf{Norm} & \textbf{MSE} is $\mathcal{O}(\cdot)$ & \textbf{Reference}  \\ \hline

$\hat{f}_{\mathrm{OKA}, \bm{x}}$ & $f \in \bm{\Sigma}\mathbb{L}_{2}(\omega)$ & DPP &  $\|.\|_{\mathcal{F}}$ & $N^{1-\alpha} $ & \cite{Bel21}, \cite{BeBaCh19} \\ \hline
$\hat{f}_{\mathrm{OKA}, \bm{x}}$ & $f \in \bm{\Sigma}\mathbb{L}_{2}(\omega)$  & CVS & $\|.\|_{\mathcal{F}}$ & $N^{-\alpha} $ & \cite{BeBaCh20} \\ \hline
$\hat{f}_{\mathrm{LS}, \bm{x}}$ & $f \in \bm{\Sigma}\mathbb{L}_{2}(\omega)$  & DPP & $\|.\|_{\omega}$  & $N^{1-2\alpha} $ & \Cref{cor:dpprates}  \\ 
\hline
$\hat{f}_{\mathrm{OKA}, \bm{x}}$ &  $f \in \bm{\Sigma}\mathbb{L}_{2}(\omega)$ & CVS & $\|.\|_{\omega}$ & $N^{-\alpha} $ & \Cref{prop:f_sigmag_cvs_omega}  \\ 
\hline
$\hat{f}_{\mathrm{LS}, \bm{x}}$ &  $f \in \bm{\Sigma}\mathbb{L}_{2}(\omega)$ & CVS & $\|.\|_{\omega}$ & $N^{-\alpha} $  & \Cref{cor:cvsrates}  \\ 
\hline
$\hat{f}_{\mathrm{LS}, \bm{x}}$ & $f \in \mathcal{F}$  & DPP & $\|.\|_{\omega}$  & $N^{1-\alpha} $ & \Cref{cor:dpprates}  \\ 
\hline
$\hat{f}_{\mathrm{OKA}, \bm{x}}$ &  $f \in \mathcal{F}$ & CVS & $\|.\|_{\omega}$ & $N^{-\alpha\frac{\alpha}{\alpha+1}} $ & \Cref{cor:oka_polynomial_rate}  \\ 
\hline
$\hat{f}_{\mathrm{OKA}, \bm{x}}$ &  $f \in \mathcal{F}$ & DPP & $\|.\|_{\omega}$ & $ N^{-\alpha\frac{\alpha-1}{\alpha+1}}$ & \Cref{cor:oka_polynomial_rate}  \\ 
\hline
$\hat{f}_{\mathrm{tELS}, N,\bm{x}}$ & $f \in \mathcal{F}$ & DPP & $\|.\|_{\omega}$ & $N^{-\alpha\frac{\alpha}{\alpha+1}} $ & \Cref{cor:tels_polynomial_rate}  \\ \hline
\end{tabular}
\end{table}






\subsection{Designs for finite-dimensional approximations}\label{sec:design_finite_dim}

An abundant literature provides theoretical guarantees for the finite-dimensional approximations presented in~\Cref{sec:non-asymptotic}. This includes, but is not limited to, \citep{ErTu37,Slo95,Wend04,ScWe06}. 
These results deal with specific RKHSs, and cannot be easily generalized to an arbitrary RKHS.
Recently, a new tendency has emerged that looks for a universal sampling approach that is valid for a large class of RKHSs. 
This section first provides an overview of this universal sampling literature.
In the same spirit, we then show that determinantal point processes (DPPs) offer an adequate framework to design configurations of nodes with strong theoretical guarantees for finite-dimensional approximations, in a wide family of RKHSs. 

\subsubsection{Independent samples from the Christoffel function}
\label{sec:iid_design_conf}

The last decade has seen significant progress in the study of function reconstruction based on randomized configurations \citep{CoDaLe13,HaDo15,CoMi17,AdCa20,AdCaDeMo22,DoCo22}.
These works focused on the study of the empirical least-squares approximation; see~\Cref{sec:eigenspace_approximations}. 
In particular, the so-called \emph{instance optimality property} (IOP) has been a matter of extensive investigation.
The idea is to find assumptions on the configuration $\bm{x}$, the function $q: \mathcal{X} \rightarrow \mathbb{R}_{+}^{*}$, and the order $M \leq N$, under which we can certify that\footnote{This is a specific formulation of the IOP. See \cite{CoMi17} for a more generic formulation.}
 \begin{equation}\label{eq:IOP}
\forall f \in \mathcal{F}, \:\: \|f -\hat{f}_{\mathrm{ELS},M,\bm{x}}\|_{\omega} \leq C \|f-f_{M}\|_{\omega},
 \end{equation}
 where $C>0$ is a constant and \rev{$\hat{f}_{\mathrm{ELS},M,\bm{x}}$ is defined in \eqref{def:hat_f_ELS_M}.}
 Instance optimality implies that $\hat{f}_{\mathrm{ELS},M,\bm{x}} = f$ for $f \in \mathcal{E}_{M}$, so that the reconstruction is exact in the eigenspace $\mathcal{E}_{M}$. 
 Moreover, it provides an upper bound of the approximation error $\|f -\hat{f}_{\mathrm{ELS},M,\bm{x}}\|_{\omega}$ for generic functions. 
 Investigating necessary conditions for the IOP unraveled the importance of the study of the eigenvalues of the Gram matrix $\bm{G}_{q,\bm{x}}$ \rev{defined after \eqref{eq:ELS_matrix_form}}; see \citep{CoMi17}. 
 In particular, it was proved that 
 \begin{equation}
  \label{eq:rip}
  \|\bm{G}_{q,\bm{x}} - \mathbb{I}_{M}\|_{\mathrm{op}} \leq \delta \iff \forall f \in \mathcal{E}_{M}, \:\: (1-\delta)\|f\|_{\omega}^2 \leq \|f\|_{q,\bm{x}}^2 \leq (1+\delta)\|f\|_{\omega}^2,
\end{equation}
 where $\mathbb{I}_{M} \in \mathbb{R}^{M \times M}$ is the identity matrix of order $M$, and $\|.\|_{\mathrm{op}}$ is the operator norm. 
 In other words, the $\|.\|_{q,\bm{x}}$ norm is equivalent to $\|.\|_{\omega}$ on $\mathcal{E}_{M}$ if and only if the matrix $\bm{G}_{q,\bm{x}}$ is close to $\mathbb{I}_{M}$. 
 Interestingly, \cite{Gro20} showed that the condition \eqref{eq:rip} was connected to Marcinkiewicz-Zygmund inequalities, which are common tools in the literature of non-uniform sampling \citep{Gro93,OrSa07,FiMh11}; see \citep{Gro20} for more references.
\rev{\cite{CoMi17,HaNoPe22}} proved that drawing $x_{1}, \dots, x_{N}$ proportionally to
\begin{equation}
  \label{e:not_really_iid_christoffel_sampling}
  1_{\|\bm{G}_{q,\bm{x}}-\mathbb{I}_{M}\|_{\mathrm{op}} \leq 1/2} \times \prod_{i \in [N]} \frac{1}{q(x_i)}\mathrm{d}{\omega}(x_i),
\end{equation}
the IOP holds with large probability provided that $N$ scales as $\|qc_{M}\|_{\infty}\log(2M)$, where $c_{M}$ is the inverse of the so-called \emph{Christoffel function}
\begin{equation}\label{eq:christoffel_expression}
c_{M}(x): = \sum\limits_{m \in [M]}e_{m}(x)^2; \:\: x \in \mathcal{X}.
\end{equation}
Since the required sampling budget \rev{$N$} grows with $\|qc_{M}\|_{\infty}$,
it is preferable to choose the function $q$ in such a way that $\|qc_{M}\|_{\infty}$ is minimized, under the constraint that $\int_{\mathcal{X}}1/q(x) \mathrm{d}\omega(x) = 1$. 
In particular, if we take $q = M/c_{M}$, the constant $\|qc_{M}\|_{\infty}$ grows linearly with $M$, and the required sampling budget \rev{$N$} scales as $M\log(M)$.
This is to be compared with the situation when $q$ is taken to be a constant, where the required sampling budget \rev{$N$} would scale as \rev{$\mathcal{O}(\|c_{M}\|_{\infty} \log(M))$}. The constant $\|c_{M}\|_{\infty}$, also called the \emph{Nikolskii constant} in approximation theory, is known to be at best \rev{superlinear in $M$ (of order $M^{\alpha}$ for $\alpha>1$)} for many families of orthogonal polynomials \citep{Nev86,Bos94,Xu96,Tot00}, hinting at a suboptimal sampling budget in \rev{$\mathcal{O}(M^\alpha \log(M))$}. 
Note also that \eqref{e:not_really_iid_christoffel_sampling} with $q$ inversely proportional to $c_M$ can be sampled by rejection sampling using i.i.d. draws from $1/q$, with controllable rejection probability \citep{CoMi17}.
In spite of the rejection step to account for the constraint in \eqref{e:not_really_iid_christoffel_sampling}, we still abusively denote the method as i.i.d. Christoffel sampling.

Achieving the IOP using a sampling budget \rev{$N$} that scales linearly in $M$ was proved to be possible using sparsification techniques \citep{DoCo22c,ChDo23}. 
However, as was shown in \citep{DoCo22c}, \rev{the constant $\delta$ worsens as $N$ approaches $M$, and the bound does not hold when $M = N$.}
Another line of research has focused on obtaining randomized approximations with optimal worst-case mean squared $L^2$ error, where \emph{worst-case} is meant over the unit ball of the RKHS. 
With a budget of $N$ evaluations, the latter error was shown to be at least $\sigma_{2N}/\sqrt{2}$ by \cite{Nov92}.
Constructive solutions nearly achieving \citep{WasWoz07} or achieving \citep{Kri19} the optimal rate followed, under conditions on the sequence $(\sigma_m)$.
In particular, the approximation of \cite{Kri19} is based on a multi-level Monte Carlo method that uses i.i.d. Christoffel sampling in its proposal mechanism. 
One disadvantage of the approaches of \citep{WasWoz07,Kri19} is the strong assumptions on the decay of the spectrum $(\sigma_m)$, which prevents, for instance, the spectrum to decay exponentially.

A related approach was adopted by \cite{Bac17}, yet with a kernel-based approximation that solves a regularized variant of the optimization problem~\eqref{eq:interpolation_opt_problem}. 
In particular, their analysis relied on the so-called \emph{regularized leverage score} function
\begin{equation}\label{eq:reg_lvs}
\tilde{c}_{\lambda}(x):= \sum\limits_{m \in \mathbb{N}^{*}}\frac{\sigma_m}{\sigma_m +\lambda}e_{m}(x)^2,
\end{equation}
where $\lambda>0$ is a regularization constant. 
\cite{Bac17} showed that when the nodes are i.i.d. draws from $\tilde{c}_{\lambda} \mathrm{d}\omega$, the resulting approximation converges to $0$ at an almost optimal rate, as $\lambda$ is made to go to zero with $N$ at a suitable rate. 
One practical downside, compared to \eqref{eq:christoffel_expression}, is the need for an infinite summation, \rev{and some approximation is needed, as was proposed in \cite{ChScDeRo23}.}

\subsubsection{Determinantal sampling}
\label{s:DPPs}
In \citep{BeBaCh19} and \citep{BeBaCh20}, we have investigated two random designs defined using determinants and Gram matrices. 

\begin{definition}[\rev{A family of projection DPPs}]
  \label{def:dpp}
\rev{Let $N\geq 1$, and $T \subset \mathbb{N}^{*}$ such that $|T| = N$. 
Consider the kernel $k_{0,T}$ of the projection onto the span of the eigenfunctions of $\bm{\Sigma}$ indexed by $T$, as defined in \cref{s:different_kernels}.
The design $\bm{x} = (x_1, \dots, x_N)$ is said to have for distribution the DPP of kernel $k_{0,T}$ and reference measure $\omega$ if
  \begin{equation}\label{def:density_detsampling_T}
    \bm{x} = (x_1, \dots, x_N) \sim \frac{1}{N!} 
      \Det \Big(k_{0,T}(x_{i},x_{j})\Big)_{(i,j) \in [N]\times [N]} \mathrm{d}\omega(x_1) \dots \mathrm{d}\omega(x_N).
    \end{equation}}
\end{definition}


\rev{
  First, the fact that \eqref{def:density_detsampling_T} integrates to $1$ is a consequence of $\rev{k_{0,T}}$ being a projection kernel.
  To insist on the latter property, the set $\{x_1, \dots, x_n\}$ is thus called the \emph{projection} DPP of kernel $k_{0,T}$ in the literature \citep{HoKrPeVi06}.
  Since order does not play a role in our paper, by an abuse of language, we also call the distribution of the \emph{vector} $\bm{x}$ a projection DPP.
  Second, in the remainder of the paper, we shall often consider \eqref{def:density_detsampling_T} with $T=\{1, \dots, N\}$, in which case we write an expectation under \eqref{def:density_detsampling_T} as $\mathbb{E}_{\mathrm{DPP}}$.
}

DPPs were introduced by \cite{Mac75}, and possess many interesting properties \citep{HoKrPeVi06}. For instance, \rev{the projection DPP studied in \citep{BeBaCh19} corresponds to the case $T=\{1,\dots,N\}$, and} any point of $\bm{x}$ in \eqref{def:density_detsampling_T} has marginal distribution 
\begin{equation}
  \label{e:marginal}
  x_i\sim \frac{1}{N} \sum_{n \in [N]} e_n(x_i)^2 \mathrm d\omega(x_i),
\end{equation}
which is related to the inverse of the Christoffel function \eqref{eq:christoffel_expression}.
In that sense, \eqref{def:density_detsampling_T} \rev{extends previous
work on designs} sampled from the inverse of the Christoffel function, \rev{by} adding a kernel-dependent correlation among the nodes.
Another useful property of DPPs with projection kernels like \eqref{def:density_detsampling_T} is that the chain rule for $\bm{x}$ is explicit.
This yields a polynomial-time, \emph{exact} sampling algorithm colloquially known as HKPV, after the authors of \citep{HoKrPeVi06}.


\emph{Continuous volume sampling}, introduced in \citep{BeBaCh20}, is a related distribution for nodes, which relies on the Gram matrix of the RKHS kernel instead of a projection kernel.

\begin{definition}[continuous volume sampling]
  \label{def:vs}

  The design $\bm{x} = (x_1, \dots, x_N)$ is said to have a distribution according to continuous volume sampling if 
  \begin{equation}\label{def:density_vs}
    \bm{x} = (x_1, \dots, x_N) \sim \frac{1}{Z} 
      \Det \Big(k(x_{i},x_{j})\Big)_{(i,j) \in [N]\times [N]} \mathrm{d}\omega(x_1) \dots \mathrm{d}\omega(x_N),
    \end{equation}
  where $Z<\infty$ is a normalization constant.
\end{definition}
In the remainder of the paper, $\mathbb{E}_{\mathrm{CVS}}$ is to be understood as an expectation under \eqref{def:density_vs}.
Note that unlike \eqref{def:density_detsampling_T}, the normalization constant is not explicit. 
Yet Hadamard's inequality yields
\begin{align*}
  Z & = \int_{\mathcal{X}^{N}} \Det \Big(k(x_{i},x_{j})\Big) \prod_{n \in [N]} \mathrm{d}\omega(x_{n}) \leq \int_{\mathcal{X}^{N}} \prod_{n \in [N]} k(x_{n},x_{n}) \mathrm{d}\omega(x_{n})\\
  &\leq \Big(\int_{\mathcal{X}}  k(x,x) \mathrm{d}\omega(x)\Big)^{N} <+\infty.
\end{align*}

While the chain rule for continuous volume sampling is not as simple as for the DPP in \eqref{def:density_detsampling_T}, continuous volume sampling can actually be shown to be a statistical mixture of projection DPPs.
\rev{
  Formally, for a bounded test function $h$,
  \begin{equation}
    \label{e:cvs_as_mixture}
    \mathbb{E}_{\mathrm{CVS}} h(\bm{x}) = \sum_{T \subset \mathbb{N}^*, \:\: |T|=N} \pi_T \int h(\bm{x}) \Det \Big(k_{0,T}(x_{i},x_{j})\Big)_{(i,j) \in [N]\times [N]} \mathrm{d}\omega(x_1) \dots \mathrm{d}\omega(x_N),
  \end{equation}
  where 
  $$
    \pi_T = \frac{\prod_{t\in T}\sigma_{t}}{\sum_{I\subset \mathbb{N}^*, \:\: |I|=N}\prod_{i\in I}\sigma_{i}},
  $$
  and $k_{0,T}$ is defined by \eqref{e:different_truncated_kernels}.} In words, $\bm{x}$ in \eqref{def:density_vs} can be drawn by first sampling a subset $I \subset\mathbb{N}^*, |I|=N$ proportionally to $\prod_{i\in I}\sigma_{i}$, and then drawing $\bm x$ from \eqref{def:density_detsampling_T} with $T= I$.
In that sense, continuous volume sampling is a ``soft" modification of the DPP \eqref{def:density_detsampling_T} \rev{corresponding to $T=\{1, \dots, N\}$}, which is the component with largest weight in the mixture; see \citep{BeBaCh20} for more details.

\subsection{Existing results on RKHS sampling using DPPs and CVS}
\label{s:DPPs_th_old}

This section gathers existing work on function reconstruction using determinantal distributions.
We provide some details about some results that will be useful in~\Cref{sec:new_results}.

\subsubsection{The Ermakov-Zolotukhin quadrature rule}\label{sec:review_ezq}
 
Non-asymptotic reconstruction guarantees for functions living in Hilbert spaces can be traced back to the work of \cite{ErZo60}. 
The authors studied quadrature rules $\hat{I}_{1}, \dots, \hat{I}_{N}$ obtained by taking nodes $\bm{x}$ following the projection DPP of~\Cref{def:dpp}, and the corresponding vectors of weights $\bm{\alpha}_{1}, \dots, \bm{\alpha}_{N}$ \rev{as in}~\eqref{eq:alpha_QI}. 
In particular, they proved that for a continuous function $f:\mathcal{X} \rightarrow \mathbb{R}$ living in $\mathbb{L}_{2}(\omega)$, 
\begin{equation}\label{eq:ez_1}
 \forall n \in [N], \:\:\left\{
    \begin{array}{ll}
        \mathbb{E}_{\DPP}\hat{I}_{n}(f)  & = I_{n}(f) \\
        \mathbb{V}_{\DPP}\hat{I}_{n}(f)  & = \sum_{m \geq N+1}\langle f,e_{m} \rangle_{\omega}^2 \\
    \end{array}
\right. .
\end{equation}
\cite{GaBaVa19} revisited this result and proved that
\begin{equation}\label{eq:ez_2}
 \forall n_{1},n_{2} \in [N], \:\: n_{1} \neq n_{2} \implies \mathbb{C}\mathrm{ov}_{\DPP} (\hat{I}_{n_1}(f),\hat{I}_{n_2}(f)) = 0.
\end{equation}
As mentioned in \citep{KaAd22}, \eqref{eq:ez_1} implies that the resulting transform $\hat{f}_{\mathrm{QI},N,\bm{x}}$ introduced in \Cref{sec:non-asymptotic} satisfies
\begin{equation}\label{eq:KaLe_result}
\forall f \in \mathbb{L}_{2}(\omega), \:\: \mathbb{E}_{\DPP}\|f-\hat{f}_{\mathrm{QI},N,\bm{x}}\|_{\omega}^2 = N \|f-f_{N}\|_{\omega}^2.
\end{equation}
Thus the quasi-interpolant satisfies an instance optimality property. Yet, the corresponding constant grows to infinity with $N$. Observe that~\eqref{eq:KaLe_result} holds in $\mathbb{L}_{2}(\omega)$ and $f$ is not assumed to live in a particular RKHS.

\subsubsection{The optimal kernel approximation using determinantal sampling}\label{sec:review_qdpp}
The study of quadrature rules instigated the inquiry of  non-asymptotic guarantees for finite-dimensional approximations based on determinantal nodes for functions living in RKHSs. 
Indeed, in the context of numerical integration, for any function $h$ living in the RKHS $\mathcal{F}$ and for any $g \in \mathbb{L}_{2}(\omega)$, letting $f=\bm{\Sigma}g$ \citep{MuFuSrSc17},
\begin{equation}\label{eq:quadrature_error_upper_bound}
\Big| \int_{\mathcal{X}} h(x)g(x) \mathrm{d}\omega(x) - \sum\limits_{i \in [N]} w_{i}h(x_{i})\Big|^2 \leq \|h\|_{\mathcal{F}}^2 \|f-\sum\limits_{i \in [N]} w_{i}k(x_i,\cdot)\|_{\mathcal{F}}^2,
\end{equation}
%
Moreover, \rev{equality is attained for some $h \in \mathcal{F}$.}
In other words, \rev{if $\|h\|_{\mathcal{F}} \leq 1$,} the squared residual provides an upper bound of the squared error of the approximation of the integral $\int_{\mathcal{X}} h(x) g(x) \mathrm{d}\omega(x)$ by the quadrature rule $\sum_{i \in [N]} w_{i}h(x_i)$. This is especially applicable to the optimal kernel quadrature \rev{used for computing the coefficients $\bm{\alpha}_{m}^{\mathrm{OKQ}}$} in~\Cref{sec:eigenspace_approximations}: taking the vector of weights \rev{$(w_i)_{i \in [N]}$} equal to $\bm{K}_{1}(\bm{x})^{-1} f(\bm{x})= \bm{K}_{1}(\bm{x})^{-1} \bm{\Sigma}g(\bm{x})$ extends~\eqref{eq:OKQ_alpha_n} to any $g$. The squared worst-case integration error of the corresponding quadrature rule results equal to $\|f-\hat{f}_{\mathrm{OKA},\bm{x}}\|_{\mathcal{F}}^2$. Now,
it was proven in \citep{BeBaCh19} that
\begin{equation}\label{eq:main_result_DPP_sup}
\EX_{\DPP} \sup_{f \in \bm{\Sigma}\mathbb{B}_{\mathbb{L}_{2}(\omega)}} \|f - \hat{f}_{\mathrm{OKA},\bm{x}}\|_{\mathcal{F}}^{2} = \mathcal{O}(N^{2} r_{N+1}),
\end{equation}
where 
\begin{equation}
    \label{eq:def_r_Nplus1}
    r_{N+1}:= \sum_{m \geq N+1} \sigma_{m}.
\end{equation} 
Moreover, it was proven in Theorem 3 of \citep{Bel21} that
\begin{equation}\label{eq:result_ezq_2}
 \EX_{\DPP} \|f - \hat{f}_{\mathrm{OKA}, \bm{x}} \|_{\mathcal{F}}^{2}  \leq 4 \|g\|_{\omega}^{2} r_{N+1}.
\end{equation}
The first upper bound~\eqref{eq:main_result_DPP_sup} deals with the worst interpolation error on the set $\bm{\Sigma} \mathbb{B}_{\mathbb{L}_2(\omega)}$.
In contrast, the second upper bound~\eqref{eq:result_ezq_2} is not uniform in $f$. 
These upper bounds highlight the importance of the eigenvalues $\sigma_{m}$ for the study of the convergence of $\hat{f}_{\mathrm{OKA}, \bm{x}}$ under the projection DPP of~\Cref{def:dpp}:  
the upper bounds converge to $0$ only if the convergence to zero of the tail $r_N$ of the eigenvalues is fast enough. 

These results give convergence rates for the interpolation under the distribution of the projection DPP, that scale, at best, as $\mathcal{O}(r_N)$, which is slower than the empirical convergence rate $\mathcal{O}(\sigma_{N+1})$ observed in \citep{Bel20, BeBaCh19}. 
Indeed, when $\sigma_{N} = N^{-2s}$ for $s>1/2$, we have $\mathcal{O}(r_{N}) = \mathcal{O}(N^{1-2s})$, which is slower, by a factor of $N$, than $\mathcal{O}(\sigma_{N+1})$. Recall that $\mathcal{O}(\sigma_{N+1})$ corresponds to the optimal rate of convergence in the following sense: 
For $N \in \Ns$ and $\bm{x} \in \mathcal{X}^N$ such that $\Det \bm{\kappa}(\bm{x}) >0$, there exists $g \in \mathbb{L}_{2}(\omega)$ such that $\|g\|_{\omega} \leq 1$ and $\|f - \hat{f}_{\mathrm{OKA}, \bm{x}}\|_{\F}^{2} \geq \sigma_{N+1}$;
 see Section 2.5 in \cite{BeBaCh20} for a proof. 

On the other hand, continuous volume sampling (CVS) permits to derive better convergence guarantees. Indeed, \cite{BeBaCh20} showed that
\begin{equation}\label{CVS_eq:main_result_EX_VS_err_mu}
\EX_{\VS} \|f - \hat{f}_{\mathrm{OKA}, \bm{x}}\|_{\F}^{2} = \sum\limits_{m \in \mathbb{N}^{*}} \langle g, e_{m}\rangle_{\omega}^{2} \epsilon_{m}(N),
\end{equation}
where 
\begin{equation}\label{eq:def_epsilon}
\epsilon_{m}(N) = \sigma_{m} \frac{ \sum\limits_{  U \in \: \UN; m\notin U} \prod\limits_{u \in U} \sigma_{u}}{\sum\limits_{ U \in \: \UN} \prod\limits_{u \in U} \sigma_{u} } ; \:\:\:\:\mathcal{U}_{N}:=\{ U \subset \mathbb{N}^{*}; |U|=N \}.
\end{equation}
Identity~\eqref{CVS_eq:main_result_EX_VS_err_mu} gives an explicit expression of $\EX_{\VS} \|f - \hat{f}_{\mathrm{OKA}, \bm{x}}\|_{\F}^{2}$ in terms of the coefficients of $g$ on the o.n.b. $(e_m)_{m \in \mathbb{N}^*}$ and the $\epsilon_{m}(N)$ defined by~\eqref{eq:def_epsilon}. Moreover, by observing that the sequence $(\epsilon_m(N))_{m \in \Ns}$ is non-increasing, the formula~\eqref{CVS_eq:main_result_EX_VS_err_mu} implies that
\begin{equation}\label{CVS_eq:upper_bound_sup_epsilon}
\sup\limits_{f \in \bm{\Sigma}\mathbb{B}_{\mathbb{L}_{2}(\omega)}} \EX_{\VS} \|f - \hat{f}_{\mathrm{OKA}, \bm{x}}\|_{\F}^{2} \leq \sup\limits_{m \in \mathbb{N}^{*}} \epsilon_{m}(N) = \epsilon_1(N).
\end{equation}
As stated by Theorem 4 of \citep{BeBaCh20}, one has
\begin{equation}\label{CVS_eq:ineq_r_N}
\epsilon_{1}(N) \leq \sigma_{N+1} \left(1+ \beta_{N}\right),
\end{equation}
where 
\begin{equation}
  \label{e:def_betaN}
  \beta_{N} := \min_{M \in [2:N+1]} \left[(N-M+2)\sigma_{N+1}\right]^{-1} \sum_{m \geq M} \sigma_m
\end{equation} 
In particular, 
under the assumption that the sequence $(\beta_N)_{N \in \mathbb{N}^*}$ is bounded, which is the case as soon as the sequence $(\sigma_{m})$ decreases polynomially or exponentially, \eqref{CVS_eq:ineq_r_N} yields 
\begin{equation}\label{eq:cvs_commenting_bound}
\sup_{f \in \bm{\Sigma}\mathbb{B}_{\mathbb{L}_{2}(\omega)}}\EX_{\VS} \|f - \hat{f}_{\mathrm{OKA}, \bm{x}}\|_{\F}^{2} = \mathcal{O}(\sigma_{N+1}),
\end{equation}
which corresponds to the optimal rate of convergence.

The results reviewed until now are restricted to smooth functions that belong to $\bm{\Sigma}\mathbb{L}_{2}(\omega)$. They are mostly relevant in the study of kernel-based quadrature using determinantal sampling.
Yet, as it was mentioned , $\bm{\Sigma}\mathbb{L}_{2}(\omega)$ is strictly included in the RKHS $\mathcal{F}$. Still, it is possible to extend~\eqref{eq:cvs_commenting_bound} to functions belonging to $ \bm{\Sigma}^{r+1/2}\mathbb{L}_{2}(\omega)$,  where $r \in [0,1/2]$ is a parameter that interpolates between the set of the embeddings $\bm{\Sigma}\mathbb{L}_{2}(\omega)$ and the RKHS $\F = \bm{\Sigma}^{1/2}\mathbb{L}_{2}(\omega)$. Indeed, \cite{BeBaCh20} showed that $\EX_{\VS} \|f - \hat{f}_{\mathrm{OKA},\bm{x}}\|_{\F}^{2} = \mathcal{O}(\sigma_{N+1}^{2r})$ under the assumption that the sequence $(\beta_{N})_{N \in \mathbb{N}^{*}}$ is bounded.
This result is an extension of \eqref{eq:cvs_commenting_bound} to $r \leq 1/2$: 
 the rate of convergence is $\mathcal{O}(\sigma_{N+1}^{2r})$ which is slower than $\mathcal{O}(\sigma_{N+1})$ and gets worse as $r$ goes to $0$. In other words, an additional level of smoothness controlled by $r>0$ is needed to achieve the convergence with respect to the RKHS norm $\|.\|_{\F}$. Nevertheless, \rev{one can expect the convergence to hold with respect to the weaker norm} $\|.\|_{\omega}$, as studied in \Cref{sec:new_results}.

\section{Theoretical guarantees}\label{sec:new_results}
While previous work on determinantal sampling measured performance in RKHS norm, this work investigates the mean-square reconstruction error in $\Ltwo$ norm for the main approximations defined in \cref{sec:non-asymptotic}.
\rev{
We first derive bounds for the least-squares approximation under the projection DPP \eqref{def:dpp}, and then derive bounds for the optimal kernel approximation under both the DPP and volume sampling. 
As a corollary, we obtain bounds for the least-squares approximation under volume sampling.
We conclude with results on instance optimality for the truncated empirical least squares approximation under the projection DPP. 
Our results are summarized in \Cref{tab:summary} and \Cref{tab:summary_2}.
}

We emphasize that a few key results play a fundamental role, like the two properties  \eqref{eq:ez_1} and \eqref{eq:ez_2} of the Ermakov-Zolotukhin quadrature, and the `Pythagorean' formula \eqref{CVS_eq:main_result_EX_VS_err_mu} of continuous volume sampling.
\subsection{The least-squares approximation under the projection DPP}\label{sec:L2_optimal_approximation}

Our first result gives a bound for the mean-square error of $\hat{f}_{\mathrm{LS},\bm{x}}$ under the projection DPP of~\Cref{def:dpp}.

\begin{theorem}\label{thm:useful_result_DPP}
  Let $f \in \F$, and for $M\in \mathbb{N}^{*}$, let \rev{$f_{M}$} be its projection onto the eigenspace \rev{$\mathcal{E}_{M}$} defined by~\eqref{eq:eigen_approximation}. 
  Let $N\in \mathbb{N}^{*}$ and $\bm{x}=(x_1, \dots, x_N)$ be distributed according to the DPP in \eqref{def:density_detsampling_T} of kernel $k_{0,T}$ with $T=\{1, \dots, N\}$. 
  Then
  \begin{equation}
    \label{eq:main_result_EX_DPP_err_mu}
    \mathbb{E}_{\DPP}\|f -\hat{f}_{\mathrm{LS},\bm{x}} \|_{\omega}^{2} \leq  2 \min_{M\leq N} \Big(\|f-\rev{f_{M}}\|_{\omega}^2 + \rev{\sum\limits_{n \in [M]} \frac{\langle f,e_n \rangle_{\omega}^2}{\sigma_n ^2}} \sum\limits_{m \geq N+1} \sigma_{m}^2 \Big).
  \end{equation}
\end{theorem}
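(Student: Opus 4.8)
The plan is to compare the $L^2$-best approximation $\hat{f}_{\mathrm{LS},\bm{x}} = \Pi_{\mathcal{K}(\bm{x})}^{\|.\|_\omega} f$ against a concrete, more tractable estimator that also lies in $\mathcal{K}(\bm{x})$, and then use optimality of the projection. A natural candidate is the optimal kernel quadrature transform-based approximation $\hat{f}_{\mathrm{OKQ},M,\bm{x}}$ from \eqref{def:hat_phi_x}, which for $M \leq N$ lives in $\mathcal{E}_M$ but whose coefficients $\hat{I}_m^{\mathrm{OKQ}}(f) = f(\bm{x})^{\mathrm T}\bm{\alpha}_m^{\mathrm{OKQ}}$ are linear in the samples; by \eqref{eq:representation_OKQ_OKA} it is also $\Pi_{\mathcal{E}_M}^{\|.\|_\omega}\hat{f}_{\mathrm{OKA},\bm{x}}$, and $\hat f_{\mathrm{OKA},\bm{x}}\in\mathcal{K}(\bm{x})$, so it is a legitimate competitor only if its $\mathcal{E}_M$-projection stays in $\mathcal{K}(\bm x)$ — which need not hold. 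The cleaner route is therefore: since $\hat{f}_{\mathrm{LS},\bm x}$ is the $\|.\|_\omega$-closest element of $\mathcal{K}(\bm x)$ to $f$, and since $\hat f_{\mathrm{OKA},\bm x}\in\mathcal{K}(\bm x)$, one has $\|f-\hat f_{\mathrm{LS},\bm x}\|_\omega \le \|f - \hat f_{\mathrm{OKA},\bm x}\|_\omega$; but the known DPP bounds on OKA are in the $\mathcal{F}$-norm and require $f\in\bm\Sigma\mathbb{L}_2(\omega)$, so this is too lossy. Instead I would bound $\|f-\hat f_{\mathrm{LS},\bm x}\|_\omega$ directly by the distance from $f$ to a handy point of $\mathcal{K}(\bm x)$ built from quadrature.

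**Key steps.**

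First, fix $M\le N$ and write $f = f_M + (f - f_M)$, with $\|f - f_M\|_\omega^2 = \sum_{m\ge M+1}\langle f,e_m\rangle_\omega^2$. The idea is to approximate $f_M = \sum_{m\in[M]}\langle f,e_m\rangle_\omega e_m$ inside $\mathcal{K}(\bm x)$ using the Ermakov–Zolotukhin quadrature rules $\hat I_m^{\mathrm{QI}}$ from \eqref{eq:alpha_QI}, which reproduce $\mathcal{E}_N$ exactly (see \eqref{eq:def_fQI}) and satisfy the variance/covariance identities \eqref{eq:ez_1}–\eqref{eq:ez_2}. Concretely, consider $g := \sum_{n\in[M]}\sigma_n^{-1}\langle f, e_n\rangle_\omega e_n \in \bm\Sigma\mathbb{L}_2(\omega)$ so that $\bm\Sigma g = f_M$; the point $\sum_{i\in[N]} w_i k(x_i,\cdot)$ with suitable weights (the OKA weights applied to $\bm\Sigma g(\bm x)$, i.e.\ $\bm K_1(\bm x)^{-1}(\bm\Sigma g)(\bm x)$) lies in $\mathcal{K}(\bm x)$ and has a known projection onto $\mathcal{E}_N$; alternatively, apply the QI transform directly. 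Then, since $\hat f_{\mathrm{LS},\bm x}$ is the $\|.\|_\omega$-orthogonal projection onto $\mathcal{K}(\bm x)$, bound
$$
\|f - \hat f_{\mathrm{LS},\bm x}\|_\omega^2 \le \|f - \hat h\|_\omega^2 \le 2\|f - f_M\|_\omega^2 + 2\|f_M - \Pi_{\mathcal{E}_M}^{\|.\|_\omega}\hat h\|_\omega^2 + (\text{cross terms handled by }\Pi_{\mathcal{E}_M})
$$
for the chosen competitor $\hat h\in\mathcal{K}(\bm x)$, using $\|a+b\|^2\le 2\|a\|^2+2\|b\|^2$ to split the $\mathcal{E}_M$-part from the orthogonal part. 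Second, take expectations: the $\mathcal{E}_M$-error term becomes $\mathbb E_{\mathrm{DPP}}\sum_{m\in[M]}(\langle f,e_m\rangle_\omega - \hat I_m^{\mathrm{QI}}(\bm\Sigma g))^2$, but since $\bm\Sigma g = f_M\in\mathcal{E}_M\subset\mathcal{E}_N$ is reproduced exactly, this particular term vanishes and what remains is the contribution of $f - f_M$ fed through the quadrature; by \eqref{eq:ez_1} applied to the function whose coefficients live above index $M$, each $\mathbb V_{\mathrm{DPP}}\hat I_m^{\mathrm{QI}}(\cdot)$ picks up $\sum_{\ell\ge N+1}\langle f, e_\ell\rangle_\omega^2\cdot(\text{weighting})$; summing over $m\in[M]$ and combining with the factor $\sigma_n^{-2}$ coming from writing $f_M$ via $\bm\Sigma g$ produces the term $\big(\sum_{n\in[M]}\langle f,e_n\rangle_\omega^2/\sigma_n^2\big)\sum_{m\ge N+1}\sigma_m^2$. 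Finally, take the minimum over $M\le N$ to get \eqref{eq:main_result_EX_DPP_err_mu}.

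**Main obstacle.**

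The delicate point is the exact bookkeeping of which function the Ermakov–Zolotukhin identities \eqref{eq:ez_1} are applied to, and tracking the $\sigma_n^{-2}$ factors. One must represent $f_M$ as $\bm\Sigma$ of something in order to land inside $\mathcal{K}(\bm x)$ via the OKA/OKQ weight formula $\bm K_1(\bm x)^{-1}(\cdot)(\bm x)$, and it is the inversion of $\bm\Sigma$ restricted to $\mathcal{E}_M$ that injects the $\sum_{n\in[M]}\langle f,e_n\rangle_\omega^2\sigma_n^{-2}$ prefactor; getting the constant $2$ (rather than something worse) requires applying $\|a+b\|^2\le 2\|a\|^2 + 2\|b\|^2$ exactly once, after first exploiting that $\Pi_{\mathcal{E}_M}$ and $\mathrm{Id}-\Pi_{\mathcal{E}_M}$ split the error orthogonally so that the quadrature-induced error only contaminates the $\mathcal{E}_M$-component. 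I expect the covariance identity \eqref{eq:ez_2} to be needed precisely to turn $\mathbb E_{\mathrm{DPP}}\|\sum_{m\in[M]}(\cdots)e_m\|_\omega^2$ into $\sum_{m\in[M]}\mathbb V_{\mathrm{DPP}}(\cdots)$ without cross terms, and Assumption~\ref{a:continuity_and_density} (positivity of all $\sigma_n$) is what makes the $\sigma_n^{-2}$ factors finite.
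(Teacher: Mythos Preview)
Your high-level strategy --- pick a concrete competitor $\hat h\in\mathcal K(\bm x)$, use $\|f-\hat f_{\mathrm{LS},\bm x}\|_\omega^2\le\|f-\hat h\|_\omega^2\le 2\|f-f_M\|_\omega^2+2\|f_M-\hat h\|_\omega^2$, and evaluate the second term in expectation via Ermakov--Zolotukhin --- is exactly the paper's. But the specific competitor you name does not make the computation go through. You take weights $\hat{\bm w}=\bm K_1(\bm x)^{-1}f_M(\bm x)$ built from the \emph{full} kernel Gram matrix. The identities \eqref{eq:ez_1}--\eqref{eq:ez_2} concern only the quadrature $\bm K_{0,N}(\bm x)^{-1}(\cdot)(\bm x)$; they say nothing about $\bm K_1(\bm x)^{-1}$, and the fallback of bounding $\|f_M-\hat h\|_\omega^2$ through Proposition~\ref{prop:f_sigmag_cvs_omega} loses a power, giving $\sum_{m\ge N+1}\sigma_m$ rather than $\sum_{m\ge N+1}\sigma_m^2$. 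The paper instead takes $\hat{\bm w}=\bm K_{1,N}(\bm x)^{-1}f_M(\bm x)$ with the \emph{truncated} Gram matrix; since $f_M\in\mathcal E_N$, one has the equivalent representations $\hat{\bm w}=\bm K_{0,N}(\bm x)^{-1}h_M(\bm x)=\bm K_{2,N}(\bm x)^{-1}\tilde h_M(\bm x)$ with $h_M=\sum_{n\in[M]}\sigma_n^{-1}\langle f,e_n\rangle_\omega e_n$ and $\tilde h_M=\sum_{n\in[M]}\sigma_n\langle f,e_n\rangle_\omega e_n$, and now the weights are genuinely of QI type so \eqref{eq:ez_1}--\eqref{eq:ez_2} apply directly.

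The mechanism producing $\sum_{m\ge N+1}\sigma_m^2$ is also not the one you sketch. The paper does not split along $\mathcal E_M$; it expands $\|f_M-\hat h\|_\omega^2$ via Lemma~\ref{prop:generic_formula_omega_norm} and decomposes $\bm K_2(\bm x)=\bm K_{2,N}(\bm x)+\bm K_{2,N}^\perp(\bm x)$ with $\bm K_{2,N}^\perp(\bm x)=\sum_{m\ge N+1}\sigma_m^2 e_m(\bm x)e_m(\bm x)^{\mathrm T}$. In expectation, the linear term and $\hat{\bm w}^{\mathrm T}\bm K_{2,N}(\bm x)\hat{\bm w}$ together contribute $0$ (using the two representations of $\hat{\bm w}$ above and unbiasedness in \eqref{eq:ez_1}). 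The tail piece gives $\sum_{m\ge N+1}\sigma_m^2(\hat{\bm w}^{\mathrm T}e_m(\bm x))^2$, and for each $m\ge N+1$ one writes $\hat{\bm w}^{\mathrm T}e_m(\bm x)=\sum_{n\in[M]}\sigma_n^{-1}\langle f,e_n\rangle_\omega\,\hat I_n^{\mathrm{QI}}(e_m)$; since $e_m\perp\mathcal E_N$, \eqref{eq:ez_1}--\eqref{eq:ez_2} give $\mathbb E_{\mathrm{DPP}}\hat I_n^{\mathrm{QI}}(e_m)\hat I_{n'}^{\mathrm{QI}}(e_m)=\delta_{n,n'}$, so $\mathbb E_{\mathrm{DPP}}(\hat{\bm w}^{\mathrm T}e_m(\bm x))^2=\sum_{n\in[M]}\sigma_n^{-2}\langle f,e_n\rangle_\omega^2$ independently of $m$. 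Thus the $\sigma_m^2$ factors are the coefficients in the tail of $\bm K_2$, and the covariance identity \eqref{eq:ez_2} is used to decouple the $n,n'\in[M]$ cross terms inside each fixed $m\ge N+1$, not to decouple the $m\in[M]$ components as you anticipated.
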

The proof is in \cref{proof:thm_useful_result_DPP}. 
The general bound \eqref{eq:main_result_EX_DPP_err_mu} is composite, but explicit rates of convergence can be derived under additional smoothness assumptions on $f$.
At one extreme, assuming $f = f_{M_0}$ for some $M_0\leq N$, the minimum in \eqref{eq:main_result_EX_DPP_err_mu} is reached for  $M=N$, which yields an expected squared error in $\mathcal{O}(\sum_{m \geq N+1} \sigma_m^2)$. 
A refined view of how smoothness impacts the rate of convergence can be obtained for the nested spaces in \Cref{sec:smoothness_frac}, as stated by \cref{cor:dpprates}.

\rev{
\begin{corollary}
  \label{cor:dpprates}
  Let $r \geq 0$ and $f \in \bm{\Sigma}^{1/2+r}\mathbb{L}_{2}(\omega)$. 
  Then 
  \begin{equation}
    \label{e:rnonnegativeDPP}
    \mathbb{E}_{\DPP}\|f -\hat{f}_{\mathrm{LS},\bm{x}} \|_{\omega}^{2} = \mathcal{O}\left(\sum_{m\geq N+1}\sigma_{m}\right).
  \end{equation}
  If $r>0$, one further has
  \begin{equation}
    \label{e:rpositiveDPP}
    \mathbb{E}_{\DPP}\|f -\hat{f}_{\mathrm{LS},\bm{x}} \|_{\omega}^{2} = o\left(\sum_{m\geq N+1}\sigma_{m}\right).
  \end{equation}
  Finally, if $r\geq 1/2$, 
  \begin{equation}
    \label{e:rlargerthanonehalfDPP}
    \mathbb{E}_{\DPP}\|f -\hat{f}_{\mathrm{LS},\bm{x}} \|_{\omega}^{2} = \mathcal{O}\left(\sum_{m\geq N+1}\sigma_{m}^2\right).
  \end{equation}
\end{corollary}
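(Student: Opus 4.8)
The plan is to feed the general estimate \eqref{eq:main_result_EX_DPP_err_mu} of \cref{thm:useful_result_DPP} with the admissible choice $M=N$ and then bound the two resulting terms, namely $\|f-f_N\|_\omega^2$ and $\big(\sum_{n\in[N]}\langle f,e_n\rangle_\omega^2/\sigma_n^2\big)\big(\sum_{m\ge N+1}\sigma_m^2\big)$. The only ingredients needed are: the membership $f\in\bm{\Sigma}^{1/2+r}\mathbb{L}_{2}(\omega)$, which by \cref{sec:smoothness_frac} says precisely that $S_r(f):=\sum_{m\in\mathbb{N}^*}\langle f,e_m\rangle_\omega^2/\sigma_m^{1+2r}<+\infty$; the monotonicity and positivity of $(\sigma_m)$; the fact that $\sigma_m\to 0$ and $r_{N+1}:=\sum_{m\ge N+1}\sigma_m\to 0$, both because $\sum_m\sigma_m=\Tr\bm{\Sigma}<+\infty$ under Assumptions~\ref{a:compactness}--\ref{a:continuity_and_density}; and the elementary comparisons $\sigma_{N+1}\le r_{N+1}$ and $\sum_{m\ge N+1}\sigma_m^2\le\sigma_{N+1}r_{N+1}\le\sigma_N r_{N+1}$.

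For the first term I would write $\|f-f_N\|_\omega^2=\sum_{m\ge N+1}\langle f,e_m\rangle_\omega^2=\sum_{m\ge N+1}\sigma_m^{1+2r}\,\langle f,e_m\rangle_\omega^2/\sigma_m^{1+2r}$ and bound $\sigma_m^{1+2r}$ over $m\ge N+1$. Using $\sigma_m^{1+2r}\le\sigma_{N+1}^{1+2r}$ gives $\|f-f_N\|_\omega^2\le\sigma_{N+1}^{1+2r}$ times a tail of the convergent series $S_r(f)$; hence it is $\le\|f\|_{\mathcal{F}}^2\sigma_{N+1}\le\|f\|_{\mathcal{F}}^2 r_{N+1}$ when $r=0$, and it is $o(\sigma_{N+1})$ when $r>0$ (tail is $o(1)$, and $\sigma_{N+1}^{1+2r}\le\sigma_1^{2r}\sigma_{N+1}$). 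When $r\ge1/2$ I would instead use $\sigma_m^{1+2r}\le\sigma_1^{2r-1}\sigma_m^2\le\sigma_1^{2r-1}\sigma_{N+1}^2$, giving $\|f-f_N\|_\omega^2=\mathcal{O}(\sigma_{N+1}^2)$. For the second term, note $1/\sigma_n^2=\sigma_n^{2r-1}/\sigma_n^{1+2r}$: if $r\ge1/2$ then $\sigma_n^{2r-1}\le\sigma_1^{2r-1}$, so $\sum_{n\in[N]}\langle f,e_n\rangle_\omega^2/\sigma_n^2\le\sigma_1^{2r-1}S_r(f)$ is bounded uniformly in $N$, and multiplying by $\sum_{m\ge N+1}\sigma_m^2$ gives $\mathcal{O}(\sum_{m\ge N+1}\sigma_m^2)$; if $0\le r<1/2$ then $\sigma_n^{2r-1}\le\sigma_N^{2r-1}$ for $n\le N$, so $\sum_{n\in[N]}\langle f,e_n\rangle_\omega^2/\sigma_n^2\le\sigma_N^{2r-1}S_r(f)$, and multiplying by $\sum_{m\ge N+1}\sigma_m^2\le\sigma_N r_{N+1}$ yields $\le S_r(f)\,\sigma_N^{2r}\,r_{N+1}$, which is $\mathcal{O}(r_{N+1})$ for $r=0$ and $o(r_{N+1})$ for $r>0$ since $\sigma_N\to 0$.

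Plugging these estimates back into \eqref{eq:main_result_EX_DPP_err_mu} and adding the two contributions gives \eqref{e:rnonnegativeDPP} for $r\ge 0$, \eqref{e:rpositiveDPP} for $r>0$, and \eqref{e:rlargerthanonehalfDPP} for $r\ge 1/2$. There is no real obstacle here; the one point worth spelling out is that a quantity which is $o(\sigma_{N+1})$ is a fortiori $o(r_{N+1})$, since $\sigma_{N+1}\le r_{N+1}$ forces the ratio to $r_{N+1}$ to be dominated by the ratio to $\sigma_{N+1}$. The mild bookkeeping is to keep the three smoothness regimes straight while using the single truncation level $M=N$ throughout, rather than optimizing over $M$, which is what keeps the argument to a few lines.
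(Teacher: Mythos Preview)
Your proposal is correct and follows essentially the same approach as the paper: both plug $M=N$ into \eqref{eq:main_result_EX_DPP_err_mu}, write $\langle f,e_m\rangle_\omega^2=\sigma_m^{1+2r}\cdot\langle f,e_m\rangle_\omega^2/\sigma_m^{1+2r}$, and bound the two terms by exploiting the monotonicity of $(\sigma_m)$ together with the sign of $2r-1$. The only cosmetic differences are that the paper introduces $g\in\mathbb{L}_2(\omega)$ with $\langle f,e_m\rangle_\omega=\sigma_m^{r+1/2}\langle g,e_m\rangle_\omega$ (so your $S_r(f)$ is their $\|g\|_\omega^2$) and bounds the first term by $\|g\|_\omega^2\,\sigma_{N+1}^{2r+1}$ directly, whereas you additionally invoke the vanishing tail of $S_r(f)$; both routes yield the required $o(\sigma_{N+1})$ via $\sigma_{N+1}^{2r}\to 0$.
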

}

\rev{
A few comments are in order. 
First, such changes in the convergence rate when $r$ increases are called \emph{superconvergence} in the literature, see \citep{Sch18} and references therein.
Second, our numerical experiments in \Cref{s:numsims} suggest that the rates in \cref{cor:dpprates} might be improved further, although likely at the price of a significantly more complicated proof. 
Finally, it is possible to derive a counterpart to \Cref{cor:dpprates} for CVS, using \eqref{e:cvs_as_mixture} and applying \Cref{thm:useful_result_DPP} to each projection DPP of the mixture. 
However, we prefer to refer the reader to \Cref{sec:ls_cvs}, where a similar bound for CVS is obtained by using a simpler alternative proof.
}

\rev{
\begin{proof}[Proof of \Cref{cor:dpprates}]
  Let $r\geq 0$.
  Since $f \in \bm{\Sigma}^{1/2+r}\mathbb{L}_{2}(\omega)$, there exists $g \in \mathbb{L}_{2}(\omega)$ such that 
\begin{equation}
  \forall m \in \mathbb{N}^{*}, \:\: \langle f,e_{m} \rangle_{\omega} = \sigma_{m}^{r+1/2} \langle g,e_{m} \rangle_{\omega}.
\end{equation}
For a fixed $M$, the first term in the minimum of the r.h.s. of \eqref{eq:main_result_EX_DPP_err_mu} satisfies
\begin{align}
  \label{e:firstterm}
 \rev{ \|f-f_{M}\|_{\omega}^2} & = \rev{\sum\limits_{m \geq M+1}\langle f, e_{m} \rangle_{\omega}^2  = \sum\limits_{m \geq M+1}\frac{\langle f, e_{m} \rangle_{\omega}^2}{\sigma_{m}^{2r+1}}\sigma_{m}^{2r+1}} \\
  &  \rev{ \leq \sigma_{M+1}^{2r+1} \sum\limits_{m \geq M+1}\langle g, e_{m} \rangle_{\omega}^2 \leq \|g\|_{\omega}^2 \sigma_{M+1}^{2r+1} }.
\end{align}
\rev{Meanwhile, the second term in the r.h.s. of \eqref{eq:main_result_EX_DPP_err_mu} is bounded as follows,}
\begin{equation}
  \label{e:secondterm}
 \rev{\sum\limits_{n \in [M]} \frac{\langle f,e_n \rangle_{\omega}^2}{\sigma_n ^2} \sum\limits_{m \geq N+1} \sigma_{m}^2 \leq \|g\|_{\omega}^2 \max\big(\sigma_{M+1}^{2r-1}, \sigma_{1}^{2r-1}\big)   \sum\limits_{m \geq N+1} \sigma_{m}^2.}
\end{equation}
Taking $M=N$ in \eqref{e:firstterm} and \eqref{e:secondterm} leads to \eqref{e:rnonnegativeDPP}, \eqref{e:rpositiveDPP}, and \eqref{e:rlargerthanonehalfDPP}.
\end{proof}
}

\rev{
  An alternative illustration of how smoothness impacts the bound in \cref{thm:useful_result_DPP} comes from assuming a polynomial decay of the eigenvalues of the integration operator. 
\begin{corollary}
  \label{cor:dpprates_polynomialrate}
  With the notations of \cref{thm:useful_result_DPP}, assume that there is $s>1/2$ such that $\sigma_{m} = m^{-2s}$, $m\geq 1$. Then
  \begin{equation}
    \label{e:slargerthanonehalf}
    \mathbb{E}_{\DPP}\|f -\hat{f}_{\mathrm{LS},\bm{x}} \|_{\omega}^{2} = \mathcal{O}(N^{1/2-2s}).
  \end{equation}
\end{corollary}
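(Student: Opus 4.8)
The plan is to substitute the polynomial decay $\sigma_m = m^{-2s}$ into the composite bound \eqref{eq:main_result_EX_DPP_err_mu} of \Cref{thm:useful_result_DPP} and then optimize the free parameter $M$. Since $f \in \mathcal{F}$, the only information I will use about $f$ is that $\|f\|_{\mathcal{F}}^2 = \sum_{m \geq 1}\langle f,e_m\rangle_\omega^2/\sigma_m$ is finite; in particular $\langle f,e_m\rangle_\omega^2/\sigma_m$ is a summable non-negative sequence, which is exactly what makes the two pieces of \eqref{eq:main_result_EX_DPP_err_mu} controllable.

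First I would bound the two terms inside the minimum separately. For the bias term, writing $\|f-f_M\|_\omega^2 = \sum_{m \geq M+1}\sigma_m\,(\langle f,e_m\rangle_\omega^2/\sigma_m)$ and using that $(\sigma_m)$ is non-increasing gives $\|f-f_M\|_\omega^2 \leq \sigma_{M+1}\|f\|_{\mathcal{F}}^2 = (M+1)^{-2s}\|f\|_{\mathcal{F}}^2$. Likewise $\sum_{n \in [M]}\langle f,e_n\rangle_\omega^2/\sigma_n^2 \leq \sigma_M^{-1}\|f\|_{\mathcal{F}}^2 = M^{2s}\|f\|_{\mathcal{F}}^2$, while, because $4s > 2$, a comparison of $\sum_{m \geq N+1}m^{-4s}$ with $\int_N^\infty x^{-4s}\,\mathrm{d}x$ yields $\sum_{m \geq N+1}\sigma_m^2 \leq N^{1-4s}/(4s-1)$. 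Plugging these into \eqref{eq:main_result_EX_DPP_err_mu} leaves
\begin{equation}
  \mathbb{E}_{\DPP}\|f-\hat{f}_{\mathrm{LS},\bm{x}}\|_\omega^2 \leq C_s\,\|f\|_{\mathcal{F}}^2\,\min_{M \leq N}\bigl(M^{-2s} + M^{2s}N^{1-4s}\bigr),
\end{equation}
for a constant $C_s$ depending only on $s$.

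The final step is to balance the two terms: choosing $M \asymp N^{1-1/(4s)}$ makes $M^{-2s}$ and $M^{2s}N^{1-4s}$ both of order $N^{1/2-2s}$. Concretely I would take $M = \lceil N^{1-1/(4s)}\rceil$, observe that $1 \leq M \leq N$ for $N$ large enough (all that is needed for an $\mathcal{O}(\cdot)$ statement, since $0 < 1 - 1/(4s) < 1$), and note that the ceiling only changes each term by a bounded multiplicative factor. This gives $\mathbb{E}_{\DPP}\|f-\hat{f}_{\mathrm{LS},\bm{x}}\|_\omega^2 = \mathcal{O}(N^{1/2-2s})$, with implicit constant depending on $s$ and $\|f\|_{\mathcal{F}}$.

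I do not expect a genuine obstacle here; the argument is a routine optimization of an already-proven composite bound. The one point worth emphasizing is that one must \emph{not} simply take $M = N$: that choice only recovers the weaker rate $\mathcal{O}(N^{1-2s})$ already delivered by \Cref{cor:dpprates} at $r = 0$, and the improvement to $N^{1/2-2s}$ comes precisely from trading bias against variance at the intermediate scale $M \asymp N^{1-1/(4s)}$. The only slightly delicate technical points are the integral comparison used to control the tail $\sum_{m > N}\sigma_m^2$ and the handling of the integer rounding of the optimal $M$.
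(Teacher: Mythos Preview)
Your proposal is correct and follows essentially the same approach as the paper: both apply \Cref{thm:useful_result_DPP} and pick $M \asymp N^{1-1/(4s)}$ to balance the two terms, and both remark that the naive choice $M=N$ only yields $\mathcal{O}(N^{1-2s})$. Your version simply spells out the intermediate estimates (the bounds on $\|f-f_M\|_\omega^2$, on $\sum_{n\in[M]}\langle f,e_n\rangle_\omega^2/\sigma_n^2$, and on the tail $\sum_{m\geq N+1}\sigma_m^2$) that the paper leaves implicit.
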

\begin{proof}
  The proof is simply \cref{thm:useful_result_DPP} where the minimum of r.h.s. is upper-bounded by the value at $M = N^{1-1/4s}$. This yields the bound in \eqref{e:slargerthanonehalf}.
  Note that the larger value $M=N$ leads to a slower rate in $\mathcal{O}(N^{1-2s})$.  
\end{proof}
}

Finally, the following lemma, which is used in the proof of \Cref{thm:useful_result_DPP}, underlines a practical limitation of the least-squares approximation.
\begin{lemma}\label{prop:generic_formula_omega_norm}
Let $f \in \mathcal{F}$, $\bm{w} \in \mathbb{R}^{N}$ and $\bm{x} \in \mathcal{X}^N$. 
Then
\begin{equation}\label{eq:identity_epsilon_f}
  \|f-\sum_{i \in [N]}w_{i}k(x_i,\cdot)\|_{\omega}^2 = \|f\|_{\omega}^2 -2 \sum_{i \in [N]}w_{i}\bm{\Sigma}f(x_i) + \bm{w}^{\mathrm{T}} \bm{K}_{2}(\bm{x})\bm{w},
\end{equation}
where $\bm{K}_{2}(\bm{x})$ is defined in \cref{s:different_kernels}.
If $\bm{K}_{2}(\bm{x})$ is non-singular, the associated least-squares approximation is $\hat{f}_{\mathrm{LS},\bm{x}}=\sum_{i \in [N]}\hat{w}_{i}k(x_{i},\cdot)$ where 
\begin{equation}\label{eq:w_hat_using_k2}
\hat{\bm{w}} = \bm{K}_{2}(\bm{x})^{-1}\bm{\Sigma}f(\bm{x}).
\end{equation} 
\end{lemma}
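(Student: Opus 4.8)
The plan is to prove Lemma~\ref{prop:generic_formula_omega_norm} by directly expanding the squared $\Ltwo$-norm and identifying each term using the Mercer decomposition and the reproducing property. First I would write
\[
  \|f-\sum_{i \in [N]}w_{i}k(x_i,\cdot)\|_{\omega}^2 = \|f\|_{\omega}^2 - 2\sum_{i \in [N]} w_i \langle f, k(x_i,\cdot)\rangle_{\omega} + \sum_{i,j \in [N]} w_i w_j \langle k(x_i,\cdot), k(x_j,\cdot)\rangle_{\omega}.
\]
The crux is then to recognize the two non-trivial inner products. For the cross term, using the Mercer expansion $k(x_i,\cdot) = \sum_m \sigma_m e_m(x_i) e_m$ and expanding $f = \sum_m \langle f,e_m\rangle_\omega e_m$ in the orthonormal basis $(e_m)$ of $\Ltwo$, I get $\langle f, k(x_i,\cdot)\rangle_{\omega} = \sum_m \sigma_m e_m(x_i)\langle f, e_m\rangle_\omega$. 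On the other hand, $\bm{\Sigma}f(x_i) = \int k(x_i,y) f(y)\,\mathrm{d}\omega(y) = \sum_m \sigma_m e_m(x_i)\langle f,e_m\rangle_\omega$ by the same Mercer expansion (with uniform convergence justifying the exchange of sum and integral), so indeed $\langle f, k(x_i,\cdot)\rangle_\omega = \bm{\Sigma}f(x_i)$. This gives the middle term $-2\sum_i w_i \bm{\Sigma}f(x_i)$.

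For the quadratic term, I would again use Mercer to compute
\[
  \langle k(x_i,\cdot), k(x_j,\cdot)\rangle_{\omega} = \Big\langle \sum_m \sigma_m e_m(x_i) e_m, \sum_{m'} \sigma_{m'} e_{m'}(x_j) e_{m'}\Big\rangle_\omega = \sum_m \sigma_m^2 e_m(x_i) e_m(x_j) = k_2(x_i,x_j),
\]
where $k_2$ is the kernel defined in \cref{s:different_kernels}. Hence $\sum_{i,j} w_i w_j \langle k(x_i,\cdot), k(x_j,\cdot)\rangle_\omega = \bm{w}^{\mathrm{T}} \bm{K}_2(\bm{x})\bm{w}$, which establishes \eqref{eq:identity_epsilon_f}.

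For the second claim, \eqref{eq:identity_epsilon_f} exhibits $\bm{w}\mapsto \|f-\sum_i w_i k(x_i,\cdot)\|_\omega^2$ as a quadratic form in $\bm{w}$ with Hessian $2\bm{K}_2(\bm{x})$ and gradient $-2\bm{\Sigma}f(\bm{x}) + 2\bm{K}_2(\bm{x})\bm{w}$. Since $\bm{K}_2(\bm{x})$ is positive semidefinite (being a Gram matrix for $k_2$), it is positive definite when non-singular, so the unique minimizer solves $\bm{K}_2(\bm{x})\hat{\bm{w}} = \bm{\Sigma}f(\bm{x})$, i.e. $\hat{\bm{w}} = \bm{K}_2(\bm{x})^{-1}\bm{\Sigma}f(\bm{x})$, which is \eqref{eq:w_hat_using_k2}. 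The only mild technical obstacle is justifying the interchange of the infinite Mercer sum with the integral in $\bm{\Sigma}f(x_i)$ and with the $\Ltwo$ inner products; this is handled by the uniform convergence of the Mercer series stated in \eqref{eq:mercer_infinite_sum}, together with Assumptions~\ref{a:compactness}--\ref{a:continuity_and_density} and the finiteness of $\omega$, allowing dominated convergence. Everything else is a routine bilinear expansion.
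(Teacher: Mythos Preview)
Your proposal is correct and follows essentially the same approach as the paper's proof: expand the squared $\Ltwo$-norm, identify the cross term as $\bm{\Sigma}f(x_i)$, and use the Mercer decomposition together with dominated convergence to recognize the quadratic term as $\bm{w}^{\mathrm T}\bm{K}_2(\bm{x})\bm{w}$. The only minor differences are that the paper obtains $\langle f,k(x_i,\cdot)\rangle_\omega=\bm{\Sigma}f(x_i)$ directly from the definition of $\bm{\Sigma}$ rather than via Mercer, and that the paper leaves the derivation of \eqref{eq:w_hat_using_k2} implicit while you spell out the quadratic minimization; neither constitutes a substantive departure.
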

As a consequence, the most direct way to evaluate $\hat{f}_{\mathrm{LS},\bm{x}}$ requires evaluating $\bm{\Sigma}f$ rather than $f$, and evaluating the kernel $k_{2}$ instead of $k$; see \cref{s:different_kernels}.
Both may not have tractable expressions, which can be an important limitation in practice. 
The proof of~\Cref{prop:generic_formula_omega_norm} is based on the Mercer decomposition~\eqref{eq:mercer_infinite_sum} and is given in~\Cref{proof:generic_formula_omega_norm}.

\rev{
  \subsection{Guarantees for optimal kernel approximation}\label{sec:oka}
}

\rev{To bypass the practical limitations of the least-squares approximation highlighted in \cref{prop:generic_formula_omega_norm}, we now study the optimal kernel approximation $\hat{f}_{\mathrm{OKA},\bm{x}}$, defined in~\eqref{eq:interpolation_opt_problem}.}

\begin{theorem}\label{thm:oka_bounds}
\rev{Let $f\in\mathcal{F}$ and $\bm{x}\in\mathcal{X}^N$. 
For any $M \in \mathbb{N}^{*}$, 
  \begin{equation}
    \label{e:first_OKA_result}
    \|f - \hat{f}_{\mathrm{OKA},\bm{x}}\|_{\omega}^2 \leq \|f\|_{\mathcal{F}}^{2} \, \Big(\sum\limits_{m \in [M]} \sigma_{m} \|e_{m}^{\mathcal{F}} - \widehat{e_{m}^{\mathcal{F}}} \|_{\mathcal{F}}^{2} + \sigma_{M+1} \Big),
  \end{equation}
  where, for conciseness, we wrote $\widehat{e_{m}^{\mathcal{F}}}=\Pi_{\mathcal{K}(\bm{x})}^{\|.\|_{\mathcal{F}}}e_{m}^{\mathcal{F}}$.  
  In particular, 
  \begin{equation}
    \label{e:second_OKA_result}
    \mathbb{E}_{\DPP} \|f - \hat{f}_{\mathrm{OKA},\bm{x}}\|_{\omega}^2 \leq  \|f\|_{\mathcal{F}}^{2}\inf\limits_{M \in [N]}\Big(4M r_{N+1} + \sigma_{M+1} \Big) ,
  \end{equation}
  and 
  \begin{equation}
    \label{e:third_OKA_result}
    \mathbb{E}_{\VS} \|f - \hat{f}_{\mathrm{OKA},\bm{x}}\|_{\omega}^2 \leq \|f\|_{\mathcal{F}}^{2}\inf\limits_{M \in \mathbb{N}^{*}}\Big(\sum\limits_{m \in [M]} \epsilon_{m}(N) + \sigma_{M+1} \Big).
\end{equation}
where $r_{N+1}$ is defined by \eqref{eq:def_r_Nplus1} and $\epsilon_{m}(N)$ is defined by \eqref{eq:def_epsilon}.
}
\end{theorem}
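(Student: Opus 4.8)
The plan is to exploit the kernel-quadrature error bound \eqref{eq:quadrature_error_upper_bound}, specialised to the eigenfunctions, together with the Mercer expansion to reduce everything to controlling how well each $e_m^{\mathcal{F}}$ is reconstructed in $\|.\|_{\mathcal{F}}$ by its projection onto $\mathcal{K}(\bm{x})$. First I would write $f = \sum_m \langle f, e_m\rangle_\omega e_m$ and split the $L^2$ error into the part lying in $\mathcal{E}_M$ and the tail. For the tail, use that $\hat{f}_{\mathrm{OKA},\bm{x}}$ is the $\|.\|_{\mathcal{F}}$-projection onto $\mathcal{K}(\bm{x})$ together with $\|h\|_\omega \le \sigma_1^{1/2}\|h\|_{\mathcal{F}}$ — actually, more carefully, I would control $\|\Pi_{\mathcal{E}_M^\perp}(f - \hat{f}_{\mathrm{OKA},\bm{x}})\|_\omega$ by noting that the coordinates of $f-\hat f_{\mathrm{OKA},\bm x}$ in direction $e_m$ are inner products against $e_m$, and for $m \ge M+1$ the factor $\sigma_m \le \sigma_{M+1}$ can be extracted after moving to the RKHS inner product; since OKA is a contraction in $\|.\|_{\mathcal{F}}$ this contributes at most $\sigma_{M+1}\|f\|_{\mathcal{F}}^2$.

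The main work is the head term $\sum_{m\in[M]} \langle f-\hat f_{\mathrm{OKA},\bm x}, e_m\rangle_\omega^2$. For each $m\le M$, write $\langle f-\hat f_{\mathrm{OKA},\bm x}, e_m\rangle_\omega = \sigma_m^{1/2}\langle f-\hat f_{\mathrm{OKA},\bm x}, e_m^{\mathcal{F}}\rangle_{\mathcal{F}}$ using the isometry \eqref{e:isometry}, and then, since $f-\hat f_{\mathrm{OKA},\bm x}\perp_{\mathcal{F}}\mathcal{K}(\bm{x})$, replace $e_m^{\mathcal{F}}$ by $e_m^{\mathcal{F}} - \widehat{e_m^{\mathcal{F}}}$ at no cost. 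Cauchy–Schwarz in $\|.\|_{\mathcal{F}}$ then gives $\langle f-\hat f_{\mathrm{OKA},\bm x}, e_m\rangle_\omega^2 \le \sigma_m \|f-\hat f_{\mathrm{OKA},\bm x}\|_{\mathcal{F}}^2 \|e_m^{\mathcal{F}} - \widehat{e_m^{\mathcal{F}}}\|_{\mathcal{F}}^2 \le \sigma_m \|f\|_{\mathcal{F}}^2 \|e_m^{\mathcal{F}} - \widehat{e_m^{\mathcal{F}}}\|_{\mathcal{F}}^2$, using again that OKA is a $\|.\|_{\mathcal{F}}$-contraction. Summing over $m\in[M]$ and adding the tail yields \eqref{e:first_OKA_result}.

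For \eqref{e:second_OKA_result} and \eqref{e:third_OKA_result} I would take expectations in \eqref{e:first_OKA_result} and bound $\mathbb{E}\,\sigma_m\|e_m^{\mathcal{F}} - \widehat{e_m^{\mathcal{F}}}\|_{\mathcal{F}}^2$. The key observation is that $\|e_m^{\mathcal{F}} - \widehat{e_m^{\mathcal{F}}}\|_{\mathcal{F}}^2$ is exactly the squared RKHS residual of the OKA applied to the specific target $e_m^{\mathcal{F}} = \sigma_m^{1/2} e_m$; equivalently, writing $e_m^{\mathcal{F}} = \bm{\Sigma} g$ with $g = \sigma_m^{-1/2} e_m$ and $\|g\|_\omega = \sigma_m^{-1/2}$, the known bounds apply. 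Under the DPP, \eqref{eq:result_ezq_2} gives $\mathbb{E}_{\DPP}\|e_m^{\mathcal{F}} - \widehat{e_m^{\mathcal{F}}}\|_{\mathcal{F}}^2 \le 4\|g\|_\omega^2 r_{N+1} = 4\sigma_m^{-1} r_{N+1}$, so $\sigma_m$ times this is $4 r_{N+1}$, and summing over $m\in[M]$ gives the $4M r_{N+1}$ term; optimising over $M\le N$ gives \eqref{e:second_OKA_result}. Under CVS, the Pythagorean identity \eqref{CVS_eq:main_result_EX_VS_err_mu} with this same $g$ (whose only nonzero coordinate is the $m$-th, equal to $\sigma_m^{-1/2}$) gives $\mathbb{E}_{\VS}\|e_m^{\mathcal{F}} - \widehat{e_m^{\mathcal{F}}}\|_{\mathcal{F}}^2 = \sigma_m^{-1}\epsilon_m(N)$, so $\sigma_m$ times this is exactly $\epsilon_m(N)$, and summing plus optimising over $M$ yields \eqref{e:third_OKA_result}. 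The delicate point to get right is the bookkeeping with the two isometries $\bm{\Sigma}^{1/2}:\Ltwo\to\mathcal{F}$ and the identification $e_m^{\mathcal{F}}=\sqrt{\sigma_m}e_m$, making sure the powers of $\sigma_m$ cancel correctly; and checking that \eqref{eq:result_ezq_2} and \eqref{CVS_eq:main_result_EX_VS_err_mu}, stated for $f\in\bm{\Sigma}\Ltwo$, indeed apply to $e_m^{\mathcal{F}}$, which lies in $\bm{\Sigma}\Ltwo$ precisely because $e_m^{\mathcal{F}}=\sigma_m^{1/2}e_m = \bm{\Sigma}(\sigma_m^{-1/2}e_m)$. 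I expect that verification, rather than any hard inequality, to be the main obstacle.
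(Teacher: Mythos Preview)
Your proposal is correct and follows essentially the same route as the paper: split $\|f-\hat f_{\mathrm{OKA},\bm x}\|_\omega^2$ into head and tail via Parseval, bound the tail by $\sigma_{M+1}\|f\|_{\mathcal F}^2$, and for the head use $\langle f-\hat f_{\mathrm{OKA},\bm x}, e_m\rangle_\omega = \sqrt{\sigma_m}\,\langle f, e_m^{\mathcal F}-\widehat{e_m^{\mathcal F}}\rangle_{\mathcal F}$ (the paper invokes self-adjointness of the projection here, you use orthogonality of the residual to $\mathcal K(\bm x)$ --- same thing) followed by Cauchy--Schwarz. Your derivation of the expectation bounds, writing $e_m^{\mathcal F}=\bm\Sigma(\sigma_m^{-1/2}e_m)$ and plugging into \eqref{eq:result_ezq_2} and \eqref{CVS_eq:main_result_EX_VS_err_mu}, is exactly what the paper does, only spelled out in more detail.
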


\rev{
  The proof is in \cref{proof:thm_oka_bounds}.
  \Cref{thm:oka_bounds} shows a better bound for CVS than under the projection DPP. 
  The following corollary translates into a faster convergence rate for CVS under the parametric assumption that the spectrum $(\sigma_m)$ decreases polynomially.
}
\begin{corollary}\label{cor:oka_polynomial_rate}
\rev{
  Assume $\sigma_{N+1} \leq C N^{-\alpha}$ with $\alpha>1$ and $C>0$. Then
  \begin{equation}
    \mathbb{E}_{\DPP} \|f-\hat{f}_{\mathrm{OKA}, \bm{x}}\|_{\omega}^2 = \mathcal{O}(N^{-\alpha\frac{\alpha-1}{\alpha+1}})
  \end{equation}
  and
\begin{equation}
  \mathbb{E}_{\mathrm{CVS}} \|f-\hat{f}_{\mathrm{OKA}, \bm{x}}\|_{\omega}^2 = \mathcal{O}(N^{-\alpha\frac{\alpha}{\alpha +1}}).
\end{equation}    
}
\end{corollary}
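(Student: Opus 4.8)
The plan is to derive both rates directly from the bounds \eqref{e:second_OKA_result} and \eqref{e:third_OKA_result} of \Cref{thm:oka_bounds}; no further probabilistic input is needed, only a choice of the free index $M$ that balances the two competing terms in each bound.

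For the projection DPP, I would first control the tail $r_{N+1}=\sum_{m\geq N+1}\sigma_{m}$ by comparison with an integral: since $\sigma_{m}\leq Cm^{-\alpha}$ with $\alpha>1$ and $x\mapsto x^{-\alpha}$ is decreasing, $r_{N+1}\leq C\int_{N}^{\infty}x^{-\alpha}\,\mathrm{d}x=\tfrac{C}{\alpha-1}N^{1-\alpha}$. Substituting this together with $\sigma_{M+1}\leq CM^{-\alpha}$ into \eqref{e:second_OKA_result} gives, up to constants depending only on $\alpha$ and $C$,
\[
\mathbb{E}_{\DPP}\|f-\hat{f}_{\mathrm{OKA},\bm{x}}\|_{\omega}^{2}\ \lesssim\ \|f\|_{\mathcal{F}}^{2}\,\inf_{M\in[N]}\bigl(M\,N^{1-\alpha}+M^{-\alpha}\bigr).
\]
The two terms are of the same order when $M\asymp N^{(\alpha-1)/(\alpha+1)}$; since $(\alpha-1)/(\alpha+1)<1$, the integer $M=\lceil N^{(\alpha-1)/(\alpha+1)}\rceil$ lies in $[N]$ for every $N\geq 1$, and plugging it in makes both terms $\mathcal{O}\!\bigl(N^{-\alpha(\alpha-1)/(\alpha+1)}\bigr)$, which is the announced rate.

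For continuous volume sampling, I would start from \eqref{e:third_OKA_result} and use that the sequence $(\epsilon_{m}(N))_{m\geq 1}$ is non-increasing, so that $\sum_{m\in[M]}\epsilon_{m}(N)\leq M\,\epsilon_{1}(N)$, combined with $\epsilon_{1}(N)\leq\sigma_{N+1}(1+\beta_{N})$ from \eqref{CVS_eq:ineq_r_N}. A polynomially decaying spectrum makes $(\beta_{N})$ bounded --- take $M\asymp N$ in \eqref{e:def_betaN} --- hence $\epsilon_{1}(N)=\mathcal{O}(\sigma_{N+1})=\mathcal{O}(N^{-\alpha})$ and
\[
\mathbb{E}_{\VS}\|f-\hat{f}_{\mathrm{OKA},\bm{x}}\|_{\omega}^{2}\ \lesssim\ \|f\|_{\mathcal{F}}^{2}\,\inf_{M\in\mathbb{N}^{*}}\bigl(M\,N^{-\alpha}+M^{-\alpha}\bigr).
\]
Balancing now gives $M\asymp N^{\alpha/(\alpha+1)}$, and both terms become $\mathcal{O}\!\bigl(N^{-\alpha^{2}/(\alpha+1)}\bigr)=\mathcal{O}\!\bigl(N^{-\alpha\frac{\alpha}{\alpha+1}}\bigr)$.

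I expect the only real subtleties to be bookkeeping rather than anything deep: checking that the balancing value of $M$ is admissible (immediate for CVS, and for the DPP precisely because the exponent is $<1$), absorbing the rounding of $M$ to an integer into the constants, and recalling why $(\beta_{N})$ is bounded under polynomial decay. This last point is the one place where the hypothesis $\sigma_{N+1}\leq CN^{-\alpha}$ should be read in the two-sided sense $\sigma_{m}\asymp m^{-\alpha}$, consistently with the setting of \Cref{tab:summary_2}.
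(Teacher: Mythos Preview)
Your proposal is correct and follows essentially the same route as the paper: plug the polynomial decay into \eqref{e:second_OKA_result} and \eqref{e:third_OKA_result}, bound $\sum_{m\in[M]}\epsilon_m(N)\le M\epsilon_1(N)$ via monotonicity and \eqref{CVS_eq:ineq_r_N}, and balance with $M=\lfloor N^{(\alpha-1)/(\alpha+1)}\rfloor$ for the DPP and $M=\lfloor N^{\alpha/(\alpha+1)}\rfloor$ for CVS. Your closing remark that boundedness of $(\beta_N)$ tacitly requires a two-sided estimate $\sigma_m\asymp m^{-\alpha}$ is a fair observation that the paper's proof leaves implicit.
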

\begin{proof}
\rev{
  We have $r_{N+1} \leq C' N^{1-\alpha}$ with $C'>0$. 
  For any $\gamma\in(0,1)$, \eqref{e:second_OKA_result} with $M = \lfloor N^{\gamma} \rfloor$ guarantees that
  \begin{equation}
    \mathbb{E}_{\DPP} \|f-\hat{f}_{\mathrm{OKA}, \bm{x}}\|_{\omega}^2 \leq 4C' N^{\gamma + 1-\alpha} + C N^{-\gamma\alpha} \|f\|_{\mathcal{F}}^2.
  \end{equation}
Letting $\gamma = (\alpha-1)/(\alpha+1)$ leads to the announced convergence rate. 
}
\rev{
For CVS, we combine \eqref{CVS_eq:upper_bound_sup_epsilon} and \eqref{CVS_eq:ineq_r_N} to obtain
\begin{equation}
\sum\limits_{m \in [M]} \epsilon_{m}(N) \leq M \epsilon_{1}(N) \leq C'M \sigma_{N+1}.
\end{equation}
Again, taking $M = \lfloor N^{\gamma} \rfloor$ for some $\gamma\in (0,1)$ in \eqref{e:third_OKA_result}, we get
\begin{equation}
\mathbb{E}_{\VS} \|f-\hat{f}_{\mathrm{OKQ},M, \bm{x}}\|_{\omega}^2 \leq C' N^{\gamma -\alpha} + C N^{-\gamma\alpha} \|f\|_{\mathcal{F}}^2.
\end{equation}
Fixing $\gamma = \alpha/(\alpha+1)$ to its optimal value leads to the announced rate.}
\end{proof}
\rev{
  For both the projection DPP and CVS, the mean-squared error of OKA is thus close to $\mathcal{O}(\sigma_{N+1})$ under polynomial decay of the spectrum, with a slight edge for CVS in \cref{cor:oka_polynomial_rate}. 
  In particular, while for large values of $\alpha$ the two rates in \cref{cor:oka_polynomial_rate} are similar, the rate for CVS is preferable for $\alpha$ close to $1$.
  This matches the intuition that, if the spectrum decays slowly, there is an advantage in taking a mixture of projection determinantal point processes such as CVS, rather than taking a single projection DPP onto a set of $N$ eigenfunctions.
}

\rev{
  Finally, the proof of \Cref{thm:oka_bounds} does not yield an improved bound when we further assume $f = \bm{\Sigma}^{1/2+r}g$ and $r>0$. However, a different argument permits to treat at least the case $r=1/2$.}
\begin{proposition}\label{prop:f_sigmag_cvs_omega}
\rev{Assume that $f = \bm{\Sigma}g$, where $g \in \mathbb{L}_{2}(\omega)$. Then, we have
\begin{equation}
\mathbb{E}_{\DPP} \|f - \hat{f}_{\mathrm{OKA},\bm{x}} \|_{\omega}^{2} \leq 4 \|g\|_{\omega}^{2} \; r_{N+1} \sum\limits_{m \in \mathbb{N}^{*}} \sigma_m, \:\:
\end{equation}
and 
\begin{equation}
\mathbb{E}_{\VS} \|f - \hat{f}_{\mathrm{OKA},\bm{x}} \|_{\omega}^{2} \leq \|g\|_{\omega}^2 \; \epsilon_{1}(N) \sum\limits_{m \in \mathbb{N}^{*}} \sigma_m .
\end{equation}}
\end{proposition}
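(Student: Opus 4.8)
The plan is to exploit the smoothing provided by the extra factor $\bm{\Sigma}$ in $f = \bm{\Sigma}g$, together with the $\|.\|_{\mathcal{F}}$-version of the optimal kernel approximation bounds already available from \citep{Bel21} and \citep{BeBaCh20}. The starting point is a direct comparison of the two norms on the residual of OKA: since $\hat{f}_{\mathrm{OKA},\bm{x}}$ is the $\|.\|_{\mathcal{F}}$-projection of $f$ onto $\mathcal{K}(\bm{x})$, the residual $f-\hat{f}_{\mathrm{OKA},\bm{x}}$ lies in the orthogonal complement of $\mathcal{K}(\bm{x})$, and in particular it is itself in $\mathcal{F}$. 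For any $h\in\mathcal{F}$ one has $\|h\|_{\omega}^2 = \langle \bm{\Sigma}^{1/2}\bm{\Sigma}^{-1/2}h, \bm{\Sigma}^{-1/2}h\rangle_{\omega}$-type manipulations, but the cleaner route is: $\|h\|_{\omega}^{2} = \sum_{m} \langle h,e_m\rangle_{\omega}^2 = \sum_m \sigma_m \langle h, e_m^{\mathcal{F}}\rangle_{\mathcal{F}}^2 \leq \big(\sup_m \sigma_m\big)\|h\|_{\mathcal{F}}^2 = \sigma_1 \|h\|_{\mathcal{F}}^2$. That only recovers $\sigma_1$ as the prefactor, which is weaker than what is claimed, so a sharper argument is needed: the claimed bound carries $\sum_m \sigma_m$, which is strictly larger than $\sigma_1$, so in fact the crude inequality $\|h\|_\omega^2 \le \sigma_1 \|h\|_{\mathcal F}^2$ already implies the statement once one has the $\|.\|_{\mathcal{F}}$-bounds — let me reconsider which estimate is genuinely needed.

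Concretely, the first step is $\|f-\hat{f}_{\mathrm{OKA},\bm{x}}\|_{\omega}^2 \leq \big(\sum_{m\in\mathbb{N}^*}\sigma_m\big)\,\|f-\hat{f}_{\mathrm{OKA},\bm{x}}\|_{\mathcal{F}}^2$, which follows from $\|h\|_\omega^2 = \sum_m \sigma_m \langle h,e_m^{\mathcal F}\rangle_{\mathcal F}^2 \le (\sum_m \sigma_m)\,\|h\|_{\mathcal F}^2$ applied to $h = f-\hat f_{\mathrm{OKA},\bm x}$. The second step is to invoke the already-established RKHS-norm guarantees for the optimal kernel approximation when $f=\bm{\Sigma}g$: inequality~\eqref{eq:result_ezq_2} gives $\mathbb{E}_{\DPP}\|f-\hat{f}_{\mathrm{OKA},\bm{x}}\|_{\mathcal{F}}^2 \leq 4\|g\|_{\omega}^2 r_{N+1}$, and \eqref{CVS_eq:main_result_EX_VS_err_mu} together with $(\epsilon_m(N))$ non-increasing gives $\mathbb{E}_{\VS}\|f-\hat{f}_{\mathrm{OKA},\bm{x}}\|_{\mathcal{F}}^2 = \sum_m \langle g,e_m\rangle_{\omega}^2 \epsilon_m(N) \leq \epsilon_1(N)\sum_m \langle g,e_m\rangle_{\omega}^2 = \epsilon_1(N)\|g\|_{\omega}^2$. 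Taking expectations in the first step and substituting these two bounds yields exactly the two displayed inequalities of the proposition, with the prefactor $\sum_m \sigma_m$ appearing from the norm comparison and the factors $4\|g\|_{\omega}^2 r_{N+1}$ (resp. $\|g\|_{\omega}^2\epsilon_1(N)$) from the RKHS-norm estimates.

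The one point requiring a little care — and the main obstacle — is justifying that the norm-comparison inequality $\|h\|_\omega^2 \le (\sum_m \sigma_m)\|h\|_{\mathcal F}^2$ applies to the residual $h = f-\hat f_{\mathrm{OKA},\bm x}$, i.e. that this residual genuinely lies in $\mathcal{F}$ and that the interchange of the (infinite) Mercer sum with the expectation is legitimate. Membership in $\mathcal F$ is immediate since both $f\in\mathcal F$ and $\hat f_{\mathrm{OKA},\bm x}\in\mathcal K(\bm x)\subset\mathcal F$; the summability $\sum_m \sigma_m<\infty$ needed to make the prefactor finite is exactly the standing hypothesis ensuring $\int k(x,x)\mathrm d\omega(x) = \sum_m \sigma_m <\infty$ under Assumptions~\ref{a:compactness}--\ref{a:continuity_and_density}. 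Fubini/Tonelli for the non-negative integrand $\sum_m \sigma_m \langle h,e_m^{\mathcal F}\rangle_{\mathcal F}^2$ then lets one pass the expectation through, closing the argument. All other steps are the routine substitutions described above.
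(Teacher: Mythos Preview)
Your proposal is correct and follows the same two-step strategy as the paper: first establish the norm comparison
\[
\|f-\hat f_{\mathrm{OKA},\bm x}\|_\omega^2 \leq \Big(\sum_{m\in\mathbb N^*}\sigma_m\Big)\,\|f-\hat f_{\mathrm{OKA},\bm x}\|_{\mathcal F}^2,
\]
then take expectations and invoke the RKHS-norm bounds \eqref{eq:result_ezq_2} and \eqref{CVS_eq:main_result_EX_VS_err_mu}--\eqref{CVS_eq:upper_bound_sup_epsilon}. The only difference lies in how the norm comparison is obtained. The paper rewrites each coefficient $\langle f-\hat f_{\mathrm{OKA},\bm x},e_m\rangle_\omega$ as the quadrature error $\langle g,\bm\Sigma e_m\rangle_\omega-\sum_i w_i\,\bm\Sigma e_m(x_i)$ and then applies the worst-case bound \eqref{eq:quadrature_error_upper_bound} with $h=\bm\Sigma e_m$, giving $\langle f-\hat f_{\mathrm{OKA},\bm x},e_m\rangle_\omega^2\le\sigma_m\|f-\hat f_{\mathrm{OKA},\bm x}\|_{\mathcal F}^2$ termwise. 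You instead use the Parseval identity $\|h\|_\omega^2=\sum_m\sigma_m\langle h,e_m^{\mathcal F}\rangle_{\mathcal F}^2$ directly on the residual, which is valid for any $h\in\mathcal F$ and does not require the representation $f=\bm\Sigma g$ at this stage. Your route is slightly more elementary and, as you observe, in fact yields the sharper prefactor $\sigma_1$ in place of $\sum_m\sigma_m$; the paper's detour through \eqref{eq:quadrature_error_upper_bound} is not needed for this particular proposition, though it keeps the argument in the same quadrature language used elsewhere in the paper.
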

\rev{
  The proof is in \cref{proof:prop:f_sigmag_cvs_omega} and simply consists in bounding the $L^2$ norm of the residual by its RKHS norm, and using previous results on the RKHS norm.} 
\rev{
  Finally, observe that the theoretical bounds on OKA allow to derive the following lemma that provides an upper bounds for the transform based on optimal kernel quadrature.} 
\begin{lemma}\label{prop:transform_based_OKA}
Let $f \in \mathcal{F}$ and $M, N \in \mathbb{N}^*$. 
Given $\bm{x} \in \mathcal{X}^{N}$ such that the matrix $\bm{K}_{1}(\bm{x})$ is non-singular, let $\hat{f}_{\mathrm{OKQ},M,\bm{x}}$ the approximation \eqref{def:hat_phi_x} with weights $\bm{\alpha}_{m}:=(\alpha_{m,i})$ defined by~\eqref{eq:OKQ_alpha_n}.
Then
\begin{equation}\label{eq:OKQ_transform_OKA_bound}
\forall M \in \mathbb{N}^*, \:\: \|f-\hat{f}_{\mathrm{OKQ},M, \bm{x}}\|_{\omega}^2 \leq \|f-\hat{f}_{\mathrm{OKA},\bm{x}}\|_{\omega}^2 + \|f-f_{M}\|_{\omega}^2,
\end{equation}
where $f_M := \sum_{m\in [M]} \langle f, e_m\rangle_{\omega} e_m$. 
\end{lemma}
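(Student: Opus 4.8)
The plan is to reduce \eqref{eq:OKQ_transform_OKA_bound} to a one-line orthogonality argument by invoking the representation \eqref{eq:representation_OKQ_OKA}, which identifies $\hat{f}_{\mathrm{OKQ},M,\bm{x}}$ with the $\|.\|_{\omega}$-orthogonal projection of $\hat{f}_{\mathrm{OKA},\bm{x}}$ onto the eigenspace $\mathcal{E}_{M}$. Writing $P := \Pi_{\mathcal{E}_{M}}^{\|.\|_{\omega}}$ for this projector and recalling from \eqref{eq:eigen_approximation} that $f_{M} = Pf$, the residual decomposes as
\[
f - \hat{f}_{\mathrm{OKQ},M,\bm{x}} = f - P\hat{f}_{\mathrm{OKA},\bm{x}} = (f - f_{M}) + P\big(f - \hat{f}_{\mathrm{OKA},\bm{x}}\big).
\]

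The key step is then to observe that $f - f_{M}$ lies in the $\|.\|_{\omega}$-orthogonal complement of $\mathcal{E}_{M}$, while $P(f - \hat{f}_{\mathrm{OKA},\bm{x}})$ lies in $\mathcal{E}_{M}$, so that the two summands are $\|.\|_{\omega}$-orthogonal. The Pythagorean identity in $\mathbb{L}_{2}(\omega)$ then gives
\[
\|f - \hat{f}_{\mathrm{OKQ},M,\bm{x}}\|_{\omega}^{2} = \|f - f_{M}\|_{\omega}^{2} + \big\|P\big(f - \hat{f}_{\mathrm{OKA},\bm{x}}\big)\big\|_{\omega}^{2}.
\]
Since $P$ is an orthogonal projection, it is $\|.\|_{\omega}$-contractive, whence $\|P(f - \hat{f}_{\mathrm{OKA},\bm{x}})\|_{\omega}^{2} \leq \|f - \hat{f}_{\mathrm{OKA},\bm{x}}\|_{\omega}^{2}$, and combining the two displays yields \eqref{eq:OKQ_transform_OKA_bound}.

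The only point needing care is the applicability of \eqref{eq:representation_OKQ_OKA}: that identity was established in \cref{sec:eigenspace_approximations} using the OKA weights \eqref{e:OKA_weights} and the identity $I_{m}(k(x,\cdot)) = \sigma_{m}e_{m}(x)$, and it presupposes that $\bm{K}_{1}(\bm{x})$ is non-singular so that both $\hat{f}_{\mathrm{OKA},\bm{x}}$ and $\hat{f}_{\mathrm{OKQ},M,\bm{x}}$ (with weights \eqref{eq:OKQ_alpha_n}) are well-defined — which is exactly the hypothesis of the lemma. I therefore expect no real obstacle; the proof is essentially the two displays above once the representation \eqref{eq:representation_OKQ_OKA} is invoked.
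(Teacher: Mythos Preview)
Your proposal is correct and takes essentially the same approach as the paper: invoke \eqref{eq:representation_OKQ_OKA} to write $\hat{f}_{\mathrm{OKQ},M,\bm{x}} = \Pi_{\mathcal{E}_{M}}^{\|.\|_{\omega}}\hat{f}_{\mathrm{OKA},\bm{x}}$, apply the Pythagorean decomposition $\|f - \Pi_{\mathcal{E}_{M}}^{\|.\|_{\omega}}\hat{f}_{\mathrm{OKA},\bm{x}}\|_{\omega}^{2} = \|f - f_{M}\|_{\omega}^{2} + \|\Pi_{\mathcal{E}_{M}}^{\|.\|_{\omega}}(f - \hat{f}_{\mathrm{OKA},\bm{x}})\|_{\omega}^{2}$, and bound the second term by contractivity of the orthogonal projection. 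Your remark on the non-singularity hypothesis for $\bm{K}_{1}(\bm{x})$ is also exactly the point needed to justify the use of \eqref{eq:representation_OKQ_OKA}.
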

\rev{Note that the upper bound \eqref{eq:OKQ_transform_OKA_bound} is not multiplicative in the squared residual $\|f-f_{M}\|_{\omega}^2$. 
Therefore the IOP\footnote{Defined in \Cref{sec:iid_design_conf} for the empirical least square approximation $\hat{f}_{\mathrm{ELS},\bm{x}}$, but it can be extended to any finite-dimensional approximation.} does not hold a priori. \Cref{sec:bounds_pi} will show that the quasi-interpolant approximation satisfies the IOP under the distribution of the projection DPP.}
\subsection{The least squares approximation under CVS}\label{sec:ls_cvs}
\rev{Going back to the least squares approximation, \Cref{thm:oka_bounds} and \Cref{prop:f_sigmag_cvs_omega} permit to prove the following result.}
\rev{\begin{proposition}
    \label{cor:cvsrates}
    Let $r \geq 0$ and $f \in \bm{\Sigma}^{1/2+r}\mathbb{L}_{2}(\omega)$.
    Then 
    \begin{equation}
      \label{e:rnonnegative}
      \mathbb{E}_{\VS}\|f -\hat{f}_{\mathrm{LS},\bm{x}} \|_{\omega}^{2} \leq \|f\|_{\mathcal{F}}^{2}\inf\limits_{M \in \mathbb{N}^{*}}\Big(\sum\limits_{m \in [M]} \epsilon_{m}(N) + \sigma_{M+1} \Big).
    \end{equation}
    Finally, if $r\geq 1/2$, 
    \begin{equation}
      \label{e:rlargerthanonehalf}
      \mathbb{E}_{\VS}\|f -\hat{f}_{\mathrm{LS},\bm{x}} \|_{\omega}^{2}  \leq \|g\|_{\omega}^2 \; \epsilon_{1}(N) \sum\limits_{m \in \mathbb{N}^{*}} \sigma_m.
    \end{equation}
  \end{proposition}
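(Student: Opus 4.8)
The plan is to reduce everything to the optimal kernel approximation bounds already proven in \Cref{thm:oka_bounds} and \Cref{prop:f_sigmag_cvs_omega}, via the elementary observation that the least-squares approximation never does worse than OKA in $\|.\|_{\omega}$. Concretely, the first step is to recall from~\eqref{eq:interpolation_omega_opt_problem} that, for any fixed design $\bm{x}$, the least-squares approximation $\hat{f}_{\mathrm{LS},\bm{x}} = \Pi_{\mathcal{K}(\bm{x})}^{\|.\|_{\omega}} f$ is the $\|.\|_{\omega}$-orthogonal projection onto the finite-dimensional---hence closed---subspace $\mathcal{K}(\bm{x})$ of~\eqref{eq:def-set-Kx}, so it minimises $\|f - \hat{f}\|_{\omega}$ over all $\hat{f} \in \mathcal{K}(\bm{x})$. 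Since $\hat{f}_{\mathrm{OKA},\bm{x}}$ of~\eqref{eq:interpolation_opt_problem} also lies in $\mathcal{K}(\bm{x})$, this yields the pointwise-in-$\bm{x}$ inequality $\|f - \hat{f}_{\mathrm{LS},\bm{x}}\|_{\omega}^2 \leq \|f - \hat{f}_{\mathrm{OKA},\bm{x}}\|_{\omega}^2$.

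Next, I would take expectations under continuous volume sampling. For the first claim~\eqref{e:rnonnegative}, valid for all $r \geq 0$, I apply the CVS bound~\eqref{e:third_OKA_result} of \Cref{thm:oka_bounds}; note this only uses $f \in \mathcal{F}$, which holds since $\bm{\Sigma}^{1/2+r}\mathbb{L}_{2}(\omega) \subset \mathcal{F}$ for $r\geq0$. For the sharper estimate~\eqref{e:rlargerthanonehalf} under $r \geq 1/2$, I first note the inclusion $\bm{\Sigma}^{1/2+r}\mathbb{L}_{2}(\omega) \subset \bm{\Sigma}\mathbb{L}_{2}(\omega)$, so that $f = \bm{\Sigma}g$ for some $g \in \mathbb{L}_{2}(\omega)$, and then feed the same pointwise inequality into the CVS part of \Cref{prop:f_sigmag_cvs_omega}.

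I expect no genuine obstacle here: the whole argument is the comparison $\|f - \hat{f}_{\mathrm{LS},\bm{x}}\|_{\omega} \leq \|f - \hat{f}_{\mathrm{OKA},\bm{x}}\|_{\omega}$, which simply transports to $\hat{f}_{\mathrm{LS},\bm{x}}$ the two CVS bounds already available for OKA (one generic, one under the extra smoothness $r\geq1/2$). The one subtlety worth flagging is well-definedness: $\hat{f}_{\mathrm{LS},\bm{x}}$ is the $\|.\|_{\omega}$-projection onto $\mathcal{K}(\bm{x})$ irrespective of whether the Gram matrix $\bm{K}_2(\bm{x})$ appearing in \Cref{prop:generic_formula_omega_norm} is invertible---invertibility only bears on the uniqueness of the weight representation~\eqref{eq:w_hat_using_k2}, not on the function $\hat{f}_{\mathrm{LS},\bm{x}}$ itself---so the pointwise inequality, and hence the two bounds, hold for every realisation of the design.
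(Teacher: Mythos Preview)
Your proposal is correct and follows essentially the same argument as the paper: use the pointwise inequality $\|f-\hat{f}_{\mathrm{LS},\bm{x}}\|_{\omega}^{2} \leq \|f-\hat{f}_{\mathrm{OKA},\bm{x}}\|_{\omega}^{2}$ coming from the definition of the $\|.\|_{\omega}$-projection onto $\mathcal{K}(\bm{x})$, take expectations under CVS, and then invoke \Cref{thm:oka_bounds} for~\eqref{e:rnonnegative} and \Cref{prop:f_sigmag_cvs_omega} for~\eqref{e:rlargerthanonehalf}. Your remark on well-definedness of $\hat{f}_{\mathrm{LS},\bm{x}}$ regardless of invertibility of $\bm{K}_2(\bm{x})$ is a useful clarification that the paper leaves implicit.
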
}
\begin{proof}
\rev{Let $f \in \mathcal{F}$. By definition of $\hat{f}_{\mathrm{LS},\bm{x}}$, we have 
\begin{equation}
\|f-\hat{f}_{\mathrm{LS},\bm{x}}\|_{\omega}^{2} \leq \|f-\hat{f}_{\mathrm{OKA},\bm{x}}\|_{\omega}^{2}. 
\end{equation}
Thus, 
\begin{equation}
\mathbb{E}_{\VS} \|f-\hat{f}_{\mathrm{LS},\bm{x}}\|_{\omega}^{2} \leq \mathbb{E}_{\VS}\|f-\hat{f}_{\mathrm{OKA},\bm{x}}\|_{\omega}^{2}.
\end{equation}
We conclude by using \Cref{thm:oka_bounds} and \Cref{prop:f_sigmag_cvs_omega} proven later.}
\end{proof}
\rev{For least squares and assuming $r\geq 1/2$, our bound under the DPP in \cref{cor:dpprates} is lower than \eqref{e:rlargerthanonehalf}. 
However, when the sequence $(\sigma_m)$ decreases polynomially, we recover the same rates of convergence as under the DPP in \Cref{cor:tels_polynomial_rate}.}

\subsection{The instance optimality property under projection DPPs}\label{sec:bounds_pi}

This section investigates the IOP \eqref{eq:IOP} of the empirical least-squares approximation \eqref{def:hat_f_ELS_M} under the DPP defined in~\Cref{s:DPPs}. 
More precisely, it shows that under the DPP, the IOP is satisfied by a variant of the empirical least-squares approximation, with a smaller budget than i.i.d. Christoffel sampling. 

Given a positive function $q: \mathcal{X} \rightarrow \mathbb{R}_{+}^{*}$, and given $M, N \in \mathbb{N}^{*}$ such that $M \leq N$, we define the \emph{truncated empirical least-squares approximation of order $M$}  associated to the configuration $\bm{x} \in \mathcal{X}^N$ as
\begin{equation}\label{eq:def_telsa}
  \hat{f}_{\mathrm{tELS},M, \bm{x}} := \rev{\Pi_{\mathcal{E}_{M}}^{\|.\|_{\omega}} \hat{f}_{\mathrm{ELS}, N, \bm{x}}} =  \sum\limits_{m \in [M]} \langle \hat{f}_{\mathrm{ELS},N, \bm{x}}, e_{m} \rangle_{\omega} e_{m}, 
\end{equation} 
where $\hat{f}_{\mathrm{ELS},N, \bm{x}}$ is the empirical least-squares approximation of $f$ of dimension $N$ associated to the function $q$ and the configuration $\bm{x}$ of size $N$. 
A subtle but important remark is that, in general, $\hat{f}_{\mathrm{tELS},M, \bm{x}} \neq \hat{f}_{\mathrm{ELS},M, \bm{x}}$.
\rev{Indeed, $\hat{f}_{\mathrm{ELS},M, \bm{x}}$ is the ELS approximation in ${\cal E}_M$, while $\hat{f}_{\mathrm{tELS},M, \bm{x}}$ is the projection on ${\cal E}_M$ of the ELS approximation in ${\cal E}_N$.
Interestingly, it will appear that} $\hat{f}_{\mathrm{tELS},M, \bm{x}}$ is even invariant to the choice of the function $q$, which is not the case of $\hat{f}_{\mathrm{ELS},M, \bm{x}}$. 






\begin{lemma}\label{prop:ELS_is_PI} Let $N \in \mathbb{N}^{*}$ and $\bm{x} \in \mathcal{X}^{N}$ be such that the matrix \rev{$\bm{K}_{0,N}(\bm{x})$} is non-singular.
Then
\begin{equation}
  \hat{f}_{\mathrm{ELS},N,\bm{x}} = \hat{f}_{\mathrm{QI},N,\bm{x}},
\end{equation}
where $\hat{f}_{\mathrm{QI},N,\bm{x}}$ is the quasi-interpolant \eqref{eq:def_fQI} defined in~\Cref{sec:eigenspace_approximations}.
As a result, 
\begin{equation}
  \forall M \in [N], \:\: \hat{f}_{\mathrm{tELS},M,\bm{x}} = \sum\limits_{m\in[M]} \langle \hat{f}_{\mathrm{QI},N,\bm{x}}, e_{m} \rangle_{\omega}e_{m}.
\end{equation}
\end{lemma}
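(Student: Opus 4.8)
The plan is to show that the empirical least-squares approximation of order $N$ associated to any positive weight $q$ coincides with the quasi-interpolant $\hat{f}_{\mathrm{QI},N,\bm{x}}$, and then deduce the statement about $\hat{f}_{\mathrm{tELS},M,\bm{x}}$ by simply projecting onto $\mathcal{E}_M$. The key observation is that both objects live in the $N$-dimensional space $\mathcal{E}_N = \mathrm{Span}(e_1, \dots, e_N)$, and both interpolate $f$ at the $N$ nodes $x_1, \dots, x_N$ in the appropriate sense; uniqueness of such an interpolant (guaranteed by the non-singularity hypothesis) forces them to be equal.

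Concretely, first I would unpack the definition of $\hat{f}_{\mathrm{ELS},N,\bm{x}}$: it is the minimizer over $\hat{f}\in\mathcal{E}_N$ of $\|f - \hat{f}\|_{q,\bm{x}}^2 = \frac1N\sum_{i\in[N]} q(x_i)(f(x_i) - \hat{f}(x_i))^2$. Since $\dim \mathcal{E}_N = N$ and the evaluation map $\hat{f}\mapsto (\hat{f}(x_1),\dots,\hat{f}(x_N))$ on $\mathcal{E}_N$ has matrix $(e_m(x_i))_{i,m}$, whose invertibility is equivalent to non-singularity of the Gram matrix $\bm{K}_{0,N}(\bm{x}) = \big(\sum_{m\in[N]} e_m(x_i)e_m(x_j)\big)_{i,j}$ (one factors $\bm{K}_{0,N}(\bm{x}) = E E^{\mathrm{T}}$ with $E = (e_m(x_i))_{i,m}$), the evaluation map is a bijection onto $\mathbb{R}^N$. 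Hence for the weighted empirical seminorm — which is a genuine norm on $\mathcal{E}_N$ under this hypothesis, since $q>0$ — the minimum value is $0$, attained at the unique $\hat{f}\in\mathcal{E}_N$ with $\hat{f}(x_i) = f(x_i)$ for all $i\in[N]$. This characterization is independent of $q$. Next I would recall from \eqref{eq:pseudo_GQ}–\eqref{eq:alpha_QI} and \eqref{eq:def_fQI} that $\hat{f}_{\mathrm{QI},N,\bm{x}} = \sum_{m\in[N]}\hat{I}^{\mathrm{QI}}_m(f)\,e_m$ with $\hat{I}^{\mathrm{QI}}_m(f) = f(\bm{x})^{\mathrm{T}}\bm{K}_{0,N}(\bm{x})^{-1}e_m(\bm{x})$, and verify that it also interpolates: $\hat{f}_{\mathrm{QI},N,\bm{x}}(x_i) = f(x_i)$ for all $i\in[N]$. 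This is a direct computation — $\hat{f}_{\mathrm{QI},N,\bm{x}}(\bm{x}) = E\,\bm{K}_{0,N}(\bm{x})^{-1}E^{\mathrm{T}}f(\bm{x}) = EE^{\mathrm{T}}(EE^{\mathrm{T}})^{-1}f(\bm{x}) = f(\bm{x})$ — using $\bm{K}_{0,N}(\bm{x}) = EE^{\mathrm{T}}$ and invertibility. Since $\hat{f}_{\mathrm{QI},N,\bm{x}}\in\mathcal{E}_N$ and interpolates $f$ at the nodes, uniqueness yields $\hat{f}_{\mathrm{ELS},N,\bm{x}} = \hat{f}_{\mathrm{QI},N,\bm{x}}$.

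For the second assertion, I would simply apply $\Pi_{\mathcal{E}_M}^{\|.\|_\omega}$ to both sides of the equality just established: by the definition \eqref{eq:def_telsa}, $\hat{f}_{\mathrm{tELS},M,\bm{x}} = \Pi_{\mathcal{E}_M}^{\|.\|_\omega}\hat{f}_{\mathrm{ELS},N,\bm{x}} = \Pi_{\mathcal{E}_M}^{\|.\|_\omega}\hat{f}_{\mathrm{QI},N,\bm{x}} = \sum_{m\in[M]}\langle \hat{f}_{\mathrm{QI},N,\bm{x}}, e_m\rangle_\omega e_m$, the last step being the explicit form of the $\|.\|_\omega$-orthogonal projection onto $\mathcal{E}_M = \mathrm{Span}(e_1,\dots,e_M)$ using that $(e_m)$ is orthonormal in $\mathbb{L}_2(\omega)$.

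The only real subtlety — not so much an obstacle as a point requiring care — is the bookkeeping around the factorization $\bm{K}_{0,N}(\bm{x}) = EE^{\mathrm{T}}$ and checking that non-singularity of $\bm{K}_{0,N}(\bm{x})$ is exactly what makes (i) the empirical seminorm a norm on $\mathcal{E}_N$, (ii) the evaluation map on $\mathcal{E}_N$ a bijection, and (iii) the QI weights well-defined. All three hinge on the same $N\times N$ matrix $E$ being invertible, so once that is spelled out the rest is immediate. One should also note explicitly that the interpolation conditions $\hat{f}(x_i)=f(x_i)$ pin down $\hat{f}\in\mathcal{E}_N$ uniquely, which is where the dimension count $\dim\mathcal{E}_N = N = \#\{\text{nodes}\}$ is used.
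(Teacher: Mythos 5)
Your argument is correct and is essentially the paper's own: both proofs rest on the observation that, when $\bm{K}_{0,N}(\bm{x})=E_N(\bm{x})E_N(\bm{x})^{\mathrm{T}}$ is non-singular (equivalently, the evaluation map on $\mathcal{E}_N$ is a bijection and the weighted empirical seminorm is a genuine norm there, since $q>0$), the order-$N$ empirical least-squares problem has a unique minimizer, namely the interpolant of $f$ at the nodes, which the quasi-interpolant is shown to be; the second claim then follows by projecting onto $\mathcal{E}_M$, exactly as in the paper. The only blemish is the intermediate expression $E\,\bm{K}_{0,N}(\bm{x})^{-1}E^{\mathrm{T}}f(\bm{x})$ for the vector of nodal values of $\hat{f}_{\mathrm{QI},N,\bm{x}}$, which should read $EE^{\mathrm{T}}\bm{K}_{0,N}(\bm{x})^{-1}f(\bm{x})$ (the expression you wrote is not the identity applied to $f(\bm{x})$ in general, but your next expression $EE^{\mathrm{T}}(EE^{\mathrm{T}})^{-1}f(\bm{x})$ and the conclusion are correct).
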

\rev{The proof of~\Cref{prop:ELS_is_PI} is given in~\Cref{proof:ELS_is_PI}.
As a consequence, the empirical least-squares approximation of dimension $N$ coincides with the quasi-interpolant \eqref{eq:def_fQI} and is invariant to the choice of the function $q$. 
Moreover,} $\hat{f}_{\mathrm{tELS},M,\bm{x}}$
is simply the projection of $\hat{f}_{\mathrm{QI}, N,\bm{x}}$ onto the eigenspace $\mathcal{E}_{M}$. 
Therefore, the numerical evaluation of $\hat{f}_{\mathrm{tELS},M,\bm{x}}$ boils down to the evaluation of the quadrature rules \eqref{eq:pseudo_GQ}, which in turn require the evaluation of the weights \eqref{eq:alpha_QI}.
\rev{This involves an $N \times N$ rather than an $M \times M$ matrix only for the evaluation of~\eqref{eq:ELS_matrix_form} to get $\hat{f}_{\mathrm{ELS},M,\bm{x}}$.} 
Therefore the evaluation of the coefficients of~$\hat{f}_{\mathrm{tELS},M,\bm{x}}$ in the basis $(e_{m})_{m \in \mathbb{N^*}}$ is numerically more expensive than $\hat{f}_{\mathrm{ELS},M,\bm{x}}$. 
\rev{This is the price to pay for this new approximation to be more amenable to analysis, as shown in the following \cref{prop:bound_tELS_DPP} when the configuration $\bm{x}$ is a projection DPP.}
\begin{proposition}\label{prop:bound_tELS_DPP} Consider $M, N \in \mathbb{N}^{*}$ such that $M \leq N$. Then, for $f \in \mathcal{F}$
\begin{equation}\label{eq:EX_DPP_ELS_N0}
\mathbb{E}_{\DPP} \|f-\hat{f}_{\mathrm{tELS},M,\bm{x}}\|_{\omega}^2 = \|f-f_{M}\|_{\omega}^2 +  M\|f- f_{N}\|_{\omega}^{2},
\end{equation}
As a direct consequence,
\begin{equation}\label{eq:f_tels_bound_2}
\mathbb{E}_{\DPP} \|f-\hat{f}_{\mathrm{tELS},M,\bm{x}}\|_{\omega}^2 \leq (1+M)\|f-f_{M}\|_{\omega}^2.
\end{equation}
\end{proposition}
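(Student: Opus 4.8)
The plan is to reduce everything to the Ermakov--Zolotukhin identities \eqref{eq:ez_1}--\eqref{eq:ez_2}, via the identification in \Cref{prop:ELS_is_PI} of $\hat{f}_{\mathrm{tELS},M,\bm{x}}$ with a truncation of the quasi-interpolant. First, using \Cref{prop:ELS_is_PI}, write $\hat{f}_{\mathrm{tELS},M,\bm{x}} = \sum_{m\in[M]}\hat{I}_m^{\mathrm{QI}}(f)\,e_m$, where $\hat{I}_m^{\mathrm{QI}}(f)$ is the Ermakov--Zolotukhin quadrature estimate of $I_m(f)=\langle f,e_m\rangle_\omega$ based on the size-$N$ projection DPP. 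Since $\hat{f}_{\mathrm{tELS},M,\bm{x}}\in\mathcal{E}_M$ and $f-f_M\perp\mathcal{E}_M$ in $\|.\|_\omega$, a Pythagorean split gives
\begin{equation}
  \|f-\hat{f}_{\mathrm{tELS},M,\bm{x}}\|_\omega^2 = \|f-f_M\|_\omega^2 + \|f_M - \hat{f}_{\mathrm{tELS},M,\bm{x}}\|_\omega^2 = \|f-f_M\|_\omega^2 + \sum_{m\in[M]}\bigl(\hat{I}_m^{\mathrm{QI}}(f) - I_m(f)\bigr)^2,
\end{equation}
the last equality by Parseval in the orthonormal family $(e_m)_{m\in[M]}$.

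Next I would take the expectation under the DPP term by term. By \eqref{eq:ez_1}, for each $n\in[N]$ we have $\mathbb{E}_{\DPP}\hat{I}_n^{\mathrm{QI}}(f)=I_n(f)$ and $\mathbb{V}_{\DPP}\hat{I}_n^{\mathrm{QI}}(f)=\sum_{m\geq N+1}\langle f,e_m\rangle_\omega^2 = \|f-f_N\|_\omega^2$; hence $\mathbb{E}_{\DPP}\bigl(\hat{I}_m^{\mathrm{QI}}(f)-I_m(f)\bigr)^2 = \|f-f_N\|_\omega^2$ for every $m\in[M]\subseteq[N]$. Summing over $m\in[M]$ yields exactly $M\|f-f_N\|_\omega^2$, which gives \eqref{eq:EX_DPP_ELS_N0}. (The cross-covariance identity \eqref{eq:ez_2} is not even needed here because we only sum variances of individual coordinates, not a covariance of a sum — though it would be needed if one wanted the analogous statement for $\|\hat{f}_{\mathrm{tELS},M,\bm{x}}\|$ directly.) Finally, \eqref{eq:f_tels_bound_2} follows from \eqref{eq:EX_DPP_ELS_N0} by noting $\|f-f_N\|_\omega^2\leq\|f-f_M\|_\omega^2$ since $M\leq N$ and $\mathcal{E}_M\subseteq\mathcal{E}_N$, so $M\|f-f_N\|_\omega^2\leq M\|f-f_M\|_\omega^2$; adding $\|f-f_M\|_\omega^2$ gives $(1+M)\|f-f_M\|_\omega^2$.

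The only genuinely delicate point is the applicability of \eqref{eq:ez_1}: that identity was stated for continuous $f\in\mathbb{L}_2(\omega)$, and here $f\in\mathcal{F}$, so one must check that $f$ has a continuous representative (which follows from Assumptions~\ref{a:compactness}--\ref{a:continuity_and_density} and the Mercer expansion) and that $f\in\mathbb{L}_2(\omega)$ (immediate since $\omega$ is finite and $f$ is bounded on $\mathcal{X}$, or via $\|f\|_\omega^2=\sum_m\sigma_m\langle f,e_m^{\mathcal F}\rangle_{\mathcal F}^2\leq\sigma_1\|f\|_{\mathcal F}^2$). A further subtlety is that $\hat{I}_m^{\mathrm{QI}}(f)$ is only defined when $\bm{K}_{0,N}(\bm{x})$ is nonsingular; under the projection DPP this holds almost surely, so the expectation is well defined. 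Modulo these routine verifications, the proof is a direct computation.
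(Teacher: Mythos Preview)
Your proposal is correct and follows essentially the same route as the paper: identify $\hat{f}_{\mathrm{tELS},M,\bm{x}}$ with the first $M$ quasi-interpolant coefficients via \Cref{prop:ELS_is_PI}, then apply the Ermakov--Zolotukhin mean and variance identities \eqref{eq:ez_1}. Your Pythagorean split is a slightly cleaner packaging than the paper's explicit quadratic expansion of $\|f-\sum_{m\in[M]}\hat w_m e_m\|_\omega^2$, but the substance is identical; your observation that \eqref{eq:ez_2} is not actually needed is also correct (the paper's proof in fact uses only \eqref{eq:ez_1}, despite the text above it citing both).
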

Under the projection DPP, $\hat{f}_{\mathrm{tELS},M,\bm{x}}$ thus satisfies the IOP \eqref{eq:IOP} with constant $(1+M)$, for any sampling budget that satisfies $N \geq M$. 
In particular, the IOP holds for a smaller budget than the $O(M \log(M))$ required by i.i.d. Christoffel sampling in \citep{CoMi17}; see \cref{sec:iid_design_conf}. \rev{Observe that \eqref{eq:f_tels_bound_2} is an equality when $N = M$, by \eqref{eq:EX_DPP_ELS_N0}. In that case, $\hat{f}_{\mathrm{tELS},M,\bm{x}} = \hat{f}_{\mathrm{QI},N,\bm{x}}$.
The proof of~\Cref{prop:bound_tELS_DPP}, based on the identities~\eqref{eq:ez_1} and \eqref{eq:ez_2}, is given in~\Cref{proof:bound_tELS_DPP}.}

\rev{
Finally, we derive the rate of convergence of $\hat{f}_{\mathrm{tELS},M,\bm{x}}$ to $f$ under the parametric assumption that the spectrum $(\sigma_m)$ decreases polynomially.
}
\begin{corollary}\label{cor:tels_polynomial_rate}
\rev{
  Assume $\sigma_{N+1} \leq C N^{-\alpha}$ with $\alpha>1$ and $C>0$. Then, taking $M= N^{\alpha/(\alpha+1)}$ in \cref{prop:bound_tELS_DPP}, we get
  $$
    \mathbb{E}_{\DPP} \|f-\hat{f}_{\mathrm{tELS},M,\bm{x}}\|_{\omega}^2 = \mathcal{O}(N^{-\alpha\frac{\alpha}{\alpha+1}}).
  $$
}
\end{corollary}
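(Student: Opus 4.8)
The plan is to insert the balancing choice $M = \lfloor N^{\alpha/(\alpha+1)} \rfloor$ into the exact identity \eqref{eq:EX_DPP_ELS_N0} of \Cref{prop:bound_tELS_DPP}, after first rewriting the two residual terms there in terms of the spectrum. Since $f \in \mathcal{F}$, for any $L \in \mathbb{N}^{*}$ one has, using \eqref{eq:def_F_norm} and the monotonicity of $(\sigma_m)$,
\[
  \|f - f_L\|_{\omega}^2 = \sum_{m \geq L+1} \langle f, e_m\rangle_{\omega}^2 = \sum_{m \geq L+1} \sigma_m \, \frac{\langle f, e_m\rangle_{\omega}^2}{\sigma_m} \leq \sigma_{L+1}\, \|f\|_{\mathcal{F}}^2 .
\]
Applying this with $L = M$ and with $L = N$ turns \eqref{eq:EX_DPP_ELS_N0} into $\mathbb{E}_{\DPP}\|f - \hat{f}_{\mathrm{tELS},M,\bm{x}}\|_{\omega}^2 \leq \|f\|_{\mathcal{F}}^2 \bigl(\sigma_{M+1} + M\,\sigma_{N+1}\bigr)$.

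Next I would use the polynomial-decay hypothesis $\sigma_{N+1} \leq C N^{-\alpha}$, which, applied at index $M$, also gives $\sigma_{M+1} \leq C M^{-\alpha}$; hence the bound becomes $C\|f\|_{\mathcal{F}}^2 \bigl(M^{-\alpha} + M N^{-\alpha}\bigr)$. The two terms are of the same order precisely when $M^{-\alpha} \asymp M N^{-\alpha}$, i.e.\ $M \asymp N^{\alpha/(\alpha+1)}$. Since $\alpha/(\alpha+1) < 1$, the choice $M = \lfloor N^{\alpha/(\alpha+1)} \rfloor$ satisfies $M \leq N$ for $N$ large enough (as required for $\hat{f}_{\mathrm{tELS},M,\bm{x}}$ and \Cref{prop:bound_tELS_DPP} to apply), and $M \geq \tfrac12 N^{\alpha/(\alpha+1)}$ for $N$ large, so the floor does not affect the rate. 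Plugging in, $M^{-\alpha} = \mathcal{O}(N^{-\alpha^2/(\alpha+1)})$ and $M N^{-\alpha} = \mathcal{O}(N^{\alpha/(\alpha+1) - \alpha}) = \mathcal{O}(N^{-\alpha^2/(\alpha+1)})$, and $\alpha^2/(\alpha+1) = \alpha \cdot \alpha/(\alpha+1)$, which is exactly the announced rate.

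I do not anticipate any genuine obstacle: the argument is pure bookkeeping once \Cref{prop:bound_tELS_DPP} is available. The only points deserving a line of care are the verification that the flooring of $M$ is harmless and that $M \leq N$ holds eventually (both immediate from $\alpha > 1$), and the observation that one must start from the \emph{equality} \eqref{eq:EX_DPP_ELS_N0} rather than the coarser bound \eqref{eq:f_tels_bound_2}: the latter, optimized over $M$, would only yield the slower $\mathcal{O}(N^{1-\alpha})$ rate of the pure least-squares row in \Cref{tab:summary_2}, whereas balancing the two terms of the equality is what produces the improved exponent $\alpha\frac{\alpha}{\alpha+1}$.
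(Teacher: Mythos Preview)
Your proposal is correct and follows essentially the same route as the paper, which simply states that the argument mirrors the proof of \Cref{cor:oka_polynomial_rate}: bound each residual $\|f-f_L\|_\omega^2$ by $\sigma_{L+1}\|f\|_{\mathcal F}^2$, plug in the polynomial decay, and balance the two terms with $M\asymp N^{\alpha/(\alpha+1)}$. Your remark that one must use the identity \eqref{eq:EX_DPP_ELS_N0} rather than the coarser IOP bound \eqref{eq:f_tels_bound_2} is spot on and worth keeping.
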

\rev{
  The proof of \Cref{cor:tels_polynomial_rate} follows the same steps as the proof of \Cref{cor:oka_polynomial_rate}.
  Interestingly, \Cref{cor:tels_polynomial_rate} yields a faster rate of convergence for $\hat{f}_{\mathrm{tELS},M,\bm{x}}$ than \Cref{cor:oka_polynomial_rate} did for $\hat{f}_{\mathrm{OKA},\bm{x}}$. 
  }







\subsection{Proofs}\label{sec:proofs}

\subsubsection{Proof of \Cref{prop:generic_formula_omega_norm}}
\label{proof:generic_formula_omega_norm}
Let $f \in \mathcal{F}$, $\bm{w} \in \mathbb{R}^{N}$ and $\bm{x} \in \mathcal{X}^N$. 
The squared residual $\|f-\sum_{i \in [N]}w_{i}k(x_i,\cdot)\|_{\omega}^2$ writes 
\begin{equation}
  \label{e:identity}
  \|f\|_{\omega}^2 -2 \langle f, \sum_{i \in [N]}w_{i}k(x_i,\cdot) \rangle_{\omega} + \|\sum_{i \in [N]}w_{i}k(x_i,\cdot)\|_{\omega}^2.
\end{equation}
The identity~\eqref{eq:identity_epsilon_f} follows from evaluating the latter two terms. 
First note that 
\begin{align}
  \langle f, k(x_{i},\cdot) \rangle_{\omega} = \bm{\Sigma}f(x_i),
  \label{e:tool1}
\end{align}
so that the linear term in \eqref{eq:identity_epsilon_f} is as claimed. 
Now, the uniform convergence in Mercer's decomposition \eqref{eq:mercer_infinite_sum} allows us to write, for $i, j\in [N]^2$,
\begin{align}
  \label{e:tool2}
  \langle k(x_{i},\cdot),k(x_{j},\cdot) \rangle_{\omega} &= \int \lim_{n\rightarrow \infty} \left(\sum_{m \in [n]} \sigma_m e_m(x_{i}) e_m(y) \right) \left(\sum_{m \in [n]} \sigma_m e_m(x_{j}) e_m(y) \right) \mathrm{d}\omega(y)\\
  &=  k_{2}(x_i,x_{j})
\end{align}
by dominated convergence. 
Plugging \eqref{e:tool1} and \eqref{e:tool2} into \eqref{e:identity} yields the desired formula.

\subsubsection{Proof of \Cref{prop:f_sigmag_cvs_omega}}
\label{proof:prop:f_sigmag_cvs_omega}
\rev{
  We use again
  \begin{align}\label{eq:OKA_bound_using_em_other_proof}
  \|f-\hat{f}_{\mathrm{OKA},\bm{x}}\|_{\omega}^2 & = \sum\limits_{m \in \mathbb{N}^{*}} \langle f - \hat{f}_{\mathrm{OKA},\bm{x}}, e_{m} \rangle_{\omega}^2.
  \end{align}
  On the one hand, since $f = \bm{\Sigma}g$,
  $$
    \langle f, e_m \rangle_{\omega} = \langle \bm{\Sigma}g, e_m \rangle_{\omega} = \langle g, \bm{\Sigma} e_m \rangle_{\omega} =  \sigma_m \langle g, \bm{\Sigma} e_m \rangle_{\F}.
  $$
  On the other hand, by definition, 
  letting $\bm{w} = \bm{K}_1(\bm{x})^{-1}f(\bm{x})$,
  \begin{equation}
  \langle \hat{f}_{\mathrm{OKA},\bm{x}}, e_{m} \rangle_{\omega} = \sigma_m \langle \hat{f}_{\mathrm{OKA},\bm{x}}, e_{m} \rangle_{\mathcal{F}} = \sigma_{m}\sum\limits_{i\in [N]}w_{i}e_{m}(x_i) = \sum\limits_{i\in [N]}w_{i}\bm{\Sigma}e_{m}(x_i),
  \end{equation}
  Thus
  \begin{equation}
  \langle f,e_{m}\rangle_{\omega} - \langle \hat{f}_{\mathrm{OKA},\bm{x}}, e_{m} \rangle_{\omega} = \langle g, \bm{\Sigma}e_m \rangle_{\omega} - \sum\limits_{i\in [N]}w_{i} \bm{\Sigma}e_{m}(x_i).
  \end{equation}
  Now, 
  applying \eqref{eq:quadrature_error_upper_bound} with $h=\bm{\Sigma}e_m \in \mathcal{F}$, it comes
  \begin{align}
    \langle f - \hat{f}_{\mathrm{OKA},\bm{x}}, e_m\rangle_{\omega}^2 
    &\leq  \|\bm{\Sigma}e_m\|_{\mathcal{F}}^2 \|f- \sum\limits_{i \in [N]}w_i k(x_i,.)\|_{\mathcal{F}}^2 \\
    &\leq  \sigma_{m}\|f-\hat{f}_{\mathrm{OKA},\bm{x}}\|_{\mathcal{F}}^2.
  \end{align}
  Therefore
  \begin{equation}
  \|f-\hat{f}_{\mathrm{OKA},\bm{x}}\|_{\omega}^2 \leq \Big(\sum\limits_{m \in \mathbb{N}^{*}} \sigma_{m}\Big)\|f-\hat{f}_{\mathrm{OKA},\bm{x}}\|_{\mathcal{F}}^2.
  \end{equation}
  We conclude by using \eqref{eq:result_ezq_2}, \eqref{CVS_eq:main_result_EX_VS_err_mu} and \eqref{CVS_eq:upper_bound_sup_epsilon}.
}

\subsubsection{Proof of 
\Cref{thm:useful_result_DPP}}\label{proof:thm_useful_result_DPP}
First, observe that when $\bm{x}$ follows the distribution of the projection DPP of~\Cref{def:dpp}, the Gram matrix \rev{$\bm{K}_{0,N}(\bm{x})$} associated with the kernel $k_{0,N}$ defined in \cref{s:different_kernels} is almost surely non-singular. 
Similarly, the Gram matrix $\bm{K}_{2,N}(\bm{x})$ associated to the kernel $k_{2,N}$ is almost surely non-singular. 
Now, observe that for any $\bm{x}$,
\begin{equation}
\bm{K}_{2,N}(\bm{x}) = \sum\limits_{m \in [N]} \sigma_{m}^2e_{m}(\bm{x})e_{m}(\bm{x})^{\mathrm{T}} \prec \bm{K}_{2}(\bm{x}) = \sum\limits_{m \in \mathbb{N}^{*}} \sigma_{m}^2e_{m}(\bm{x})e_{m}(\bm{x})^{\mathrm{T}}, 
\end{equation}
so that the matrix $\bm{K}_{2}(\bm{x})$ is also almost surely non-singular.

\rev{
Let $f \in \F$, and $\bm{x} \in \mathcal{X}^N$ be such that the matrices \rev{$\bm{K}_{0,N}(\bm{x}),\bm{K}_{2,N}(\bm{x})$} are non-singular. In short, we use an approximation inspired by the Ermakov-Zolotukhin quadrature of \Cref{sec:review_ezq} as a proxy for the least-squares approximation. 
Define 
$$
  \hat{f}_{\bm{x}}:= \sum_{i\in [N]}\hat{w}_{i}k(x_i,\cdot),
$$
where $\bm{\hat{\bm{w}}} := \bm{K}_{1,N}(\bm{x})^{-1}f_{M}(\bm{x})$ and
$f_{N}$ is defined by~\eqref{eq:eigen_approximation}. 
By definition of $\hat{f}_{\mathrm{LS},\bm{x}}$, 
$\|f -\hat{f}_{\mathrm{LS},\bm{x}} \|_{\omega}^{2}  \leq \|f -\hat{f}_{\bm{x}} \|_{\omega}^{2}$, so that, letting $M\leq N$, 
\begin{equation}
  \|f -\hat{f}_{\mathrm{LS},\bm{x}} \|_{\omega}^{2} \leq  2 \Big(\|f-f_{M}\|_{\omega}^2 + \|f_{M} -\hat{f}_{\bm{x}}\|_{\omega}^{2} \Big).
  \label{e:centraltool}
\end{equation}
Now, using \Cref{prop:generic_formula_omega_norm},
\begin{equation}
  \label{eq:identity_epsilon_fN}
  \rev{ \|f_{M} - \hat{f}_{\bm{x}}\|_{\omega}^2 = \|f_M\|_{\omega}^2 -2 \sum_{i \in [N]}\hat{w}_{i}\bm{\Sigma}f_M(x_i) + \hat{\bm{w}}^{\mathrm{T}} \bm{K}_{2}(\bm{x})\hat{\bm{w}}.}
\end{equation}
In the following, we shall prove that 
\begin{equation}\label{eq:first_term_in_error_ez}
\mathbb{E}_{\mathrm{DPP}} \sum_{i \in [N]}\hat{w}_{i}\bm{\Sigma}f_M(x_i) = \|f_M\|_{\omega}^2,
\end{equation}
and
\begin{equation}\label{eq:second_term_in_error_ez}
\mathbb{E}_{\mathrm{DPP}} \hat{\bm{w}}^{\mathrm{T}} \bm{K}_{2}(\bm{x})\hat{\bm{w}} = \|f_M\|_{\omega}^2 + \Big(\sum\limits_{m\in[M]} \sigma_{m}^{-2}\langle f, e_m \rangle_{\omega}^2  \Big) \sum\limits_{m \geq N+1} \sigma_{m}^2.
\end{equation}
This will be enough to conclude, since \eqref{eq:identity_epsilon_fN} will then yield 
\begin{equation}
\mathbb{E}_{\mathrm{DPP}} \|f_{M} - \hat{f}_{\bm{x}} \|_{\omega}^2 = \Big(\sum\limits_{m\in[M]} \sigma_{m}^{-2}\langle f, e_m \rangle_{\omega}^2  \Big) \sum\limits_{m \geq N+1} \sigma_{m}^2,
\end{equation}
which, plugged into \eqref{e:centraltool}, shall conclude the proof.
}

\rev{We start with the proof of \eqref{eq:first_term_in_error_ez}. First, observe that 
\begin{equation}
  \bm{\Sigma}f_{M}(x_i) = \sum\limits_{m\in[M]} \sigma_{m} \langle f, e_{m} \rangle_{\omega} e_{m}(x_i).
\end{equation}
Therefore
\begin{equation}\label{eq:sum_w_ez_Sigmafn}
  \sum\limits_{i \in [N]} \hat{w}_{i}\bm{\Sigma}f_{M}(x_i) = \sum\limits_{m\in[M]} \sigma_{m} \langle f, e_{m} \rangle_{\omega} \sum\limits_{i \in [N]} \hat{w}_{i}e_{m}(x_i)
\end{equation}
By Proposition 2 in \citep{Bel21}, we have
\begin{equation}
  \hat{\bm{w}} = \rev{\bm{K}_{0,N}(\bm{x})}^{-1} (h_M(\bm{x})),
\end{equation}
where, $h_M(x) := \sum_{m \in [N]}\sigma_{m}^{-1} \langle f,e_{m} \rangle_{\omega} e_{m}(x)$, so that
\begin{equation}
  \sum\limits_{i \in [N]} \hat{w}_{i}e_{m}(x_i) =  \sum\limits_{n \in [M]} \sigma_{n}^{-1} \langle f, e_{n} \rangle_{\omega} e_{n}(\bm{x})^{\mathrm{T}}\rev{\bm{K}_{0,N}(\bm{x})}^{-1} e_{m}(\bm{x}).
\end{equation}
Thus, 
\begin{align}
\mathbb{E}_{\DPP} \sum\limits_{i \in [N]} \hat{w}_{i}e_{m}(x_i) & = \sum\limits_{n \in [M]} \sigma_{n}^{-1} \langle f, e_{n} \rangle_{\omega} \mathbb{E}_{\DPP}   e_{n}(\bm{x})^{\mathrm{T}}\rev{\bm{K}_{0,N}(\bm{x})}^{-1} e_{m}(\bm{x}) \nonumber.
\end{align}
By the unbiasedness property \eqref{eq:ez_1}, and since $M \leq N$, we obtain
\begin{align}\label{eq:E_DPP_w_hat}
  \mathbb{E}_{\DPP} \sum\limits_{i \in [N]} \hat{w}_{i}e_{m}(x_i) & = \sum\limits_{n \in [M]} \sigma_{n}^{-1} \langle f, e_n \rangle_{\omega} \delta_{n,m} = \sigma_{m}^{-1} \langle f, e_m \rangle_{\omega}.
\end{align}
Combining \eqref{eq:sum_w_ez_Sigmafn} and \eqref{eq:E_DPP_w_hat}, we get \eqref{eq:first_term_in_error_ez}.
Now we move to the proof of \eqref{eq:second_term_in_error_ez}. For this purpose, write 
\begin{equation}
\bm{K}_{2}(\bm{x})  = \bm{K}_{2,N}(\bm{x}) + \bm{K}_{2,N}^{\perp}(\bm{x}), 
\end{equation}
where 
\begin{equation}
\bm{K}_{2,N}^{\perp}(\bm{x}):= \sum\limits_{m \geq N+1} \sigma_{m}^{2} e_{m}(\bm{x}) e_{m}(\bm{x})^{\mathrm{T}}. 
\end{equation}
In particular, we have
\begin{equation}
\hat{\bm{w}}^\mathrm{T} \bm{K}_{2}(\bm{x}) \hat{\bm{w}} = \hat{\bm{w}}^{\mathrm{T}} \bm{K}_{2,N}(\bm{x}) \hat{\bm{w}} + \hat{\bm{w}}^{\mathrm{T}} \bm{K}_{2,N}^{\perp}(\bm{x}) \hat{\bm{w}}.
\end{equation}
Now, using Proposition 2 in \citep{Bel21}, we have
\begin{equation}
\hat{\bm{w}} = \rev{\bm{K}_{0,N}(\bm{x})}^{-1} h_M(\bm{x}) = \bm{K}_{2,N}(\bm{x})^{-1} \tilde{h}_M(\bm{x}),
\end{equation}
where
\begin{equation}
\tilde{h}_M(x):= \sum\limits_{m\in[M]} \sigma_{m} \langle f, e_{m} \rangle_{\omega} e_{m}(x). 
\end{equation}
Thus
\begin{equation}
\hat{\bm{w}}^{\mathrm{T}} \bm{K}_{2,N}(\bm{x}) \hat{\bm{w}} =  \tilde{h}_M(\bm{x})^{\mathrm{T}} \bm{K}_{2,M}(\bm{x})^{-1} \tilde{h}_M(\bm{x})= \tilde{h}_M(\bm{x})^{\mathrm{T}} \hat{\bm{w}} = \tilde{h}_M(\bm{x})^{\mathrm{T}} \rev{\bm{K}_{0,N}(\bm{x})}^{-1} h_M(\bm{x}). 
\end{equation}
Therefore
\begin{align}
\mathbb{E}_{\DPP} \hat{\bm{w}}^{\mathrm{T}} \bm{K}_{2,N}(\bm{x}) \hat{\bm{w}} &  = \mathbb{E}_{\DPP} \tilde{h}_M(\bm{x})^{\mathrm{T}} \rev{\bm{K}_{0,N}(\bm{x})}^{-1} h_M(\bm{x}) \nonumber \\
& = \sum\limits_{n\in [M]} \sum\limits_{n' \in [M]} \sigma_{n}\langle f, e_{n} \rangle_{\omega} \sigma_{n'}^{-1} \langle f, e_{n'} \rangle_{\omega} \mathbb{E}_{\DPP} e_{n}(\bm{x})^{\mathrm{T}} \rev{\bm{K}_{0,N}(\bm{x})}^{-1} e_{n'}(\bm{x}) \nonumber \\
& = \sum\limits_{n\in [M]} \sum\limits_{n' \in [M]} \sigma_{n}\langle f, e_{n} \rangle_{\omega} \sigma_{n'}^{-1} \langle f, e_{n'} \rangle_{\omega} \delta_{n,n'} \nonumber \\
& = \sum\limits_{n\in [M]} \frac{\langle f, e_{n} \rangle_{\omega}^2}{\sigma_{n}^2}.
\end{align}
Now we prove that 
\begin{equation}
\mathbb{E}_{\DPP} \hat{\bm{w}}^{\mathrm{T}} \bm{K}_{2,N}^{\perp}(\bm{x}) \hat{\bm{w}} = \|f_{M}\|_{\omega}^2 \sum\limits_{m \geq N+1} \sigma_{m}^{2}.
\end{equation}
For that purpose, observe that 
\begin{equation}
\hat{\bm{w}}^{\mathrm{T}} \bm{K}_{2,N}^{\perp}(\bm{x}) \hat{\bm{w}} = \sum\limits_{m \geq N+1} \sigma_{m}^{2} \Big(\sum\limits_{i \in [N]} \hat{w}_{i} e_{m}(x_i) \Big)^2,
\end{equation}
and since $\langle h_M, e_{m} \rangle_{\omega} = 0$ for $m \geq N+1$, we have by \eqref{eq:ez_1} and \eqref{eq:ez_2}
\begin{align}
\mathbb{E}_{\DPP}\Big(\sum\limits_{i \in [N]} \hat{w}_{i} e_{m}(x_i) \Big)^2 & = \mathbb{E}_{\DPP} \Big(e_{m}(\bm{x}) \rev{\bm{K}_{0,N}(\bm{x})}^{-1} h_M(\bm{x})\Big)^2
\nonumber \\
& = \sum\limits_{n \in [M]} \sum\limits_{n' \in [M]} \sigma_{n}^{-1} \sigma_{n'}^{-1} \langle f, e_{n} \rangle_{\omega} \langle f, e_{n'} \rangle_{\omega} \mathbb{E}_{\DPP} \hat{I}_{n}(e_{m}) \hat{I}_{n'}(e_{m})   \\
& = \sum\limits_{n \in [M]} \frac{1}{\sigma_{n}^{2}} \langle f, e_{n} \rangle_{\omega}^2,
\end{align}
where the quadrature rule $\hat{I}_{n}$ is defined in \Cref{sec:review_ezq}.
}

\subsubsection{Proof of \Cref{thm:oka_bounds}}\label{proof:thm_oka_bounds}
\rev{
Let $M \in \mathbb{N}^{*}$, $\bm{x}\in\mathcal{X}^N$, and $f\in\mathcal{F}$. We have 
\begin{align}\label{eq:OKA_bound_using_em}
\|f-\hat{f}_{\mathrm{OKA},\bm{x}}\|_{\omega}^2 & = \sum\limits_{m \in \mathbb{N}^{*}} \langle f - \hat{f}_{\mathrm{OKA},\bm{x}}, e_{m} \rangle_{\omega}^2 \nonumber \\
& = \sum\limits_{m \in [M]} \langle f - \hat{f}_{\mathrm{OKA},\bm{x}}, e_{m} \rangle_{\omega}^2 + \sum\limits_{m \geq M+1} \frac{\sigma_m}{\sigma_m} \langle f - \hat{f}_{\mathrm{OKA},\bm{x}}, e_{m} \rangle_{\omega}^2 \nonumber \\
& \leq \sum\limits_{m \in [M]} \langle f - \hat{f}_{\mathrm{OKA},\bm{x}}, e_{m} \rangle_{\omega}^2 + \sigma_{M+1}\|\Pi_{\mathcal{E}_{M}^{\perp}}^{\mathcal{F}} \Pi_{\mathcal{K}(\bm{x})^{\perp}}^{\mathcal{F}} f \|_{\mathcal{F}}^{2}\nonumber \\
& \leq \sum\limits_{m \in [M]} \langle f - \hat{f}_{\mathrm{OKA},\bm{x}}, e_{m} \rangle_{\omega}^2 + \sigma_{M+1}\| f \|_{\mathcal{F}}^{2}. 
\end{align}
}
\rev{
Now, by definition of $\Vert\cdot\Vert_\mathcal{F}$ and because $\Pi_{\mathcal{K}(\bm{x})}^{\|.\|_{\mathcal{F}}}$ is self-adjoint, 
\begin{align}
\langle f - \hat{f}_{\mathrm{OKA},\bm{x}}, e_{m} \rangle_{\omega} & = \sqrt{\sigma_m}  \langle f - \Pi_{\mathcal{K}(\bm{x})}^{\|.\|_{\mathcal{F}}} f, e_{m}^{\mathcal{F}}\rangle_{\mathcal{F}}
= \sqrt{\sigma_m}  \langle f, e_{m}^{\mathcal{F}} - \widehat{e_{m}^{\mathcal{F}}}\rangle,
\label{e:rewriting}
\end{align}
where we write $\widehat{e_{m}^{\mathcal{F}}}$ for $\Pi_{\mathcal{K}(\bm{x})}^{\|.\|_{\mathcal{F}}} e_{m}^{\mathcal{F}}$.
Plugging \eqref{e:rewriting} into \eqref{eq:OKA_bound_using_em} and using Cauchy-Schwarz yields \eqref{e:first_OKA_result}.
}

\rev{
Now, monotone convergence and \eqref{eq:result_ezq_2} yield
\begin{equation}
  \mathbb{E}_{\DPP} \sum\limits_{m \in [M]} \sigma_{m} \| e_{m}^{\mathcal{F}} -  \widehat{e_{m}^{\mathcal{F}}}_{\mathrm{OKA}, \bm{x}} \|_{\mathcal{F}}^2 \leq 4M r_{N+1},
\end{equation}
proving \eqref{e:second_OKA_result}. 
Similarly, monotone convergence and \eqref{CVS_eq:main_result_EX_VS_err_mu} yield the equality
\begin{equation}
\mathbb{E}_{\VS} \sum\limits_{m \in [M]} \sigma_{m} \| e_{m}^{\mathcal{F}} -  \widehat{e_{m}^{\mathcal{F}}}_{\mathrm{OKA}, \bm{x}} \|_{\mathcal{F}}^2 = \sum\limits_{m \in [M]} \epsilon_{m}(N),
\end{equation}
proving \eqref{e:third_OKA_result}.}

\subsubsection{Proof of~\Cref{prop:transform_based_OKA}}\label{proof:transform_based_OKA}

Let $f \in \F$. \rev{By \eqref{eq:representation_OKQ_OKA}, we have
\begin{equation}
\hat{f}_{\mathrm{OKQ},M,\bm{x}} = \Pi_{\mathcal{E}_{M}}^{\|.\|_{\omega}}\hat{f}_{\mathrm{OKA},\bm{x}}.
\end{equation}
Therefore
\begin{align}
\|f-\hat{f}_{\mathrm{OKQ},M,\bm{x}}\|_{\omega}^2 = \|f-\Pi_{\mathcal{E}_{M}}^{\|.\|_{\omega}}\hat{f}_{\mathrm{OKA},\bm{x}}\|_{\omega}^2 & = \|f- f_{M} \|_{\omega}^2 + \|\Pi_{\mathcal{E}_M}^{\|.\|_{\omega}}(f- \hat{f}_{\mathrm{OKA},\bm{x}})\|_{\omega}^2 \nonumber\\
& \leq \|f- f_{M} \|_{\omega}^2 + \|f- \hat{f}_{\mathrm{OKA},\bm{x}}\|_{\omega}^2. 
\end{align}}

\subsubsection{Proof of~\Cref{prop:ELS_is_PI}}\label{proof:ELS_is_PI}
Let $f \in \mathcal{F}$, $\bm{x} = (x_{1}, \dots, x_{N}) \in \mathcal{X}^{N}$, and $\bm{w} = (w_{1}, \dots, w_{N}) \in \mathbb{R}^{N}$. Define
 \begin{equation}\label{eq:gamma_and_delta}
\left\{
    \begin{array}{ll}
    	\beta(\bm{x}) &:= \frac{1}{N}\sum_{i \in [N]}q(x_i)f(x_i)^2 \\
        \gamma(\bm{x}) & := (\frac{1}{N}\sum_{i \in [N]} q(x_{i})f(x_{i})e_{n}(x_{i}))_{n \in [N]} \in \mathbb{R}^{N}\\
        \delta(\bm{x}) & := (\frac{1}{N}\sum_{i \in [N]}q(x_{i}) e_{n_{1}}(x_{i}) e_{n_{2}}(x_{i}))_{(n_{1},n_{2}) \in [N]\times [N]} \in \mathbb{R}^{N \times N}.
    \end{array}
\right.
\end{equation}
We have
\begin{align}
\|f - \sum\limits_{m \in [N]} w_{m}e_{m} \|_{q,\bm{x}}^2 & = \frac{1}{N} \sum\limits_{i \in [N]}q(x_{i})\Big(f(x_{i}) - \sum\limits_{m \in [N]} w_{m}e_{m}(x_{i})\Big)^2\\
& = \beta(\bm{x})- 2\bm{w}^{\mathrm{T}}\gamma(\bm{x}) + \bm{w}^{\mathrm{T}}\delta(\bm{x})\bm{w}.
\end{align}
Under the assumption that $\delta(\bm{x})$ is non-singular, \rev{the solution of \eqref{def:hat_f_ELS_M} (when $M= N$) is unique.} \rev{Since $\hat{f}_{\mathrm{QI},N,\bm{x}}$ interpolates $f$ at the points $x_{1}, \dots, x_{N}$, it achieves the minimal value, $0$, in \eqref{def:hat_f_ELS_M}. Thus $\hat{f}_{\mathrm{QI},N,\bm{x}} = \hat{f}_{\mathrm{ELS},N,\bm{x}}$. }

Finally, we need to check that $\delta(\bm{x})$ is non-singular if and only if $\rev{\bm{K}_{0,N}(\bm{x})}$ is non-singular. This claim can be proved by observing that 
\begin{equation}
\delta(\bm{x})  = \frac{1}{N}E_{N}(\bm{x})^{\mathrm{T}}Q(\bm{x})E_{N}(\bm{x}),
\end{equation}
where 
\begin{equation}
E_{N}(\bm{x}):= (e_n(x_i))_{(i,n) \in [N] \times [N]} \in \mathbb{R}^{N \times N},
\end{equation}
so that $\delta(\bm{x})$ is non-singular if and only if the matrices $Q(\bm{x})$ and $E_{N}(\bm{x})$ are non-singular, and that $\rev{\bm{K}_{0,N}(\bm{x})} = E_{N}(\bm{x}) E_{N}(\bm{x})^{\mathrm{T}}$
We conclude by observing that $\Det Q(\bm{x}) = \prod_{i \in [N]} q(x_i) >0$, since $q$ is positive by assumption.

\subsubsection{Proof of~\Cref{prop:bound_tELS_DPP}}\label{proof:bound_tELS_DPP}


Let $\bm{w} =(w_{m})_{m \in [N]} \in \mathbb{R}^{N}$. We have
\begin{equation}
\|f-\sum\limits_{m\in[M]}w_{m}e_{m} \|_{\omega}^2 = \|f\|_{\omega}^2 - 2 \sum\limits_{m\in[M]}w_{m}\langle f,e_{m} \rangle_{\omega} + \sum\limits_{m\in[M]}w_{m}^2.
\end{equation} 
Therefore
\begin{equation}\label{eq:f_fELS_N0}
\|f-\hat{f}_{\mathrm{ELS},M,\bm{x}} \|_{\omega}^2 = \|f\|_{\omega}^2 - 2 \sum\limits_{m\in[M]}\hat{w}_{m}\langle f,e_{m} \rangle_{\omega} + \sum\limits_{m\in[M]}\hat{w}_{m}^2,
\end{equation}
where $\hat{\bm{w}} = \delta(\bm{x})^{-1}\gamma(\bm{x})$, with $\delta(\bm{x})$ and $\gamma(\bm{x})$ given by~\eqref{eq:gamma_and_delta}. By~\Cref{prop:ELS_is_PI}, we get 
\begin{equation}\label{eq:hat_w_equal_EZ}
\forall n \in [N], \:\: \hat{w}_{n} = \hat{I}_{n}^{\mathrm{QI}}(f).
\end{equation}
Based on~\eqref{eq:hat_w_equal_EZ}, ~\eqref{eq:ez_1} implies that
\begin{equation}\label{eq:ez_consequence_1}
\forall n \in [N], \:\: \mathbb{E}_{\DPP} \hat{w}_{n} = \langle f, e_{n} \rangle_{\omega},
\end{equation}
and \rev{the second line of~\eqref{eq:ez_1}} implies that
\begin{align}\label{eq:ez_consequence_2}
\forall n \in [N], \:\: \mathbb{E}_{\DPP}\hat{w}_{n}^2 &= \mathbb{E}_{\DPP}\hat{I}^{\mathrm{QI}}_{n}(f)^2 \nonumber\\
 & = I_{n}(f)^2 +2I_{n}(f)\mathbb{E}_{\DPP}(\hat{I}_{n}^{\mathrm{QI}}(f)-I_{n}(f)) +  \mathbb{E}_{\DPP}(\hat{I}^{\mathrm{QI}}_{n}(f) - I_{n}(f))^2 \nonumber\\
& = \langle f,e_{n} \rangle_{\omega}^2 +  \sum\limits_{m \geq N+1}\langle f, e_{m}\rangle_{\omega}^2.
\end{align}
Finally, combining~\eqref{eq:f_fELS_N0}, \eqref{eq:ez_consequence_1} and~\eqref{eq:ez_consequence_2}, we get
\begin{align*}
\mathbb{E}_{\DPP}\|f-\hat{f}_{\mathrm{ELS},M,\bm{x}} \|_{\omega}^2 & = \|f\|_{\omega}^2 - 2\sum\limits_{n\in [M]} \langle f,e_{n} \rangle_{\omega}^2 + \sum\limits_{n\in[M]} \big( \langle f,e_{n} \rangle_{\omega}^2 + \sum\limits_{m \geq N+1} \langle f, e_{m} \rangle_{\omega}^2 \big) \\
& = \sum\limits_{n \geq M+1} \langle f,e_{n}\rangle_{\omega}^2 + \sum\limits_{n \in [M]} \sum\limits_{m \geq N+1} \langle f, e_{m} \rangle_{\omega}^2 \\
& = \|f-f_{M}\|_{\omega}^2 + M\|f-f_{N}\|_{\omega}^2.
\end{align*}

\section{Numerical illustrations}\label{s:numsims}

\rev{This section illustrates the results of~\Cref{sec:new_results} for three families of RKHSs: periodic Sobolev spaces on $[0,1]$ in \cref{sec:sobolev_space}, an RKHS with a rotation-invariant kernel on the hypersphere in \cref{s:numsims_hypersphere} and an RKHS spanned by the uni-dimensional prolate spheroidal wave functions in \cref{s:numsims_unidim_PSWF}.}

\subsection{Periodic Sobolev spaces}\label{sec:sobolev_space}
This section illustrates the superconvergence phenomenon of \Cref{thm:useful_result_DPP} in a one-dimensional domain.
Let $\mathcal{X} = [0,1]$ be equipped with the uniform measure $\omega$, and define for $s \in \mathbb{N}^{*}$, for all $(x,y)\in \X \times \X$, the kernel
\begin{equation}
  \label{eq:periodic_sobolev_space_in_cossin}
  k_{s}(x,y) = 1+ 2\sum\limits_{m \in \mathbb{N}^{*}} \frac{1}{m^{2s}} \cos(2\pi m(x-y)),
\end{equation}
where the convergence holds uniformly on $\X \times \X$.
The kernel $k_{s}$ can be expressed in closed form using Bernoulli polynomials \citep{Wah90},
\begin{equation}
k_{s}(x,y) = 1 + \frac{(-1)^{s-1}(2 \pi)^{2s}}{(2s)!} B_{2s}(\{x-y\}).
\end{equation}
The corresponding RKHS $\mathcal{F}=\mathcal{S}_{s}$ is the periodic Sobolev space of order $s$. An element of $\mathcal{S}_{s}$ is a function $f$ defined on $[0,1]$, that has a derivative of order $s$ in the sense of distributions such that
$f^{(s)} \in \mathbb{L}_{2}(\omega)$, and
\begin{equation}
\forall i \in \{0, \dots, s-1\}, \:\: f^{(i)}(0) = f^{(i)}(1);
\end{equation}
see Chapter 7 of \citep{BeTh11}.
This class of RKHSs is ideal to validate the theoretical guarantees obtained in \Cref{sec:new_results}, since the eigenvalues $\sigma_m$ and the eigenfunctions $e_m$ are known explicitly. \rev{Moreover, $(k_{s})_{2} = k_{2s}$ for $s \in \mathbb{N}^{*}$, which is a practical identity when using \eqref{eq:w_hat_using_k2}.}

\rev{Figure~\ref{fig2} shows log-log plots of $\|f-\hat{f}_{\mathrm{LS},\bm{x}}\|_{\omega}^2$ vs. $N$, when $f = e_{m}^{\F}$ with $m \in \{1,2,3,4,5\}$. 
Each point is an average over $50$ independent draws from the projection DPP.
$\F$ is the periodic Sobolev spaces of order $s=1$ on figure~\ref{fig:optimal_em_l2_ls_sobolev_s_1} and $s=2$ on figure~\ref{fig:optimal_em_l2_ls_sobolev_s_2}.} 
We observe that the expected squared residual converges to $0$ at the same rate as $\sigma_{N+1}^{2} = \mathcal{O}(N^{-4s})$, which is slightly faster than the rate of convergence of $\sum_{m \geq N+1} \sigma^{2}_{m} = \mathcal{O}(N^{1-4s})$ predicted by~\Cref{thm:useful_result_DPP}. 

\rev{Figure~\ref{fig:optimal_em_l2_oka_sobolev_s_1} and Figure~\ref{fig:optimal_em_l2_oka_sobolev_s_2} illustrate this fast rate of convergence for the kernel-based interpolant $\hat{f}_{\mathrm{OKA},\bm{x}}$ as well.} Indeed, these figures show log-log plots of $\|f-\hat{f}_{\mathrm{OKA},\bm{x}}\|_{\omega}^2$ when $f = e_{m}^{\F}$ with $m \in \{1,2,3,4,5\}$ w.r.t. $N$, averaged over $50$ independent DPP samples. The squared residuals are evaluated using the formula~\eqref{eq:identity_epsilon_f}.

\begin{figure}
    \centering
\subfloat[The residual $\|f-\hat{f}_{\mathrm{LS},\bm{x}}\|_{\omega}^2$ vs. $N$ ($s = 1$).\label{fig:optimal_em_l2_ls_sobolev_s_1} ]{%
      \includegraphics[width=0.47\textwidth]{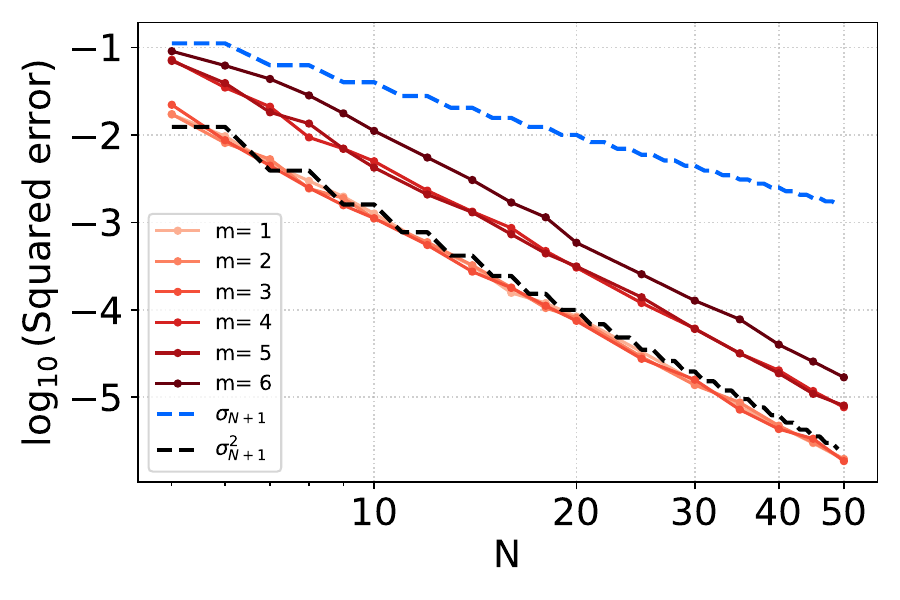}
    }~\subfloat[The residual $\|f-\hat{f}_{\mathrm{LS},\bm{x}}\|_{\omega}^2$ vs. $N$ ($s = 2$).\label{fig:optimal_em_l2_ls_sobolev_s_2} ]{%
      \includegraphics[width=0.47\textwidth]{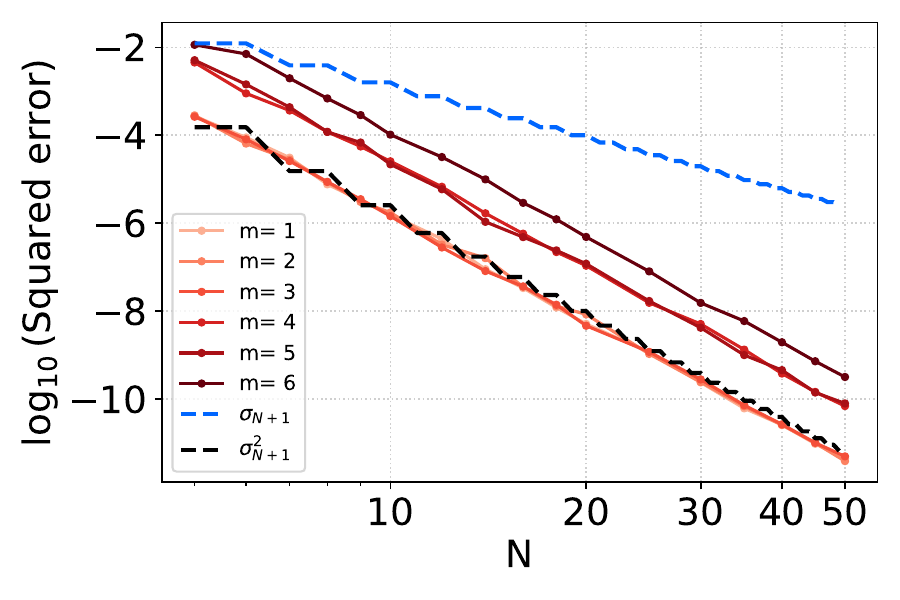}
    }\\

\subfloat[The residual $\|f-\hat{f}_{\mathrm{OKA},\bm{x}}\|_{\omega}^2$ vs. $N$ ($s = 1$).\label{fig:optimal_em_l2_oka_sobolev_s_1}]{%
      \includegraphics[width=0.47\textwidth]{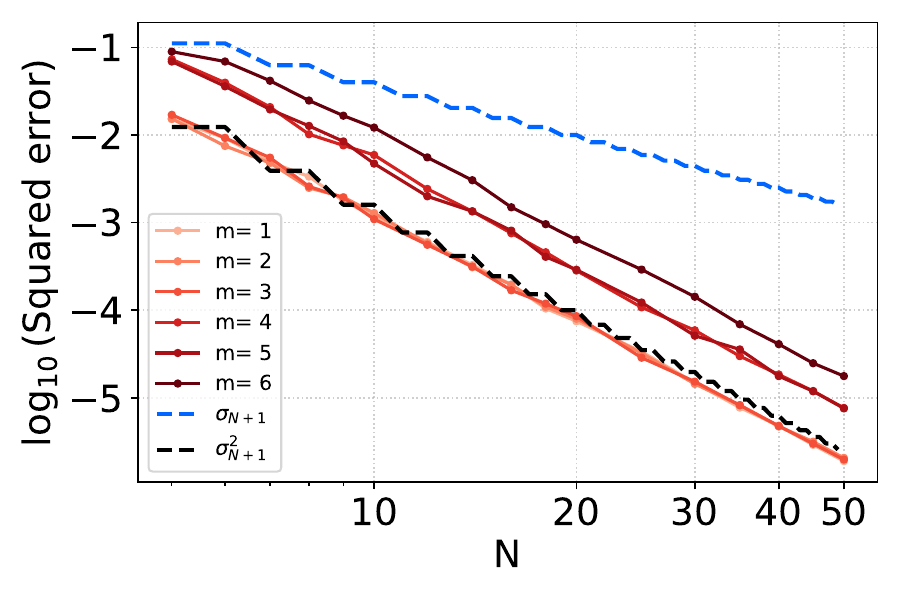}
    }~\subfloat[The residual $\|f-\hat{f}_{\mathrm{OKA},\bm{x}}\|_{\omega}^2$ vs. $N$ ($s = 2$).\label{fig:optimal_em_l2_oka_sobolev_s_2}]{%
      \includegraphics[width=0.47\textwidth]{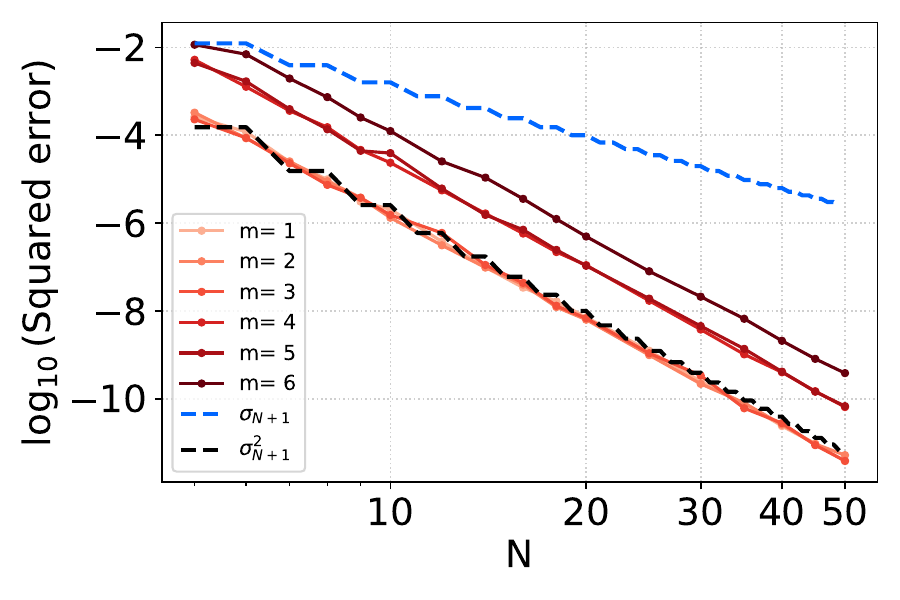}
    }\\
\caption{The reconstruction error for $f = e_{m}^{\mathcal{F}}$, when $\F$ is the periodic Sobolev space of order $s \in \{1,2\}$. \label{fig2}}
\end{figure}

\rev{Figure~\ref{fig3} considers the reconstruction of a random function}
\begin{equation}\label{eq:f_random_xi}
f = \sum\limits_{m \in [M]}\xi_{m} e_{m}^{\F},
\end{equation}
where $M \in \mathbb{N}^{*}$ and the $\xi_{m}$ are i.i.d. standard Gaussians. 
Each function $f$ is a random element of the eigenspace $\mathcal{E}^{\F}_{M}$.
Figure~\ref{fig:superconvergence_l2_ls_sobolev_s_1} and Figure~\ref{fig:superconvergence_l2_ls_sobolev_s_2} show log-log plots of $\|f-\hat{f}_{\mathrm{LS},\bm{x}}\|_{\omega}^2$ w.r.t. $N$, averaged over 50 independent DPP samples, and for a set of $50$ functions $f$ sampled according to~\eqref{eq:f_random_xi}, when $\F$ is the periodic Sobolev spaces of order $s=1$. 
Different DPP samples are used for each function $f$. 
The empirical convergence rate scales as $\mathcal{O}(\sigma_{N+1}^{2})$, which is slightly faster than $\mathcal{O}(\sum_{m \geq N+1} \sigma^{2}_{m})$. Again, this fast convergence is also observed for the kernel-based interpolant $\hat{f}_{\mathrm{OKA},\bm{x}}$, as shown in Figure~\ref{fig:superconvergence_l2_oka_sobolev_s_1} and Figure~\ref{fig:superconvergence_l2_oka_sobolev_s_2}. 



\begin{figure}[h!]
    \centering
\subfloat[The residual $\|f-\hat{f}_{\mathrm{LS},\bm{x}}\|_{\omega}^2$ vs. $N$ for $M=10$.\label{fig:superconvergence_l2_ls_sobolev_s_1}]{%
      \includegraphics[width=0.47\textwidth]{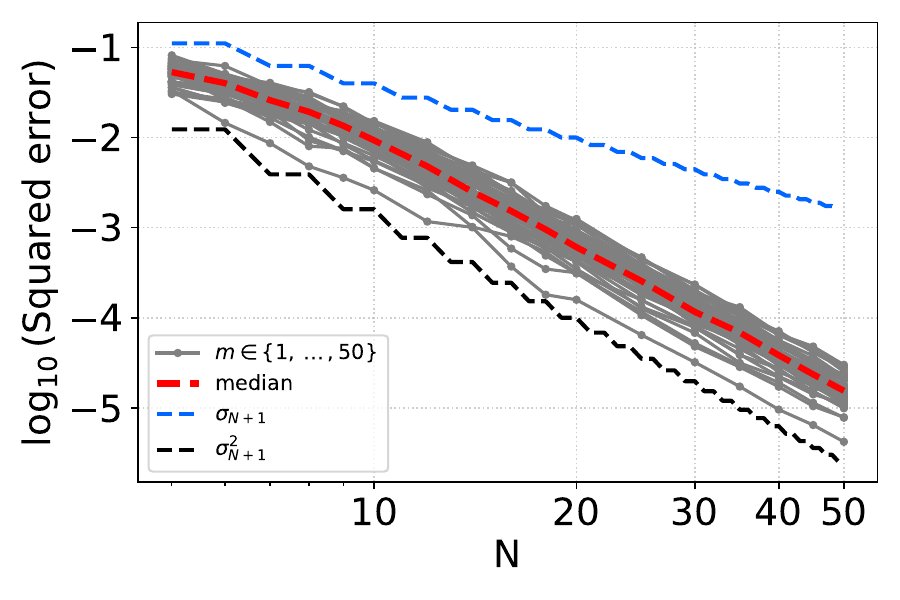}
    }~\subfloat[The residual $\|f-\hat{f}_{\mathrm{LS},\bm{x}}\|_{\omega}^2$ vs. $N$ for $M=20$.\label{fig:superconvergence_l2_ls_sobolev_s_2}]{%
      \includegraphics[width=0.47\textwidth]{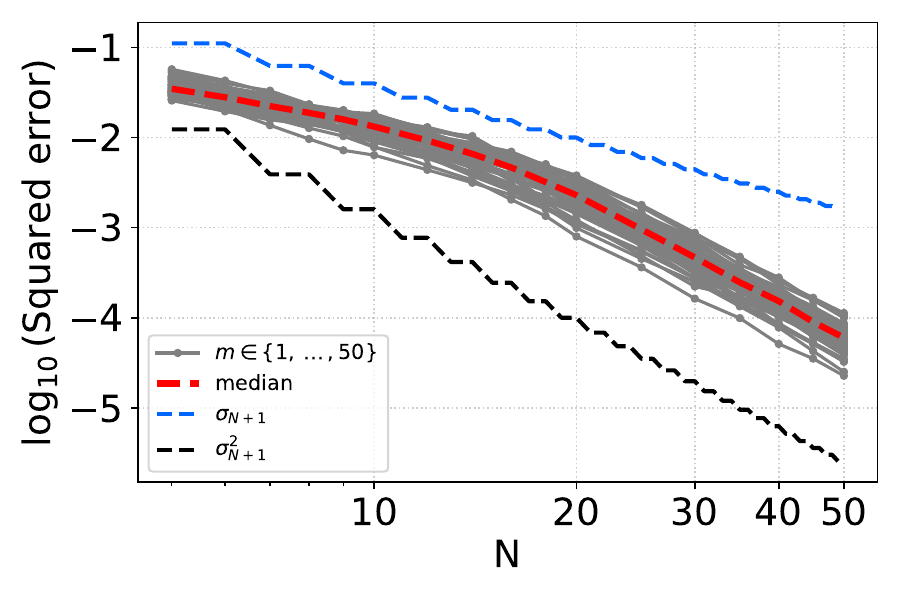}
    }\\
\subfloat[The residual $\|f-\hat{f}_{\mathrm{OKA},\bm{x}}\|_{\omega}^2$ vs. $N$ for $M=10$.\label{fig:superconvergence_l2_oka_sobolev_s_1}]{%
      \includegraphics[width=0.47\textwidth]{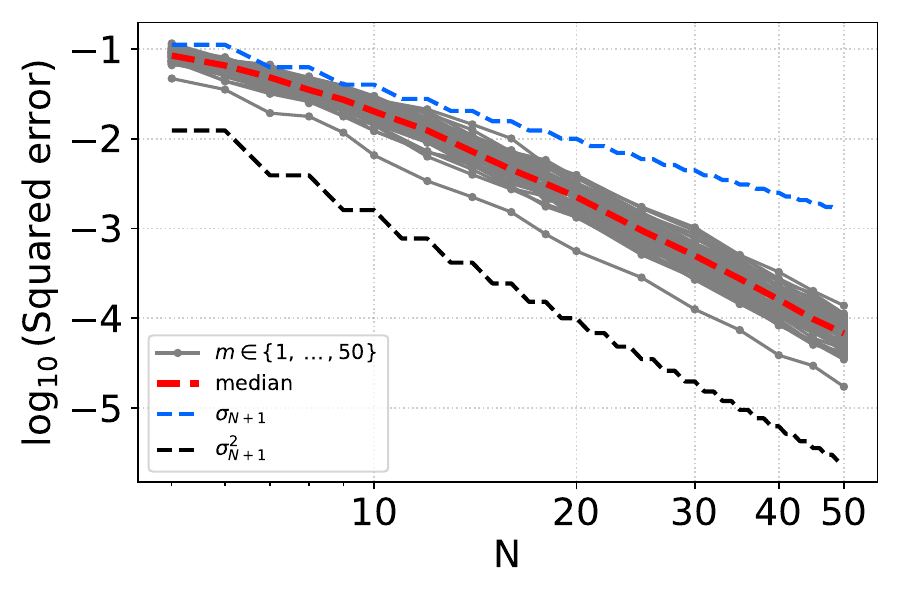}
    }~\subfloat[The residual $\|f-\hat{f}_{\mathrm{OKA},\bm{x}}\|_{\omega}^2$ vs. $N$ for $M=20$.\label{fig:superconvergence_l2_oka_sobolev_s_2}]{%
      \includegraphics[width=0.47\textwidth]{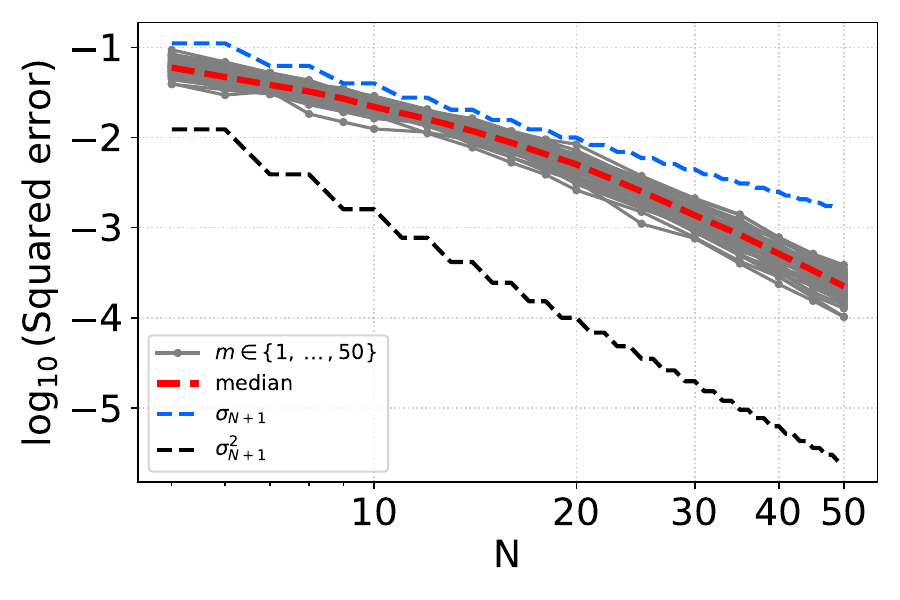}
    }\\
\caption{The reconstruction error for \rev{$50$} samples of $f$ from the distribution defined by~\eqref{eq:f_random_xi} when $\F$ is the periodic Sobolev space of order $s =1$.  \label{fig3} 
}
\end{figure}

\subsection{An RKHS with a rotation-invariant kernel on the hypersphere}
\label{s:numsims_hypersphere}

\begin{figure}[h!]
    \centering
\subfloat[The residual $\|f-\hat{f}_{\mathrm{LS},\bm{x}}\|_{\omega}^2$ vs. $N$ ($s = 1$).\label{fig:optimal_em_l2_ls_sobolev_sphere_s_1}]{%
      \includegraphics[width=0.47\textwidth]{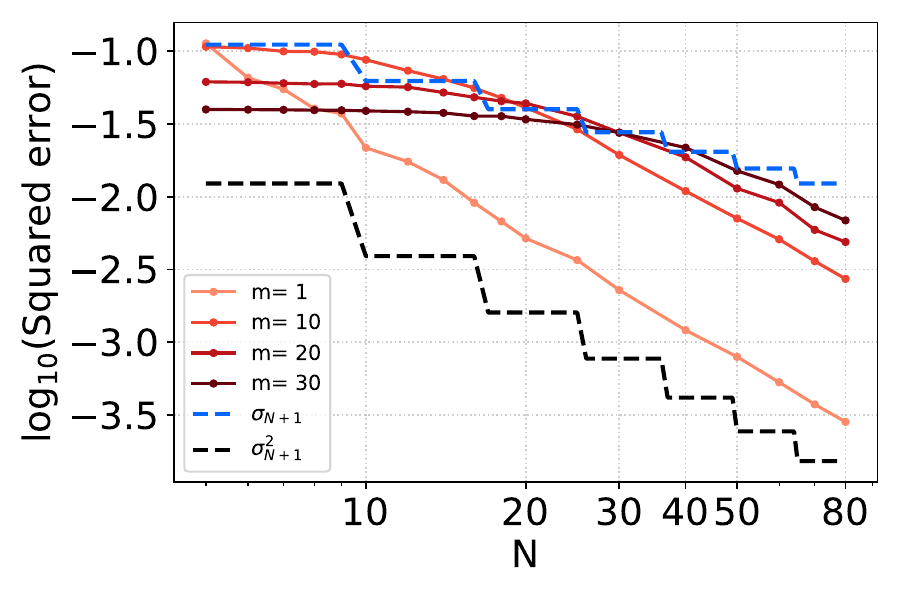}
    }~\subfloat[The residual $\|f-\hat{f}_{\mathrm{LS},\bm{x}}\|_{\omega}^2$ vs. $N$ ($s = 2$).\label{fig:optimal_em_l2_ls_sobolev_sphere_s_2}]{%
      \includegraphics[width=0.47\textwidth]{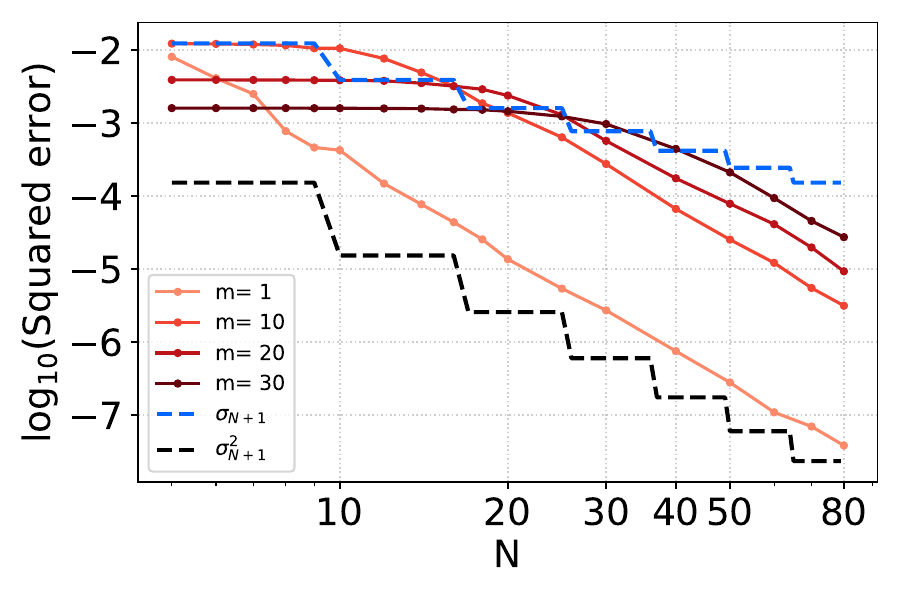}
    }\\

\caption{The reconstruction error for $f = e_{m}^{\mathcal{F}}$, when $\F$ is the periodic Sobolev space of order $s \in \{1,2\}$ in the hypersphere $\mathbb{S}^{d-1}$ where $d=3$. \label{fig4}}
\end{figure}

\rev{The RKHS framework permits to work in high-dimensional and/or non-Euclidean domains.
This section illustrates the superconvergence phenomenon of \Cref{thm:useful_result_DPP} on a hypersphere.} 
The definition of a positive definite kernel on a hypersphere dates back to the seminal work of \cite{Sch42}. 
A kernel $k:\mathbb{S}^{d-1} \times \mathbb{S}^{d-1} \rightarrow \mathbb{R}$ is said to be a \emph{dot-product kernel} if there
exists a function $\varphi: [-1,1] \rightarrow \mathbb{R}$ such that $k(x,y) = \varphi(\langle x, y \rangle)$ for $x, y \in \mathbb{S}^{d-1}$. 
This defines a large class of rotation-invariant kernels on $\mathbb{S}^{d-1}$. 
Moreover, the integration operator $\bm{\Sigma}$ associated to a dot-product kernel decomposes in the basis of spherical harmonics, when $\mathbb{S}^{d-1}$ is equipped with the uniform measure. Then Mercer's decomposition holds in the form
\begin{equation}
  \label{eq:sphere_kernel}
  k(x,y) = \sum\limits_{\ell \in \mathbb{N}} \sigma_{\ell} \sum\limits_{i\in [N(d,\ell)]} Y_{\ell,i}(x)Y_{\ell,i}(y),
\end{equation}
where for $\ell \in \mathbb{N}$, $\{Y_{\ell,i}: \mathbb{S}^{d-1} \rightarrow \mathbb{R}, i=1,\dots, N(d,\ell) \}$ is the basis of spherical harmonics of exact degree $\ell$ and $N(d,\ell) := (2\ell +d -1) \Gamma(\ell + d -1)/\big(\Gamma(d)\Gamma(\ell+1)\big)$; see \citep{Gro96}.
The exact expression of $\sigma_{\ell}$ or the eigenvalue decay may be found in \citep{CuFr97,SmOvWi00,Bac17b,AzMe14,ScHa21}.



\rev{The following set of experiments uses the kernel obtained by taking $d=3$ and  $\sigma_{\ell} = (1+\ell)^{-2s}$ in \eqref{eq:sphere_kernel} to illustrate the superconvergence phenomenon of \Cref{thm:useful_result_DPP}.}
The corresponding RKHS is akin to a Sobolev space of order $s$. An element of $\mathcal{F}$ is a function $f$ defined on $\mathbb{S}^{d-1}$, which has a derivative of order $s$ in the sense of distributions such that
$f^{(s)} \in \mathbb{L}_{2}(\omega)$ \citep{Hes06}. \rev{In this case, the kernel $k_2$ does not have an explicit formula. For this reason, the value of $k_{2}(x,y)$ is numerically approximated by truncating the sum to a sufficiently large order. }

\rev{Figure~\ref{fig4} shows log-log plots of $\|f-\hat{f}_{\mathrm{LS},\bm{x}}\|_{\omega}^2$ when $f = e_{m}^{\F}$ with $m \in \{1,10,20,30\}$ w.r.t. $N$, averaged over 50 independent DPP samples, figure~\ref{fig:optimal_em_l2_ls_sobolev_sphere_s_1} for $s=1$ and figure~\ref{fig:optimal_em_l2_ls_sobolev_sphere_s_2} for $s=2$.} 
Again, the expected squared residual converges to $0$ at the same rate as $\sigma_{N+1}^{2} = \mathcal{O}(N^{-4s})$, which is slightly faster than the rate of convergence predicted by~\Cref{thm:useful_result_DPP} of $\sum_{m \geq N+1} \sigma^{2}_{m} = \mathcal{O}(N^{1-4s})$. 
As predicted by~\Cref{thm:useful_result_DPP}, the superconvergence regime corresponds to $N \geq m$.






\subsection{The RKHS spanned by the uni-dimensional PSW functions}\label{s:numsims_unidim_PSWF}
This section compares DPP-based sampling with i.i.d. Christoffel sampling in the classical setting of band-limited signals. 

Let now $\mathcal{X} = [-T/2,T/2]$ equipped with $\omega$ the uniform measure, and let $F>0$.
Consider the so-called \emph{Sinc} kernel
\begin{equation}\label{eq:sinc_F_kernel}
  k_{F}(x,y):= \mathrm{Sinc}(F(x-y)) = \frac{\sin(F(x-y))}{F(x-y)}.
\end{equation}
This kernel defines an RKHS that corresponds to the space of band-limited functions restricted to the interval $\mathcal{X}$.
Slepian, Landau, and Pollak proved that the eigenfunctions of the integration operator $\bm{\Sigma}$ associated to~\eqref{eq:sinc_F_kernel} and the measure~$\omega$ satisfy a differential equation known in physics as the prolate spheroidal wave (PSW) equation; see \citep{SlPo61}. 
Since this seminal work, the eponymous functions were subject to extensive research. 
A detailed description of the eigenfunctions was carried out in \citep{Osi13,BoKa14,OsRo12}. 
In particular, these functions were shown to be well represented in the orthonormal basis defined by the Legendre polynomials \citep{Boy05,OpRoXi13}.  

\begin{figure}[h!]
 \begin{center}
 \subfloat[The residual $\|f-\hat{f}_{\mathrm{OKA},\bm{x}}\|_{\omega}^2$ vs. $N$ ($m = 1$).]{\includegraphics[width=0.43\textwidth]{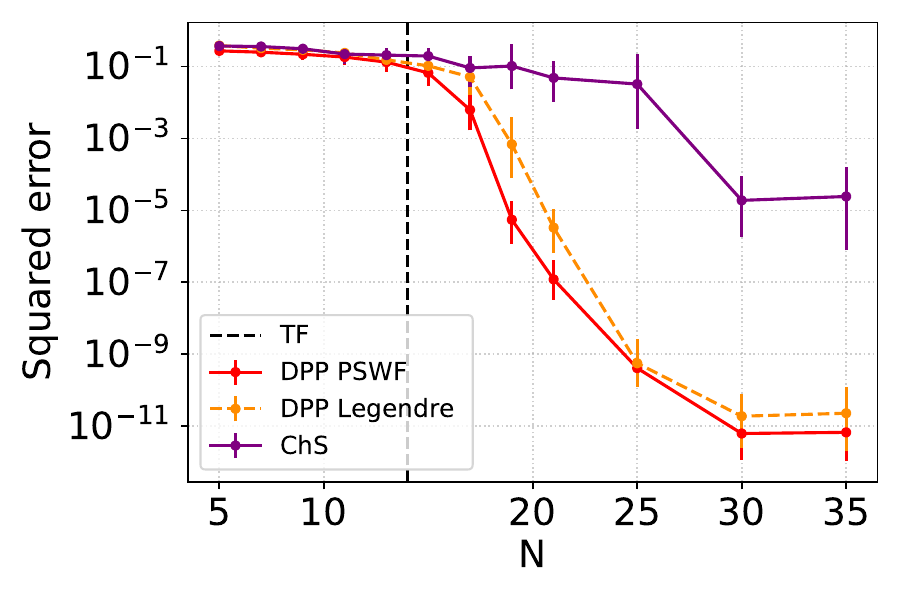}}~\subfloat[The residual $\|f-\hat{f}_{\mathrm{OKA},\bm{x}}\|_{\omega}^2$ vs. $N$ ($m = 2$).]{\includegraphics[width=0.43\textwidth]{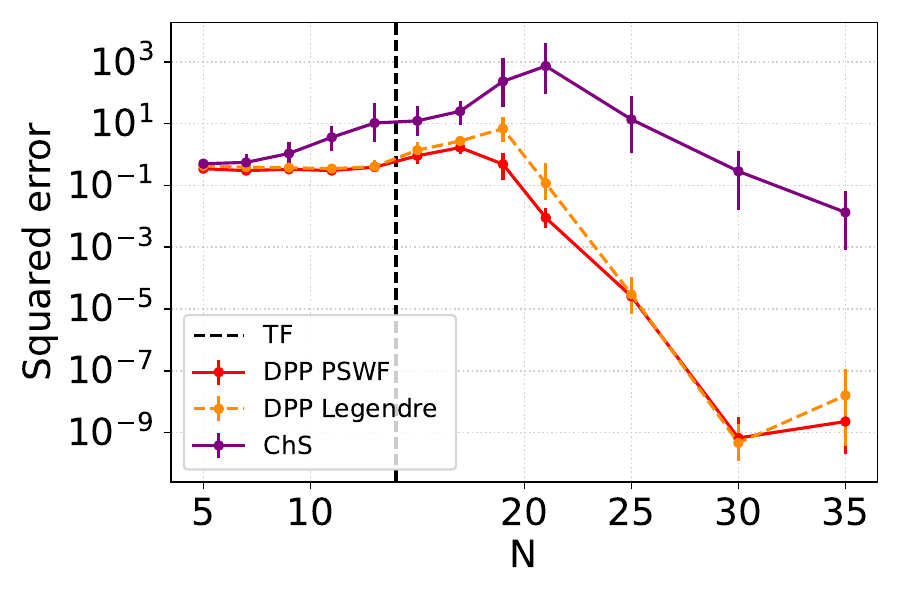}}\\
 \subfloat[The residual $\|f-\hat{f}_{\mathrm{OKA},\bm{x}}\|_{\omega}^2$ vs. $N$ ($m = 3$).]{\includegraphics[width=0.43\textwidth]{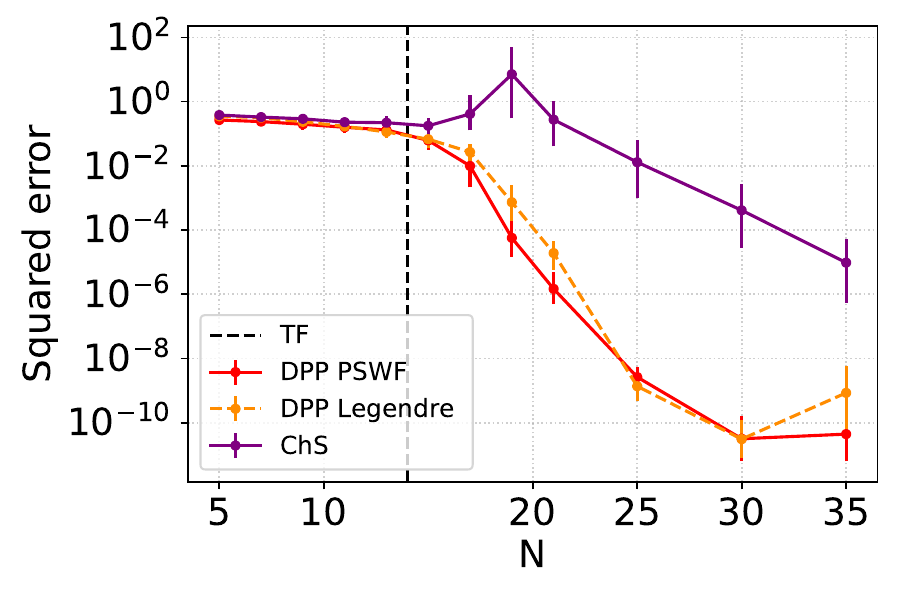}}~\subfloat[The residual $\|f-\hat{f}_{\mathrm{OKA},\bm{x}}\|_{\omega}^2$ vs. $N$ ($m = 4$).]{\includegraphics[width=0.43\textwidth]{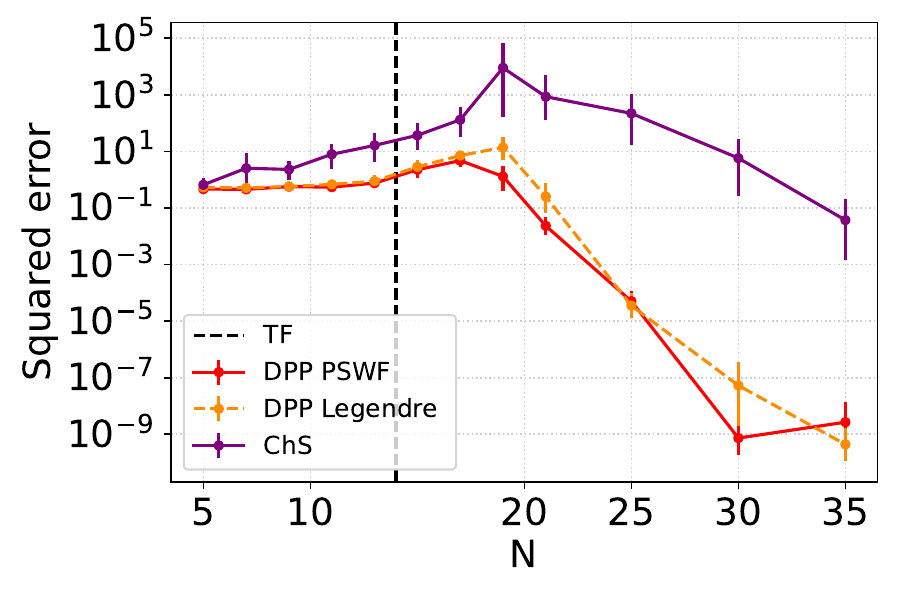}}\\
 \end{center}
\caption{The reconstruction error for $f = e_{m}^{\mathcal{F}}$ in the RKHS associated to the Sinc kernel. \label{fig:pswf}}
\end{figure}

The asymptotics of the eigenvalues $(\sigma_m)$ of $\bm{\Sigma}$ in the limit $c:=TF \rightarrow +\infty$ were investigated too \citep{LaWi80}: for $\epsilon>0$, in this asymptotic regime, $\bm{\Sigma}$ has approximately $TF$ eigenvalues in the interval $[1-\epsilon,1]$, $\mathcal{O}(\log(TF) \log(1/\epsilon)) $ eigenvalues in the interval $[\epsilon,1-\epsilon]$, and the remaining eigenvalues decrease to $0$ at an exponential rate.

This set of experiments studies the influence of the design $\bm{x}$ on the convergence of $\hat{f}_{\mathrm{OKA}, \bm{x}}$ to $f$ with respect to the norm $\|.\|_{\omega}$. 
The following random designs are compared: \emph{(i)} the projection DPP defined in \Cref{s:DPPs} associated to the first PSW functions (DPP-PSWF), \emph{(ii)} the projection DPP associated to the first normalized Legendre polynomials (DPP-Legendre), and \emph{(iii)} i.i.d. Christoffel sampling (ChS). \rev{Note that, although i.i.d Christoffel sampling is standard for ELS, it comes with no
guarantees for OKA-based reconstruction.}
Figure~\ref{fig:pswf} shows log-log plots of $\|f-\hat{f}_{\mathrm{OKA},\bm{x}}\|_{\omega}^2$ w.r.t. $N$, when $f = e_{m}^{\F}$ with $m \in \{1,2,3,4\}$, averaged over 50 independent samples of each of the three distributions. 
Both DPP-PSWF and DPP-Legendre significantly improve over Christoffel sampling. 
Moreover, the expected squared residual $\|f-\hat{f}_{\mathrm{OKA},\bm{x}}\|_{\omega}^2$ under the two DPPs converges to $0$ at an exponential rate when $N \geq TF$: taking $T = 2$ and $F = 7$ so that $TF = 14$
which corresponds to the asymptotics described in \citep{LaWi80}.

\section{Discussion and open questions}
\label{s:discussion}

This section gathers some high-level comments and discusses possible extensions of our results. 






First, we insist that generating different random designs requires access to different quantities.
Christoffel sampling in general will require rejection sampling. 
One should thus be able to evaluate the density $x \mapsto \sum_{m \in [N]} e_{m}(x)^2/N$, \rev{and to find a suitable proposal distribution, in order to control the number of rejections.} The study of the `shape' of the Christoffel function is thus crucial for sampling, and is an active topic of research \citep{PaBaVe18,AvKaMuVeZa19,DoCo22c}. 
A good sampler for the Christoffel function is also relevant to simulate the DPP of \eqref{def:density_detsampling_T} \rev{with $T=\{1,\dots, N\}$}, as the p.d.f. of Christoffel sampling can be used as a proposal distribution when sampling the sequential conditionals of the HKPV algorithm \citep{HoKrPeVi06}; see \citep{GaBaVa19}.  
When the p.d.f. of Christoffel sampling cannot be evaluated, one can still resort to continuous volume sampling. 
Indeed, while the only known \emph{exact} sampling algorithm for CVS still relies on possibly hard-to-evaluate projection kernels, there are approximate samplers that leverage the fact that evaluating the p.d.f. of CVS in \Cref{def:density_vs} only requires evaluating the RKHS kernel $k$.
In particular, \cite{ReGh19} have studied a natural Markov chain Monte Carlo sampler, whose mixing time scales as $\mathcal{O}(N^{5}\log(N))$.
Interestingly, each iteration of their Markov chain requires a rejection sampling step, the expected number of rejections is shown to be $\mathcal{O}(1/\sum_{m \geq N+1} \sigma_m)$. 
In other words, the smoother the kernel, the harder it is to run the MCMC algorithm. 
It would be interesting to investigate whether alternative MCMC algorithms can circumvent this `smoothness curse'.










Second, we comment on the instance optimal property (IOP), which motivated the introduction of Christoffel sampling \citep{CoMi17}. 
In particular, it was proven that for some variants of the empirical least-squares approximation, the IOP holds when the sampling budget \rev{$N$} is as low as $\mathcal{O}(M \log(M))$. These variants essentially exclude configurations of nodes $\bm{x}$ for which the Gram matrix $\bm{G}_{q,\bm{x}}$, defined in the end of~\Cref{sec:eigenspace_approximations}, is ill-conditioned. 
Similarly, determinantal sampling implicitly favors configurations of nodes $\bm{x}$ so that $\Det \bm{G}_{q,\bm{x}}$ is large. 
Moreover, \Cref{prop:bound_tELS_DPP} shows that the IOP holds under a suitable projection DPP, with a minimum sampling budget $N \geq M$. 
The price to pay is the constant $M+1$ in the IOP. 
Now, when $N=M$, this constant is $N+1$, which actually improves upon the constant $N^2+1$ proven in \citep{ChDo23} for an algorithm based on the so-called \emph{effective resistances}. 
The latter algorithm generates randomized configurations $\bm{x}$ in a greedy fashion so that the `redundancy' of sampling is reduced. 

Third, we might seek approximation schemes that are optimal in some worst-case sense, rather than looking for ones that satisfy the IOP. 
This is the approach adopted in~\citep{KrUl21,DoKrUl23}, for instance, where the authors investigate the \emph{sampling numbers}
\begin{equation}
g_N:= \inf\limits_{\substack{x_{1},\dots,x_{N} \in \mathcal{X}\\\varphi_1, \dots,\varphi_{N} \in \mathbb{L}_{2}(\omega)}}\sup\limits_{\substack{f \in \mathcal{F}\\\|f\|_{\mathcal{F}} \leq 1}}\|f - \sum\limits_{i \in [N]} f(x_i)\varphi_i\|_{\omega}.
\end{equation}
The sampling numbers measure the complexity of the simultaneous recovery of all functions in the unit ball of $\mathcal{F}$. A consequence of the work of \cite{DoKrUl23} is that there is a universal constant $c \in \mathbb{N}^{*}$ such that $g_{cN} = \mathcal{O}(\sigma_{N+1})$ under assumptions on the decay of $(\sigma_m)$; see also \citep{KrUl21}.
\rev{It would be interesting to investigate whether the expected worst-case approximation error under a determinantal distribution matches the optimal rate of convergence.} 
Such a worst-case optimality would further connect to a string of results on the \emph{completeness of DPPs} \citep{Lyo03,Gho15,BuQiSh21}, which look for conditions under which a sample $\bm{x}$ from a DPP is a \emph{uniqueness set} in the sense that two
elements of $\mathcal{F}$ that coincide on $\bm{x}$ are equal.

\section{Conclusion}
\label{s:conclusion}

This article studies various finite-dimensional approximations for functions living in an RKHS, based on a finite number of repulsive nodes defined by RKHS-adapted determinantal distributions.
Our results, \rev{summarized in \cref{tab:summary},} give convergence guarantees in $L^2$ norm for \emph{any} function living in the RKHS. They are stronger than those of previous works on determinantal sampling that give convergence guarantees in RKHS norm for functions living in a strict subspace of the RKHS. 
In particular, we have shown that the mean square error of the least-squares approximation converges to zero at a rate that depends on the eigenvalues of the Mercer decomposition of the RKHS kernel.
Moreover, we show that the convergence rate is faster for functions living in a certain low-dimensional subspace of the RKHS. 
This result provides insight on how the rate improves with smoothness. \rev{In addition}, we have investigated approximations that can be evaluated using only a finite number of evaluations of the target function, unlike the least-squares approximation. 
In particular, we prove that the instance optimality property holds for a particular transform-based approximation using DPPs, even with a minimal sampling budget. 
This result shows that determinantal sampling generalizes and improves on i.i.d. sampling from the Christoffel function. 
The numerical experiments conducted in various domains validate our theoretical results. The numerical performance actually slightly exceeds the expectations. This observation both illustrates the relevance of the proposed approach and suggests that there is room for future work to tentatively improve our theoretical results.

\section*{Acknowledgments}

AB acknowledges support from the AllegroAssai ANR project ANR-19-CHIA-0009. RB acknowledges support from the ERC grant BLACKJACK ERC-2019-STG-851866 and the Baccarat ANR project ANR-20-CHIA-0002. PC acknowledges support from the Sherlock ANR project ANR-20-CHIA-0031-01, the programme
d’investissements d’avenir ANR-16-IDEX-0004 ULNE and Région Hauts-de-France.





%

\newpage
\bibliographystyle{plainnat}
\bibliography{bibliography.bib}

\end{document}